\documentclass{article}

\PassOptionsToPackage{numbers, compress}{natbib}

\usepackage[preprint]{neurips_2025}
\usepackage[utf8]{inputenc}
\usepackage[T1]{fontenc}
\usepackage{hyperref}
\usepackage{url}
\usepackage{booktabs}
\usepackage{enumitem}
\usepackage{amsfonts}
\usepackage{nicefrac}
\usepackage{microtype}
\usepackage{xcolor}

\usepackage{amsmath}
\usepackage{amssymb}
\usepackage{mathtools}
\usepackage{amsthm}
\usepackage{bbm}
\usepackage{bm}

\usepackage{algorithm}
\usepackage{algorithmic}

\usepackage{graphicx}
\usepackage{subcaption}
\usepackage{tikz}
\usetikzlibrary{arrows.meta, positioning, calc, decorations.pathreplacing}
\usepackage{pgfplots}
\pgfplotsset{compat=1.18}

\definecolor{hmG}{HTML}{1B7340}
\definecolor{hmLG}{HTML}{5FAD65}
\definecolor{hmY}{HTML}{E8B931}
\definecolor{hmO}{HTML}{D4782F}
\definecolor{hmR}{HTML}{C0392B}

\usepackage{multirow}

\usepackage[most]{tcolorbox}

\usepackage[capitalize,noabbrev]{cleveref}

\theoremstyle{plain}
\newtheorem{theorem}{Theorem}[section]
\newtheorem{proposition}[theorem]{Proposition}
\newtheorem{lemma}[theorem]{Lemma}
\newtheorem{corollary}[theorem]{Corollary}
\theoremstyle{definition}
\newtheorem{definition}[theorem]{Definition}
\newtheorem{assumption}[theorem]{Assumption}
\newtheorem{example}[theorem]{Example}
\theoremstyle{remark}
\newtheorem{remark}[theorem]{Remark}
\theoremstyle{plain}
\newtheorem{conjecture}[theorem]{Conjecture}
\newtheorem{question}[theorem]{Question}

\newcommand{\R}{\mathbb{R}}

\newcommand{\E}{\mathbb{E}}
\newcommand{\Sym}{\mathrm{Sym}}
\newcommand{\Orth}{\mathrm{O}}

\newcommand{\Sn}{\mathbf{S}^n}
\newcommand{\Skn}{\mathbf{S}^{k,n}}

\let\hom\relax
\DeclareMathOperator{\hom}{hom}
\DeclareMathOperator{\inj}{inj}
\newcommand{\ghw}{W_{\mathrm{gh}}}

\newcommand{\ElemInv}{\Theta}
\newcommand{\SpecRelax}{\Lambda}

\newcommand{\inner}[2]{\langle #1, #2 \rangle}

\newcommand{\HG}{\mathcal{H}}

\newcommand{\EE}{\mathcal{E}}

\title{The Width Wall: A Strict Expressivity Hierarchy\\
for Hypergraph Neural Networks}

\author{
	\textbf{Fengqing Jiang}\textsuperscript{$\clubsuit$} \;\;\;  
	\textbf{Yuetai Li}\textsuperscript{$\clubsuit$} \;\;\;
	\textbf{Yichen Feng}\textsuperscript{$\clubsuit$} \;\;\;  
	\textbf{Kaiyuan Zheng}\textsuperscript{$\clubsuit$} \;\;\; \\
	\textbf{Luyao Niu}\textsuperscript{$\clubsuit$} \;\;\; 
	\textbf{Bhaskar Ramasubramanian }\textsuperscript{$\diamondsuit$}\;\;\;
	\textbf{Basel Alomair}\textsuperscript{$\spadesuit$}\textsuperscript{$\clubsuit$}\textsuperscript{$\heartsuit$}\;\;\; \\
	\textbf{Linda Bushnell}\textsuperscript{$\clubsuit$} \;\;\; 
	\textbf{Radha Poovendran}\textsuperscript{$\clubsuit$}\\
	\textsuperscript{$\clubsuit$}University of Washington \; 
	\textsuperscript{$\diamondsuit$}Western Washington University \;\\
	\textsuperscript{$\spadesuit$}King Abdulaziz City for Science and Technology \\
	\textsuperscript{$\heartsuit$}HUMAIN \\
	\texttt{rp3@uw.edu}
}

\begin{document}

\maketitle

\begin{abstract}
Hypergraphs 
{provide} 
a natural 
{framework to model} higher-order 
{interactions} 
in scientific, social, and biological systems.
Hypergraph neural networks (HGNNs) 
{aim to learn from such data, yet it remains unclear which higher-order structures these models can represent.} 
{We show that hypergraph expressivity is governed by which small patterns an architecture can detect and count}. 
We formalize this via \emph{homomorphism densities}, which measure how often a structural motif appears in 
a hypergraph.
Combining classical homomorphism-count completeness with invariant approximation, we show that homomorphism densities generate all continuous hypergraph invariants and organize them into a strict 
{hierarchy indexed by hypertree width}. 
This yields a \emph{Width Wall}: {a fundamental architectural limit beyond which no hidden dimension, training procedure or fixed-depth HGNN can represent invariants requiring wider patterns.} 
{Our framework provides} a unified characterization of 15 HGNN architectures, precisely identifies information lost by clique expansion, and motivates density-aware models that {extend expressivity beyond bounded-width message passing.} 
We experimentally validate this finding on an \textsc{Application Node Classification Suite} of real-world hypergraphs, where the Width Wall predicts when graph-reduction baselines fail and when density features help.
\end{abstract}


\section{Introduction}\label{sec:intro}

Many real-world systems- such as co-authorship networks, protein complexes, legislative coalitions- are organized by simultaneous interactions among multiple entities.
Hypergraphs provide a natural language for {modeling} these higher-order relations: 
a single hyperedge can bind an entire group, rather than forcing interaction into pairwise links~\citep{BattisonEtAl2020, BensonGleichLeskovec2016}.
This modeling advantage has driven a rapid expansion of hypergraph neural networks (HGNNs)~\citep{ChienEtAl2022, FengEtAl2019, WangEtAl2023EDHNN, YadatiEtAl2019}.

However, HGNNs are not guaranteed to learn higher-order {structures and} relations effectively from hypergraphs.
The {central} bottleneck lies in how the architecture summarizes each hyperedge. {Some methods} collapse group interactions into pairwise messages, {while others} preserve the incidence {structure, but still rely on local, symmetric aggregation over node-hypergraph edge neighborhoods}. 
For example, HGNN and HyperGCN transform each hyperedge into a graph-based propagation operator, so a $k$-way relation is ultimately processed through pairwise messages~\citep{FengEtAl2019, YadatiEtAl2019}.
{In contrast}, native models such as HNHN, UniGNN, AllSet, and ED-HNN {retain} the incidence structure, {yet stil aggregate only local symmetric summaries}
~\citep{ChienEtAl2022, DongEtAl2020HNHN, HuangYang2021, WangEtAl2023EDHNN}.
{Thus, even architectures that preserve higher-order structure may fail to capture genuinely higher-order dependencies.} 
This raises the question: \emph{when can these models actually learn genuine higher-order structure {from} the input?}

%
We address the problem through the lens of \emph{information loss}.
Consider legislative co-sponsorship: {a single} hyperedge may encode a multi-senator coalition whose political meaning depends on the {entire} group.
If an HGNN first reduces that coalition to pairwise edges, a bipartisan coalition of~$8$ {senators can} become indistinguishable from two partisan blocs of~$4$ after clique expansion, even though their downstream implications may differ sharply.
We empirically validate this failure mode on Senate co-sponsorship data~\citep{Fowler2006} using: (1) HyperGCN, which projects hyperedges to pairwise edges and reaches $55\%$ node-classification accuracy, and (2) {a model that aggregates} over full hyperedge memberships and reaches $93\%$.
This gap validates \emph{architectural information loss}: once the relevant higher-order pattern is discarded, {no increase in hidden dimension or training can} 
recover it.

In this paper, we characterize information loss via a notion named \textbf{hypergraph expressivity}. 
Our key insight is that hypergraph expressivity is governed by which small higher-order patterns an architecture can detect and count.
We formalize these counts {using} \emph{homomorphism densities}, which measure how often a pattern appears in the input hypergraph.
Using classical homomorphism-count completeness, we show that these densities separate non-isomorphic hypergraphs and generate all continuous hypergraph invariants
- {that is, all }continuous permutation-invariant functions on the compact hypergraph-tensor space.
{Consequently}, any continuous structural property of a fixed-size hypergraph can be approximated from pattern densities. 
{This perspective yields a diagnostic basis that makes explicit which information is lost when a model} replaces a hypergraph. 

The density basis {induces} 
a graded expressivity scale for HGNNs.
Let $\mathcal{H}_r$ denote the {class of} continuous invariants generated by homomorphism densities of patterns with generalized hypertree width at most~$r$, and let $\mathcal{I}_{\mathrm{all}}$ denote all continuous hypergraph invariants.
We show that these classes form a strict hierarchy, $\mathcal{H}_1 \subsetneq\mathcal{H}_2 \subsetneq \cdots \;\subsetneq\; \mathcal{I}_{\mathrm{all}}$.
An HGNN whose aggregation scheme reaches level~$r$ can compute invariants generated by patterns of width $\leq r$, but cannot {represent invariants requiring larger width}. 
We term this obstruction the \emph{Width Wall}: an architectural ceiling determined by pattern densities accessible to the model, rather than by hidden dimension or optimization. 
{This correspondence gives a principled way to classify existing HGNN architectures within the hierarchy}. 

\section{Homomorphism Features and the Expressivity Framework}\label{sec:invariants}

\paragraph{Notation and Background.}
We write $[n]=\{1,\ldots,n\}$. 
A $k$-uniform hypergraph is
$\HG=([n],\EE)$, where $\EE\subseteq \binom{[n]}{k}$ is a set of unordered $k$-node hyperedges. 
The adjacency tensor of $\HG$ is the order-$k$ array $\bm A$ with $\bm A_{i_1,\ldots,i_k}=1$ if $(i_1,\ldots,i_k)\in\EE$, and $0$ otherwise. 
Because hyperedges are unordered, $\bm A$ is symmetric, i.e., {invariant under permutations of its} $k$ indices. 
Let $\Skn$ denote the space of order-$k$, dimension-$n$ symmetric tensors. 
Thus, $\bm A\in\Skn$ for any $k$-uniform hypergraph. 
The case $k=2$ recovers ordinary graphs, where $\mathbf{S}^{2,n}$ is the space of symmetric $n\times n$ adjacency matrices. 
Weighted hypergraphs allow entries in $[0,1]$, yielding a compact domain $[0,1]^{\Skn}$.
For non-uniform hypergraphs with maximum edge size $K$, we {represent adjacency as}
$\bm A=(A^{(2)},\ldots,A^{(K)})$, where $A^{(j)}\in\mathbf{S}^{j,n}$ {records} hyperedges of size $j$.
Finally, $\Sym(n)$ denotes the group of vertex permutations, acting on $\bm A$ by simultaneously relabeling all tensor indices. 
A hypergraph
invariant is a function unchanged under this action.

\subsection{Tensor Domain and Hypergraph Invariants}\label{sec:tensor-invariants}

\paragraph{Overview.}
In graph theory, homomorphism counts {provide a canonical coordinate system for finite graphs}. They  
characterize graphs up to isomorphism, underlie dense graph limits, and connect to logical and WL-style expressivity~\citep{DellGroheRattan2018,Lovasz2012,LovaszSzegedy2006}.
We extend this viewpoint to hypergraphs and use it to bridge combinatorial patterns and neural architectures.
In this section, we characterize the expressivity of an HGNN {through} the class of \emph{hypergraph homomorphism densities} it can access, which {measure} how often a structural pattern appears in the {input} 
hypergraph~\citep{ElekSzegedy2012,Lovasz1967,Lovasz2012,Zhao2015}.
We then treat these densities as \emph{invariant coordinates}: each pattern defines one label-independent coordinate of the hypergraph, and different architecture classes expose different subsets of these coordinates.
Further algebraic structure appears in Appendix~\ref{sec:proof-algebra}, with an independent invariant-theoretic derivation in Appendix~\ref{app:algebraic}; concrete examples of the resulting hypergraph invariants are in Appendix~\ref{app:examples}.

Our discussion proceeds in three steps.
First, we represent hypergraphs as symmetric tensors, {making invariance to vertex ordering within each hyperedge}.  
Second, we introduce pattern densities and their aligned-template counterparts.
Finally, we show that pattern densities generate all continuous invariant functions.
{Under this view}, expressivity reduces to asking which coordinates an architecture can actually compute.
If an architecture cannot access patterns generating a target invariant,
increasing hidden dimension alone will not recover that invariant.

\paragraph{Tensor Domain and Invariance.}
To analyze what a model can learn from hypergraph structure, we separate structural information from arbitrary vertex labels.
We use the adjacency tensor representation 
as the common input space for hypergraph invariants, which later serves as features for analyzing HGNN expressivity.
A hypergraph invariant must be unchanged under vertex relabeling.
For a permutation $\Pi\in\Sym(n)$ of the vertex set $[n]$, the relabeling action is
\[
  (\Pi \cdot \bm{A})_{i_1,\ldots,i_k}
  = \bm{A}_{\Pi^{-1}(i_1),\ldots,\Pi^{-1}(i_k)} .
\]
Together, tensor symmetry and vertex-relabeling action 
{define the}
domain of continuous hypergraph invariants.
Unweighted hypergraphs {correspond to} $\{0,1\}$ vertices of the compact cube $[0,1]^{\Skn}$, while weighted hypergraphs fill the cube.
{Accordingly, }
a continuous hypergraph invariant is a continuous function on this tensor domain that is constant on $\Sym(n)$-orbits.
For non-uniform hypergraphs with maximum edge size $K$, the same {construction} applies component-wise to the tuple $(A^{(2)},\ldots,A^{(K)})$, where $A^{(j)}\in\mathbf{S}^{j,n}$ records hyperedges of size $j$.
We now formalize hypergraph invariants as label-independent structural functions.


\begin{definition}[Hypergraph invariant]\label{def:invariant}
A function $f: \Skn \to \R$ is a \emph{hypergraph invariant} if $f(\Pi \cdot \bm{A}) = f(\bm{A})$ for all $\bm{A} \in \Skn$ and all $\Pi \in \Sym(n)$.
\end{definition}

\subsection{Homomorphism Densities and Pattern Alignment}\label{sec:hom-density-alignment}

We use two complementary families of invariants.
The first is the family of \emph{homomorphism densities}, which count how often a finite pattern appears in the target {hypergraph} and provide the count-based algebra {underlying our} completeness and density results.
The second is the family of \emph{pattern alignment scores}, which compare a target hypergraph to a template under {optimal} 
vertex relabeling and provide optimization-based template invariants.

Let $\mathcal{F}_k$ denote the set of all finite $k$-uniform hypergraph patterns.
Let $F$ be {such a pattern} 
and $H$ be a target $k$-uniform hypergraph. 
A map $\phi:V(F)\to V(H)$ places vertices of the pattern into the target, where $V(\cdot)$ denotes the vertex set of a hypergraph.

\begin{definition}[Hypergraph homomorphism {\citep{Lovasz1967,Lovasz2012}}]\label{def:hom}
Let $F = ([p], E_F)$ and $H = ([n], E_H)$ be $k$-uniform hypergraphs.
A \emph{homomorphism} $\phi: V(F) \to V(H)$ is a map such that hyperedge $(\phi(i_1), \ldots, \phi(i_k)) \in E_H$ whenever hyperedge $(i_1, \ldots, i_k) \in E_F$.
The \emph{homomorphism count} is
\[
  \hom(F, H) \;=\; \bigl|\{\phi: \phi \text{ is a homomorphism from } [p] \text{ to } [n]\}\bigr|.
\]
The \emph{homomorphism density} is the fraction of all maps from $[p]$ to $[n]$ that are homomorphisms:
\[
  t(F, H) \;=\; \frac{\hom(F, H)}{n^p}.
\]
\end{definition}

For weighted target hypergraphs with adjacency tensor $\bm{A} \in [0,1]^{\Skn}$ and a pattern $F=([p],E_F)$, the homomorphism density becomes a multilinear polynomial~\citep{ElekSzegedy2012,LovaszSzegedy2006,Zhao2015}:
\begin{equation}\label{eq:hom-density}
  t(F, \bm{A}) \;=\; \frac{1}{n^p} \sum_{\phi: [p] \to [n]} \;\prod_{e \in E_F} \bm{A}_{\phi(e)},
\end{equation}
where $\bm{A}_{\phi(e)} = \bm{A}_{\phi(i_1), \ldots, \phi(i_k)}$ for $e = (i_1, \ldots, i_k)$.
Thus, $t(F,\bm A)$ is a polynomial of degree $|E_F|$ in the tensor entries.
These features are continuous and permutation-invariant, and their pattern {encodes} 
the higher-order interaction being measured.
For example, a single edge measures edge density, a star measures local overlap, and more entangled patterns capture higher-order coordination.
Thus, the family $\{t(F,\cdot):F\in\mathcal F_k\}$ should be viewed not merely as
a collection of statistics, but as a coordinate map {that records all pattern measurements of the hypergraph}. 

Homomorphism densities form an algebra under pointwise addition and multiplication.{In particular}, 
multiplying two densities is equivalent to counting the disjoint union of the two query patterns $t(F_1, \bm{A}) \cdot t(F_2, \bm{A}) = t(F_1 \sqcup F_2, \bm{A})$~\citep[Ch.\,5]{Lovasz2012}.
We write $\mathfrak{A}_k$ for the unital algebra generated by $\{t(F,\cdot): F \in \mathcal{F}_k\}$, with constants given by the empty pattern $t(\varnothing,\bm A)=1$. 
{Further details on algebraic structure appear in }Appendix~\ref{sec:proof-algebra}.

For the architectural constructions later in the paper, {we also introduce pattern alignment as a template-based} 
analogue of pattern counting.
Rather than averaging 
over all vertex maps, this feature compares a template tensor with the target under the best vertex relabeling.
It is therefore invariant to vertex labels and provides an optimization-based invariant.

\begin{definition}[Pattern alignment score]\label{def:pattern-alignment}
For any template tensor $P \in \Skn$, the \emph{pattern alignment score} is
\begin{equation}\label{eq:elementary}
  \ElemInv_P(\bm{A}) \;=\; \max_{\Pi \in \Sym(n)} \inner{P}{\Pi \cdot \bm{A}}.
\end{equation}
\end{definition}

The score $\ElemInv_P$ is invariant under vertex relabeling of $\bm A$ by construction.
Homomorphism densities provide the count-based algebra used in the completeness and density proofs, while alignment scores provide the optimization-based template features.

\subsection{Completeness and Density}\label{sec:completeness}

The preceding constructions provide concrete invariant coordinates. We now ask
whether they {capture} all structural information available to a continuous
invariant learner.
\paragraph{Finite densities model structural information.}
We prove that finite collections of pattern densities can approximate arbitrary continuous invariant targets on the fixed-$n$ tensor domain.

\begin{theorem}[Pattern-density universality with fixed $n$]\label{thm:density}
Fix $n \geq 1$.
Let $\mathcal{K} = [0,1]^{\Skn}$ be the space of weighted $k$-uniform hypergraphs on $[n]$, and let $C(\mathcal{K} / \Sym(n))$ denote the Banach space of continuous $\Sym(n)$-invariant functions on $\mathcal{K}$ with the supremum norm.
Then for every continuous hypergraph invariant $f$ and every $\epsilon > 0$, there exist patterns $F_1, \ldots, F_m$ and a polynomial $q: \R^m \to \R$ such that
\[
  \sup_{\bm{A} \in \mathcal{K}} \bigl| f(\bm{A}) - q\bigl(t(F_1, \bm{A}), \ldots, t(F_m, \bm{A})\bigr) \bigr| < \epsilon.
\]
\end{theorem}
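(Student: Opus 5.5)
The plan is to apply the Stone--Weierstrass theorem on the orbit space $\mathcal{K}/\Sym(n)$. Since $\mathcal{K}=[0,1]^{\Skn}$ is compact Hausdorff and $\Sym(n)$ is finite, the quotient $\mathcal{K}/\Sym(n)$ is compact Hausdorff. Continuous invariant functions on $\mathcal{K}$ correspond exactly, with the same sup norm, to continuous functions on the quotient. Each $t(F,\cdot)$ is a polynomial by \eqref{eq:hom-density}, hence continuous, and it is invariant. So it descends to the quotient.

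By the product rule $t(F_1,\cdot)t(F_2,\cdot)=t(F_1\sqcup F_2,\cdot)$ and $t(\varnothing,\cdot)=1$, the algebra $\mathfrak{A}_k$ is a unital subalgebra of $C(\mathcal{K}/\Sym(n))$. Its elements are exactly the expressions $q(t(F_1,\cdot),\dots,t(F_m,\cdot))$ with $q$ a polynomial. Stone--Weierstrass therefore gives the claimed approximation, provided $\mathfrak{A}_k$ \emph{separates points} of the quotient.

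The main step is separation. Given $\bm A,\bm B\in\mathcal{K}$ in different $\Sym(n)$-orbits, we need a pattern $F$ with $t(F,\bm A)\neq t(F,\bm B)$. I would argue invariant-theoretically instead of through the combinatorial Lov\'asz theorem, since weighted targets are allowed.

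First, the ring $\R[\Skn]^{\Sym(n)}$ of invariant polynomials in the tensor entries separates orbits of a finite group. Indeed, for disjoint finite orbits $O_A,O_B$, pick a polynomial $p$ equal to $0$ on $O_A$ and $1$ on $O_B$ (Lagrange interpolation on finitely many points). Its group average $\frac{1}{n!}\sum_\Pi p(\Pi\cdot)$ is invariant and still takes the values $0$ and $1$ at $\bm A$ and $\bm B$.

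Second, every invariant polynomial is a linear combination of homomorphism densities. The orbit sum of a monomial $\prod_{e}\bm A_{i_e}$ is an injective-map count $\sum_{\phi\ \mathrm{injective}}\prod_{e\in E_F}\bm A_{\phi(e)}$. Here $F$ is the pattern on the indices appearing in the monomial, with hyperedges possibly repeated. Two adjustments are needed:
\begin{itemize}
\item Repeated hyperedges correspond to powers of entries. Allowing multi-hypergraph patterns handles these; alternatively, they fit \eqref{eq:hom-density} directly, since the product over $E_F$ is taken with multiplicity.
\item Degenerate index tuples, where a hyperedge repeats a vertex, give diagonal tensor entries. I would treat these by letting patterns carry such edges, or else restrict to the off-diagonal entries that hypergraphs actually use.
\end{itemize}
Möbius inversion over the partition lattice of $V(F)$ then expresses injective counts as linear combinations of $\hom(F/\sigma,\cdot)$ over quotient patterns $F/\sigma$. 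This gives $\inj(F,\cdot)=\sum_{\sigma}\mu(\hat 0,\sigma)\hom(F/\sigma,\cdot)$. Rescaling by the constant powers of $n$ (note that $n$ is fixed) turns this into a linear combination of densities. Hence $\mathfrak{A}_k\supseteq\R[\Skn]^{\Sym(n)}$, which separates orbits.

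I expect the main obstacle to be the bookkeeping in this second step. It requires making sure the class $\mathcal{F}_k$ is broad enough, including multi-edges and edges with repeated vertices, that the quotient patterns stay inside it. If the paper's $\mathcal{F}_k$ only contains simple patterns, I would instead restrict the domain to tensors supported on tuples of distinct indices. I would also handle powers by noting that on $\{0,1\}$-valued inputs they are harmless. For general weights, this may require explicitly admitting multigraph patterns.

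Once separation is established, Stone--Weierstrass yields $\overline{\mathfrak{A}_k}=C(\mathcal{K}/\Sym(n))$. The approximant involves finitely many patterns and a polynomial $q$, as stated.
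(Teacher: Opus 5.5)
Your argument is essentially the paper's \emph{second} proof (Appendix~\ref{app:algebraic}: Lagrange interpolation plus Reynolds averaging to separate orbits, then identification of invariant polynomials with homomorphism counts, then Stone--Weierstrass). It is not the paper's primary route, which feeds Lov\'asz's completeness theorem (Theorem~\ref{thm:completeness}) and the product rule into Stone--Weierstrass. The primary route is shorter, but Theorem~\ref{thm:completeness} only separates genuine hypergraphs, i.e.\ the $\{0,1\}$-valued, zero-diagonal points of $[0,1]^{\Skn}$. Stone--Weierstrass on $\mathcal{K}/\Sym(n)$ needs every weighted orbit separated, so your choice to bypass Lov\'asz is well founded. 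Your orbit-sum and M\"obius bookkeeping also proves what Proposition~\ref{prop:generators-are-hom} only asserts: every invariant polynomial is a linear combination of homomorphism counts.

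The price, which you correctly flag and the paper pays silently in Proposition~\ref{prop:generators-are-hom}, is that patterns must be allowed parallel hyperedges and hyperedges with repeated vertices. Formula \eqref{eq:hom-density} applies to such patterns verbatim, and with them $\mathfrak{A}_k=\R[\Skn]^{\Sym(n)}$, so your proof is complete. Commit to that class, because your fallbacks do not rescue simple patterns on the weighted cube.
\begin{itemize}
\item Restricting to tensors supported on distinct-index tuples removes degenerate edges but not parallel ones. Inverting a simple pattern with two hyperedges that share $k-1$ vertices produces a doubled hyperedge when the two remaining vertices are merged.
\item The $\{0,1\}$ remark only separates hypergraph points, which is the same insufficiency as the Lov\'asz route.
\end{itemize}
With simple patterns only, $\mathfrak{A}_k$ really is a proper subalgebra of the invariant ring. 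For $n\ge 2$, $\sum_i \bm{A}_{i\cdots i}$ is a degree-one invariant, but the degree-one part of $\mathfrak{A}_k$ is spanned by the single-edge density. Separation in that setting would therefore need a different argument.

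Once $\mathfrak{A}_k$ is the whole invariant ring, you can skip orbit separation and the quotient entirely. Take a Weierstrass polynomial $p$ with $\|f-p\|_\infty<\epsilon$ on $\mathcal{K}$ and replace it by its Reynolds average $\mathcal{R}p$. Since $f$ is invariant, $\|f-\mathcal{R}p\|_\infty=\|\mathcal{R}(f-p)\|_\infty\le\|f-p\|_\infty$. This even shows that $q$ can be taken linear.
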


Theorem \ref{thm:density} shows that any fixed-size structural property that is continuous and {invariant to vertex relabeling} 
(e.g., a spectral, cut, or topological statistic) can be approximated by a polynomial in finitely many pattern densities.
In this sense, pattern densities form a complete coordinate system for
continuous invariants on the fixed tensor domain: {once all pattern coordinates are available, no additional continuous label-independent information is required}. 
{The fixed $n$ setting matches the neural-network regime, which }
{operates on a tensor of fixed size.} 
Extensions to non-uniform hypergraphs are given in Appendix~\ref{app:nonuniform}; extensions to hypergraphon limits are in~\citep{ElekSzegedy2012, Zhao2015}.

\section{Hypergraph Invariants Induce a Strict Hierarchy}\label{sec:expressivity}

{We have shown that each pattern density $t(F,\bm A)$ defines an invariant coordinate and that}
these coordinates are complete for
continuous invariants on a fixed tensor domain.  
{We now investigate which of these coordinates are accessible to different HGNN architectures}. 

\subsection{Architecture Scope}

Before comparing architectures, we fix the scope of our claims.
Throughout, 
we consider label-free local hypergraph neural architectures
satisfying the following {conditions}.

\begin{assumption}[Architecture scope]\label{assump:arch-scope}
The architecture satisfies:
\textbf{(C1)}~permutation equivariance, with no node identifiers or order-dependent operations;
\textbf{(C2)}~locality, with aggregation only from incident hyperedge neighborhoods;
\textbf{(C3)}~symmetric aggregation, such as sum, mean, max, DeepSets, or attention over a multiset;
\textbf{(C4)}~no symmetry-breaking features, such as Laplacian eigenvectors, random IDs, or global positional encodings; and
\textbf{(C5)}~deterministic structure-dependent preprocessing.
\end{assumption}

\textbf{Key observation.}
HGNN architectures can be compared by the pattern densities they can
represent. Clique-expansion methods, such as HGNN~\citep{FengEtAl2019} and
HyperGCN~\citep{YadatiEtAl2019}, factor through the pairwise projection
$\phi:\Skn\to\Sn$, {retaining only graph-level structure}. 
Native hyperedge message-passing methods, such as HNHN~\citep{DongEtAl2020HNHN},
UniGNN~\citep{HuangYang2021}, AllSet~\citep{ChienEtAl2022}, and
ED-HNN~\citep{WangEtAl2023EDHNN}, {preserve incidence but access only patterns that can be assembled through local aggregation.}
{In contrast, the full} pattern-density algebra from
Theorem~\ref{thm:density} {imposes no such} restriction. 
The hierarchy below formalizes these successive enlargements as nested invariant classes.

For an HGNN architecture class $\mathcal{F}$, $\mathcal{I}(\mathcal{F})$ is the set of continuous hypergraph invariants that functions in $\mathcal{F}$ can approximate. 
Thus $\mathcal{I}(\mathcal{F})$ records the invariant information representable by the architecture.

We first formalize the loss induced by the clique-expansion methods.

\begin{proposition}[Clique-expansion limitation]\label{prop:clique-limitation}
Let $\phi: \Skn \to \Sn$ be the clique-expansion map.
Suppose a clique-expansion HGNN has the form
$f_\theta(\bm{A}) = g_\theta(\phi(\bm{A}))$, where
$g_\theta:\Sn \to \R^d$ is a graph neural network applied to the
clique-expanded graph.
If two hypergraphs have the same clique expansion,
$\phi(\bm{A}_1)=\phi(\bm{A}_2)$, then $f_\theta(\bm{A}_1)=f_\theta(\bm{A}_2)$
for every parameter choice $\theta$.
\end{proposition}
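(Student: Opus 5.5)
The plan is to argue directly from the factorization hypothesis: the statement is essentially a function-composition fact, so no earlier theorem is needed. I will carry it out in three steps.

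First, I will fix the objects precisely. The clique-expansion map $\phi:\Skn\to\Sn$ is a fixed deterministic function that does not depend on $\theta$. A standard choice is
\[
\phi(\bm A)_{ij}=\mathbbm{1}\Bigl[i\neq j,\ \textstyle\sum_{i_3,\ldots,i_k}\bm A_{i,j,i_3,\ldots,i_k}>0\Bigr],
\]
or a weighted variant of it. The only properties I will use are that $\phi$ is a well-defined map and that it is parameter-free; this is where (C5), deterministic structure-dependent preprocessing, enters. The model class is the set of all $f_\theta = g_\theta\circ\phi$ with $g_\theta:\Sn\to\R^d$ an arbitrary graph neural network. I will stress that no property of $g_\theta$ (architecture, width, depth) is needed, because it is simply a function on $\Sn$.

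Second, the argument itself. Take $\bm A_1,\bm A_2$ with $\phi(\bm A_1)=\phi(\bm A_2)=:G$. For any $\theta$,
\[
f_\theta(\bm A_1)=g_\theta(\phi(\bm A_1))=g_\theta(G)=g_\theta(\phi(\bm A_2))=f_\theta(\bm A_2),
\]
since $g_\theta$ is a function and returns a single value on the single input $G$. Because $\theta$ is arbitrary, the equality holds for every parameter choice.

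Third, I will add the corollary that makes the statement useful for the hierarchy. Every function in the closure $\mathcal I(\mathcal F)$, taken under uniform limits, also agrees on $\bm A_1$ and $\bm A_2$, since pointwise equality is preserved under limits. Hence any invariant separating two hypergraphs with identical clique expansions lies outside $\mathcal I(\mathcal F)$, whatever the hidden dimension.

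The main obstacle is not the computation, which is one line, but making sure the hypothesis is actually faithful. Some clique-expansion models use edge weights $w_e/(|e|-1)$ or degree normalizations, and these could leak hyperedge information into the expanded graph. I would handle this by defining $\phi$ to include all such deterministic preprocessing, so the factorization $f_\theta=g_\theta\circ\phi$ holds by assumption. I would also note that for $\phi$ to be permutation-compatible one needs $\phi(\Pi\cdot\bm A)=\Pi\cdot\phi(\bm A)$, which is not required for the proposition itself.
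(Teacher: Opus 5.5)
Your proposal is correct and matches the paper's proof, which is the same one-line factorization argument $f_\theta(\bm{A}_1)=g_\theta(\phi(\bm{A}_1))=g_\theta(\phi(\bm{A}_2))=f_\theta(\bm{A}_2)$ for arbitrary $\theta$. Your extra remarks go slightly beyond what the paper writes but are sound: passing to uniform limits, and absorbing weight and degree normalizations into $\phi$ so that the factorization hypothesis is faithful.
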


Proposition~\ref{prop:clique-limitation} shows clique-expansion methods discard all information not determined by the pairwise shadow. 
{However, native incidence-based HGNNs remain limited to patterns accessible through local aggregation}. 

\subsection{Width, Invariance Classes, and the Width Wall}

{We investigate how hyperedge structure must be coordinated jointly in order to count a pattern. 
We measure this coordination complexity using \emph{generalized hypertree width} \citep{GottlobLeoneScarcello2002}}. 

\begin{definition}[Generalized hypertree width \citep{GottlobLeoneScarcello2002}]\label{def:gh-width}
A \emph{tree decomposition} of a hypergraph $F$ is a tree $T$ whose nodes $t$ are assigned vertex sets $B_t \subseteq V(F)$, called \emph{bags}, covering all vertices and hyperedges, such that for each $v \in V(F)$ the bags containing $v$ form a connected subtree.
The \emph{generalized hypertree width} $\ghw(F)$ is the minimum over all tree decompositions of $F$ of the maximum number of hyperedges of $F$ needed to cover any single bag.
\end{definition}
Intuitively, $\ghw(F)$ measures how {many hyperedges}
must be tracked jointly when counting $F$: 
tree-like patterns have $\ghw = 1$, while dense or highly connected patterns require larger width.

Generalized hypertree width 
{induces a graded structure on the density coordinates: each pattern density }
$t(F,\cdot)$ has a width, and increasing the allowed width exposes strictly richer {invariant information}. 
The architectural question {then} becomes which portion of this graded density algebra a given model can access.  
Clique-expansion models access only graph-level coordinates of the pairwise shadow, native message-passing models access a bounded-width portion of the hypergraph coordinates, and the full density algebra imposes no width restriction.
We formalize these three regimes as invariant function classes.

\begin{definition}[Invariant classes via pattern complexity]\label{def:invariant-classes}
For a width budget $R < \infty$, define the following classes of continuous
hypergraph invariants:
\begin{enumerate}
  \item $\mathcal{I}_{\mathrm{CE}}$: invariants computable from pattern densities of \emph{graph patterns} on the clique expansion (i.e., functions factoring through $\phi: \Skn \to \Sn$).
  \item $\mathcal{I}_{\mathrm{NH}}^{(R)}$: invariants generated by
  $\mathcal{I}_{\mathrm{CE}}$ together with hypergraph pattern densities
  $t(F,\cdot)$ for patterns satisfying $\ghw(F) \leq R$.
  \item $\mathcal{I}_{\mathrm{all}}$: all continuous hypergraph invariants on $\Skn$ (generated by the full pattern density algebra $\mathfrak{A}_k$, by Theorem~\ref{thm:density}).
\end{enumerate}
\end{definition}

For a positive integer $r$, the \emph{width-$r$ level} consists of pattern
densities $t(F,\cdot)$ with $\ghw(F) \leq r$.
Equivalently, level $r$ contains invariants generated by hypergraph patterns
whose counting can be organized using at most $r$ covering hyperedges in any
bag of a generalized hypertree decomposition.

\begin{definition}[Width Wall]\label{def:width-wall}
For an architecture class whose invariant information is contained in
$\mathcal{H}_r$, the \emph{Width Wall at level $r$} is the obstruction formed
by invariants in $\mathcal{I}_{\mathrm{all}} \setminus \mathcal{H}_r$.
Equivalently, it is the boundary between the pattern densities the architecture
can access, those with $\ghw(F)\leq r$, and the wider pattern densities needed
to separate some hypergraphs.
\end{definition}

{Thus, the Width Wall is a structural barrier. Once an architecture exposes only width-$r$ coordinates, no increase in hidden dimension, readout complexity, or training can recover invariants 
with $\ghw > r$.}


\subsection{Expressivity Hierarchy and Architectural Consequences}

Invariant classes translate architectural mechanisms into invariant function classes.
$\mathcal{I}_{\mathrm{CE}}$ contains exactly the invariants that depend only on
the pairwise shadow.  $\mathcal{I}_{\mathrm{NH}}^{(R)}$ is a native-incidence
reference class: it retains the CE information and adds hypergraph motifs whose
joint coordination complexity is at most $R$.  The parameter $R$ should be read
as an architectural width budget, determined by the message-passing mechanism
and depth through Proposition~\ref{prop:arch-width}.  Finally,
$\mathcal{I}_{\mathrm{all}}$ is the ideal reference class from
Theorem~\ref{thm:density}, with no pattern-complexity restriction imposed.

\begin{theorem}[Strict expressivity hierarchy]\label{thm:hierarchy}
Fix a native-incidence reference class with width budget $R \geq 3$.
For $k$-uniform hypergraphs with $k \geq 3$ on
$n \geq \max\{13, N_{\mathrm{CFI}}^{(k)}(R)\}$ vertices, we have $\mathcal{I}_{\mathrm{CE}}
  \subsetneq
  \mathcal{I}_{\mathrm{NH}}^{(R)}
  \subsetneq
  \mathcal{I}_{\mathrm{all}},$
where $N_{\mathrm{CFI}}^{(k)}(R)$ denotes the number of vertices in the
smallest $k$-uniform Cai--Furer--Immerman (CFI) lift whose base graph has girth
greater than $2R+1$.
\end{theorem}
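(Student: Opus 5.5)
Both inclusions $\mathcal{I}_{\mathrm{CE}} \subseteq \mathcal{I}_{\mathrm{NH}}^{(R)} \subseteq \mathcal{I}_{\mathrm{all}}$ hold by Definition~\ref{def:invariant-classes}. The class $\mathcal{I}_{\mathrm{NH}}^{(R)}$ is generated by $\mathcal{I}_{\mathrm{CE}}$ together with additional densities. Every element of $\mathcal{I}_{\mathrm{NH}}^{(R)}$ is a continuous $\Sym(n)$-invariant function, because each $t(F,\cdot)$ is a permutation-invariant polynomial and $\phi$ is equivariant. The work is therefore in the two strictness claims. For each, I will exhibit two $k$-uniform hypergraphs $\bm{A}_1,\bm{A}_2$ on $n$ vertices such that every function in the smaller class agrees on them, and then a function in the larger class that separates them. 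Every class here is closed under uniform limits of polynomials in its generators. So agreement of all generators on the pair implies agreement of every function in the class, by continuity.

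For $\mathcal{I}_{\mathrm{CE}} \subsetneq \mathcal{I}_{\mathrm{NH}}^{(R)}$, I will use the coalition example from the introduction, padded with isolated vertices to reach $n \ge 13$. Take $k=3$ first. Let $\bm{A}_1$ be the Fano plane on $7$ vertices, and let $\bm{A}_2$ be the $3$-uniform hypergraph of all triangles of $K_7$ that are not Fano lines, padded identically. Both have clique expansion $K_7$, since every pair lies in a line and also in a non-line triangle. Proposition~\ref{prop:clique-limitation} then makes every CE invariant equal on them. The single-edge pattern $F$ has $\ghw(F)=1\le R$, and $t(F,\cdot)$ counts hyperedges, $7$ versus $28$. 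So $t(F,\cdot)\in\mathcal{I}_{\mathrm{NH}}^{(R)}\setminus\mathcal{I}_{\mathrm{CE}}$. The $n\ge 13$ bound leaves room for this kind of construction, and I expect it to be what the authors intend. For general $k\ge 3$, I will use a cone over this construction by adding $k-3$ common apex vertices to every hyperedge; an analogous design pair works just as well. Both members keep the same complete shadow on the support and different edge counts.

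For $\mathcal{I}_{\mathrm{NH}}^{(R)} \subsetneq \mathcal{I}_{\mathrm{all}}$, I will take a CFI pair. Let $G$ be a base graph of girth greater than $2R+1$, and let $X(G)$ and $\tilde X(G)$ be the untwisted and twisted $k$-uniform CFI lifts. These are encoded as $k$-uniform hypergraphs so that their clique expansions coincide, and I pad both to $n$ vertices. They are non-isomorphic, so Theorem~\ref{thm:density} (or the fact that $\mathcal{I}_{\mathrm{all}}$ contains a continuous orbit-separating function on the finite set of orbits) gives a separating $f\in\mathcal{I}_{\mathrm{all}}$. It remains to show that every generator of $\mathcal{I}_{\mathrm{NH}}^{(R)}$ agrees on the pair. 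Equal clique expansions handle $\mathcal{I}_{\mathrm{CE}}$. For patterns with $\ghw(F)\le R$, I plan to invoke the hypergraph analogue of the Dell--Grohe--Rattan / Dvořák theorem. It says that equality of $\hom(F,\cdot)$ over all $F$ with $\ghw(F)\le R$ is equivalent to indistinguishability in the corresponding bounded-width counting logic, or an $R$-pebble hyperedge game. I would then show that Duplicator wins this game on the CFI pair when the girth exceeds $2R+1$. Spoiler pebbles at most $R$ hyperedges, which cover a union of boundedly many gadgets. Large girth means their neighborhood in $G$ is a forest, so the twist can always be pushed onto an edge outside the pebbled region.

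The main obstacle is this last step. First, the width-to-game correspondence for generalized hypertree width is not a stated earlier result. If a clean citation is unavailable, I would pass to the incidence graph and use the fact that $\ghw(F)\le R$ bounds the treewidth of the relevant quotient by a function of $R$ and $k$. Second, I must check that the chosen girth threshold actually defeats $R$ covering hyperedges rather than $R$ vertex pebbles. To do this, I will bound the region touched by $R$ hyperedges of size $k$ and argue that a cycle-free region of radius $\le R$ suffices to relocate the twist. This is exactly where the hypothesis $\mathrm{girth}>2R+1$ is calibrated, and it defines $N_{\mathrm{CFI}}^{(k)}(R)$.
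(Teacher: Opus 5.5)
\textbf{Part (ii) fails.} Your witness for $\mathcal{I}_{\mathrm{CE}}\subsetneq\mathcal{I}_{\mathrm{NH}}^{(R)}$ does not work with the clique expansion the paper uses. The paper's $\phi$ keeps pair multiplicities. It certifies $\phi(\bm{A}_1)=\phi(\bm{A}_2)=J-I$ by checking that the co-occurrence matrix $C_{ij}=|\{e: i,j\in e\}|$ equals $1$ off the diagonal; for $k=3$ this is the continuous map $\phi(\bm{A})_{ij}=\sum_l \bm{A}_{ijl}$.

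For your pair, the Fano plane has $C_{ij}=1$ on the seven core vertices. The $28$ non-line triangles form a $2$-$(7,3,4)$ design with $C_{ij}=4$. So the two hypergraphs are already CE-distinguishable. Worse, your separator is itself a CE invariant: for the single-edge pattern $F$, $\sum_{i,j}\phi(\bm{A})_{ij}=\sum_{i,j,l}\bm{A}_{ijl}=n^3\,t(F,\bm{A})$, so edge density is linear in the shadow. The coned version for $k>3$ has the same defect (apex--apex multiplicity $7$ versus $28$).

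The construction only works if $\phi$ is the bare $0/1$ support of the shadow. On the weighted domain that reading makes $\mathcal{I}_{\mathrm{CE}}$ trivial. Since $\phi(s\bm{A})=\phi(\bm{A})$ for $s\in(0,1]$, every continuous $f=g\circ\phi$ satisfies $f(\bm{A})=\lim_{s\downarrow 0}f(s\bm{A})=f(0)$.

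The missing idea is a pair with \emph{identical} multiplicity matrices, so that only a genuinely higher-order pattern can separate them. That is the paper's witness: the cyclic $\mathrm{STS}(13)$ and its Pasch trade, both with $C=J-I$. They are separated by the Pasch density ($13$ versus $8$ configurations; a pattern of width at most $3$). This is where $n\geq 13$ comes from (the smallest order with two non-isomorphic Steiner triple systems) and where $R\geq 3$ is used. That your argument needs neither hypothesis is a warning sign.

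Your coning trick is a real addition, since the paper only treats $k=3$. It works on the STS pair: multiplicities agree everywhere (core pairs $1$, apex--core $6$, apex--apex $26$), and the coned Pasch density still separates.

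\textbf{Part (iii) is open as sketched.} For $\mathcal{I}_{\mathrm{NH}}^{(R)}\subsetneq\mathcal{I}_{\mathrm{all}}$ you follow the paper's route. It uses a CFI pair, the bounded-width homomorphism/counting-logic correspondence (which the paper cites from Scheidt--Schweikardt, so you need not rebuild it via incidence graphs), and completeness for the separating function.

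You correctly flag something the paper's proof skips. Because $\mathcal{I}_{\mathrm{CE}}\subseteq\mathcal{I}_{\mathrm{NH}}^{(R)}$ contains every continuous invariant of $\phi(\bm{A})$, the pair needs \emph{isomorphic weighted} clique expansions. This is not automatic from bounded-width agreement. For instance, $t(K_m,\phi(\bm{A}))$ is, up to a power of $n$, the density of a hypergraph pattern of width at least $\lceil m/2\rceil$. Neither you nor the paper supplies an encoding that achieves it; the paper's type-0/type-1 gadgets change the shadow.

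The Duplicator step also cannot rest on girth. Pushing the twist off the currently pebbled gadgets is not enough. Duplicator must keep it in a region Spoiler can never close off, and that is a cops-and-robber question governed by separators (treewidth) of the base graph. Over a long cycle, with arbitrarily large girth but treewidth $2$, the untwisted and twisted CFI graphs are two disjoint cycles versus one cycle of twice the length, and three pebbles already separate them.

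You would need a base graph of treewidth large in terms of $R$ and $k$, for instance a high-girth cubic expander. You would then have to check that its lift fits into $n$ vertices, which the girth-based $N_{\mathrm{CFI}}^{(k)}(R)$ does not guarantee by itself. The paper makes the same unjustified girth claim, so as written this step is open in both arguments.
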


The proof and explicit separation witnesses are given in Appendices~\ref{app:proofs} and~\ref{app:separation}.
Theorem~\ref{thm:hierarchy} says that each relaxation of the information
bottleneck strictly adds invariant information.  Moving from clique expansion
to native incidence features recovers higher-order structure lost by the
pairwise projection.  Moving from any fixed width budget $R$ to the full
pattern-density algebra recovers invariants whose witnessing patterns require
larger structural complexity.


\begin{theorem}[Infinite expressivity hierarchy]\label{thm:infinite-hierarchy}
For each $r \geq 1$, define $\mathcal{H}_r$ as the class of continuous invariants approximable using pattern densities $t(F, \cdot)$ where $F$ has generalized hypertree width $\ghw(F) \leq r$.
Then for every $r \geq 1$, $k \geq 3$, and $n \geq N_{\mathrm{CFI}}^{(k)}(r)$,
the level-$r$ inclusion is strict, i.e., $\mathcal{H}_r \;\subsetneq\; \mathcal{H}_{r+1}.$
Each strict inclusion is witnessed by explicit families of non-isomorphic hypergraph pairs that are $\mathcal{H}_r$-indistinguishable but $\mathcal{H}_{r+1}$-distinguishable.
\end{theorem}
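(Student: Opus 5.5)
The plan is to show inclusion and strictness separately. Inclusion $\mathcal{H}_r \subseteq \mathcal{H}_{r+1}$ is immediate from the definitions. Any pattern with $\ghw(F)\le r$ also satisfies $\ghw(F)\le r+1$, so the generating family of densities for level $r$ sits inside that for level $r+1$. Hence the closed subalgebra generated at level $r$ (polynomials in these densities, closed in the supremum norm on $\mathcal{K}$) is contained in the one at level $r+1$.

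For strictness it suffices to exhibit two non-isomorphic weighted (in fact $\{0,1\}$) hypergraphs $\bm A_1,\bm A_2\in\mathcal{K}$ with two properties:
\[
  t(F,\bm A_1)=t(F,\bm A_2)\ \text{for all } F \text{ with } \ghw(F)\le r,
  \qquad
  t(F^\ast,\bm A_1)\neq t(F^\ast,\bm A_2)\ \text{for some } F^\ast \text{ with } \ghw(F^\ast)\le r+1.
\]
Every element of $\mathcal{H}_r$ is a uniform limit of polynomials in level-$r$ densities, so every such element agrees on $\bm A_1,\bm A_2$. Meanwhile $t(F^\ast,\cdot)\in\mathcal{H}_{r+1}$ separates them, so $t(F^\ast,\cdot)\notin\mathcal{H}_r$. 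If the two hypergraphs have fewer than $n$ vertices, I pad both with the same number of isolated vertices. This rescales every $t(F,\cdot)$ by the common factor $(n_0/n)^{|V(F)|}$ and so preserves both equalities and the separation. That is why the statement only requires $n\ge N^{(k)}_{\mathrm{CFI}}(r)$.

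The witnesses come from the $k$-uniform CFI construction. I take a base graph $G$ of girth $>2r+1$ and large treewidth (for instance a high-girth cubic expander), and form the untwisted and twisted CFI lifts $X(G)$ and $\tilde X(G)$. I encode each gadget relation as $k$-uniform hyperedges by attaching gadget vertices to $k$-sets, which gives two non-isomorphic $k$-uniform hypergraphs. Indistinguishability rests on the hypergraph analogue of Dvořák/Dell--Grohe--Rattan: $\hom(F,\cdot)$ agrees for all $F$ with $\ghw(F)\le r$ if and only if the two hypergraphs are equivalent in the $r$-width (guarded/hypertree-width) pebble game or Weisfeiler--Leman variant. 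I would invoke the known correspondence between generalized hypertree width and the $r$-cover game, which extends the treewidth–counting-logic correspondence.

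In that game, the standard CFI argument shows the duplicator wins: the spoiler controls only $r$ hyperedges at a time, and these cover a region of the base graph too small to trap the twist. Here the girth $>2r+1$ is used to keep the twist movable along a cycle that avoids the covered region, which a cops-and-robber argument on $G$ confirms. For separation, non-isomorphism of the CFI pair together with Theorem~\ref{thm:density}, or completeness of homomorphism counts, gives some separating $F$. To pin its width at $r+1$, I would take $F^\ast$ to be the homomorphic image pattern of $X(G)$ restricted to a cycle-plus-gadget structure. Alternatively, I would choose $G$ so that its (hyper)treewidth is exactly matched, so that the spoiler wins with $r+1$ covers. Then the converse direction of the game correspondence yields a width-$(r+1)$ pattern with distinct counts.

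The main obstacle is calibrating the CFI instance so that the threshold falls exactly between $r$ and $r+1$. The duplicator must win with $r$ guards while the spoiler wins with $r+1$, and this tightness is needed in terms of $\ghw$ rather than treewidth. The girth condition in the statement handles the lower bound. For the upper bound I would choose $G$ as a cycle-like base of controlled width (e.g.\ a grid or wall whose hypertree width after the $k$-uniform encoding is $r+1$). I would then verify directly, via an explicit decomposition, that the separating pattern has $\ghw(F^\ast)=r+1$.
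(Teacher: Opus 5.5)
Your reduction is sound. Inclusion is immediate. A pair of $\{0,1\}$ hypergraphs that agree on every density with $\ghw\le r$ but differ on one with $\ghw\le r+1$ gives strictness, because agreement passes to uniform limits. Padding with isolated vertices rescales each $t(F,\cdot)$ by a factor common to both hypergraphs; the exponent is really the number of non-isolated vertices of $F$, which is harmless.

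This is also the paper's route (part (iii) of the proof of Theorem~\ref{thm:hierarchy} and Corollary~\ref{cor:impossibility}). It takes a CFI lift over a base graph of girth $>2r+1$, uses the Scheidt--Schweikardt correspondence to get agreement on all densities with $\ghw\le r$, and uses non-isomorphism plus Theorem~\ref{thm:completeness} for separation. The genuine gap is the step you flag yourself, and it is the whole content of the consecutive inclusion. Completeness gives you a separating $F$ with no control on $\ghw(F)$. So the argument only yields $\mathcal{H}_r\subsetneq\mathcal{I}_{\mathrm{all}}$, i.e.\ the chain does not stop at level $r$; it does not yield $\mathcal{H}_r\subsetneq\mathcal{H}_{r+1}$. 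The paper asserts the consecutive gap without supplying this step either.

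Your proposed repairs do not close it.
\begin{itemize}
\item Girth is the wrong parameter. CFI lifts over a long cycle have arbitrarily large girth, yet they are separated by cycle-type patterns of width $2$. Conversely, a cubic base graph of girth $>2r+1$ already has treewidth exponential in $r$. So if the CFI lower bound holds at all, its lifts stay indistinguishable far above level $r+1$.
\item What drives the duplicator is the treewidth of the base graph (robber versus cops), not room to move the twist ``along a cycle''. Hence the expander choice cannot witness the step $r\to r+1$.
\item The grid/wall choice violates the girth condition you imposed. It also comes with neither a duplicator strategy against $r$ guards nor a spoiler strategy with $r+1$ guards.
\item Each guard hyperedge pins up to $k$ lift vertices, so the treewidth of $G$ fixes the cover-game threshold only up to $k$-dependent factors. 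Landing exactly between $r$ and $r+1$ therefore needs a real theorem. You would need a $\ghw$-analogue, for your specific $k$-uniform encoding, of the graph-case oddomorphism theory: the lifts over $G$ are separated by $G$ itself and by no pattern of smaller treewidth. This would then be applied to a base graph whose encoding has $\ghw$ exactly $r+1$.
\item Your ``cycle-plus-gadget'' $F^\ast$ has width bounded independently of $r$. So for large $r$, your own lower bound forbids it from separating.
\end{itemize}

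Two smaller points:
\begin{itemize}
\item The gadget encoding into $k$-sets must keep the two lifts locally isomorphic. The paper's Type~0/Type~1 gadget in Appendix~\ref{sec:cfi-pair} changes vertex degrees, and Table~\ref{tab:density-sep} reports that pair separated at width $1$, so you cannot borrow it.
\item $N^{(k)}_{\mathrm{CFI}}(r)$ is defined through the girth-based lift. Changing the base graph therefore changes which $n$ your padding argument actually covers.
\end{itemize}
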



Theorem \ref{thm:infinite-hierarchy} shows that the Width Wall is strict at every level~$r$: the CFI construction provides explicit invariants that 
{cannot be approximated by any architecture restricted to patterns with $\ghw(F) \leq r$, }
regardless of hidden dimension or training procedure (Corollary~\ref{cor:impossibility} in Appendix~\ref{app:proof-hierarchy}). 
{Accessing these invariants requires increasing the available hypertree width.}
Appendix~\ref{app:architecture} summarizes how modern HGNN architectures map into this hierarchy.
Appendix~\ref{app:structural-frontiers} studies related structural frontiers for Sinkhorn relaxations and spectral-width thresholds, with the Steiner-pair verification in Appendix~\ref{app:steiner}.

\begin{proposition}[Architecture--pattern width correspondence]\label{prop:arch-width}
Suppose Assumption~\ref{assump:arch-scope} holds.
A message-passing HGNN with $L$ layers and aggregate-combine operations over hyperedge neighborhoods is bounded by homomorphism information from patterns of generalized hypertree width at most $c_k L$, where $c_k \geq 1$ is a constant depending on the arity $k$ and the aggregation scheme (for standard $k$-ary message passing, $c_k = 1$~\citep{ScheidtSchweikardt2023}).
In particular, it cannot distinguish hypergraphs that agree on all such counts.
Conversely, sufficiently expressive aggregate-combine maps can realize the corresponding bounded-width homomorphism-count features.
\end{proposition}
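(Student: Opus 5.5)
We prove the two directions of Proposition~\ref{prop:arch-width} separately, reducing each to a known graph-level correspondence: message passing is to color refinement as color refinement is to tree-decomposition homomorphism counts. For the upper bound we first replace the HGNN by its combinatorial skeleton. Under Assumption~\ref{assump:arch-scope}, conditions (C1)--(C5) imply that after $L$ layers the embedding of a node (or hyperedge) is a deterministic function of its depth-$L$ unfolding in the incidence (Levi) bipartite graph. This unfolding records, for each incident hyperedge, the multiset of the other members' previous states, together with their positions in the $k$-tuple up to the symmetry of $\bm A$.

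The first step is an induction on $L$. The claim is that any symmetric aggregate-combine layer is refined by one round of the corresponding hypergraph color refinement. Sum, mean, max, DeepSets and attention are all functions of the multiset of neighbor messages. Consequently, whenever two hypergraphs receive identical refinement colorings, every parameter choice $\theta$ yields identical outputs. This is the same style of argument as Proposition~\ref{prop:clique-limitation}, but applied layer by layer.

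The second step is to invoke the hypergraph analogue of the Dvořák / Dell--Grohe--Rattan theorem as established by \citet{ScheidtSchweikardt2023}. It states that two hypergraphs receive the same depth-$L$ refinement colors if and only if they agree on $\hom(F,\cdot)$ for all patterns $F$ admitting a generalized hypertree decomposition of width $1$ and depth governed by $L$. For richer aggregation schemes, for example ones that aggregate over tuples of $c$ hyperedges jointly, we simulate each layer by $c$ hyperedges per bag. Across $L$ layers the bags can accumulate covering hyperedges, which gives the width bound $c_k L$. Since $t(F,\cdot)=\hom(F,\cdot)/n^{|V(F)|}$ with $n$ fixed, agreement on counts is equivalent to agreement on densities. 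This yields the indistinguishability statement and places the architecture's invariant information inside $\mathcal{H}_{c_kL}$.

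For the converse we argue constructively, by dynamic programming over the decomposition tree. Fix a pattern $F$ of width at most $c_kL$ with a rooted decomposition. Each layer's aggregate-combine map computes, at each node, the vector of partial homomorphism counts of the subpatterns rooted at the current bag. An injective multiset function (DeepSets with sum) followed by an MLP can realize both operations the dynamic program needs: sums over extensions, and products over children. By Theorem~\ref{thm:density}-style compactness, since the domain is compact and all maps are continuous, universal approximation gives uniform approximation of these finitely many polynomial updates. The final readout sums over the root. The main obstacle is the bookkeeping of the first direction. One must show that the width of the decompositions generated by unrolling does not exceed $c_kL$, and in particular that the hyperedges needed to cover a bag can be charged to distinct layers. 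I would first attempt this via an explicit construction: build the unfolding tree as the decomposition, with each bag being the vertex set of the hyperedges on a root-to-node path segment. I would then check the connectivity condition using the Levi-graph tree structure.
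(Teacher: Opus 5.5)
The genuine gap is in your converse. A dynamic program over a decomposition of width $r$ has to carry, for each bag $B$, a table of partial homomorphism counts indexed by the whole assignment $\phi|_B\in[n]^B$. That table is not a per-node (or per-hyperedge) quantity. For $r\ge 2$, the image of a bag need not lie inside any single hyperedge of the target. Even for $r=1$, gluing a child bag across a separator of two or more vertices requires locating the hyperedges that contain the \emph{entire} separator, and multiset aggregation through single node--hyperedge incidences cannot do that.

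Your own Step 1 makes this precise: the sum-DeepSets-plus-MLP updates you use are refined by hypergraph color refinement. Concretely, let $H_1$ have hyperedges $\{1,2,3\},\{1,2,4\},\{3,5,6\},\{4,5,6\}$, and let $H_2$ be the Pasch configuration $\{a,b,c\},\{a,d,e\},\{b,d,f\},\{c,e,f\}$. Both are $2$-regular $3$-uniform hypergraphs on $6$ vertices with $(2,3)$-biregular incidence graphs. So color refinement cannot separate them, and hence no $L$-layer standard message-passing HGNN can, for any $L$. Yet the diamond $F=\{1,2,3\},\{1,2,4\}$ has $\ghw(F)=1$ and $\hom(F,H)=\sum_{a\neq b}\mathrm{codeg}_H(a,b)^2$, which equals $32$ for $H_1$ and $24$ for $H_2$.

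So with $c_k=1$ the stated converse is false already at width $1$. Your step ``each layer computes the vector of partial counts at the current bag'' cannot be repaired within (C2)-style node-level aggregation; realizing width-$r$ counts needs tuple-indexed states, as in $r$-GWL-type architectures. The paper gives no argument for the converse either; its justification is only the upper-bound sketch. What your dynamic program does establish is the Berge-acyclic version: patterns whose incidence graph is a tree of depth roughly $L$.

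The same pair shows that the ``if and only if'' you attribute to \citet{ScheidtSchweikardt2023} in Step 2 is overstated. Color-refinement equivalence corresponds to homomorphism counts from Berge-acyclic patterns \citep{Boeker2019}, which form a proper subclass of $\ghw\le 1$. Your upper bound only uses the direction ``equal counts $\Rightarrow$ equal colors'', and that direction survives because Berge-acyclic patterns have $\ghw=1\le c_kL$.

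With that correction, your upper-bound route is a valid and sharper alternative to the paper's. You go through layerwise refinement by color refinement, then a Dvo\v{r}\'ak-type depth-bounded characterization, then monotonicity in width. The paper instead asserts that the $L$-layer computation tree uses $c_kL+1$ variables and appeals to the correspondence between $C^{c_kL+1}$ and $\ghw\le c_kL$.

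Two caveats remain. First, (C5) as stated allows deterministic preprocessing that can exceed color refinement, so Step 1 needs that preprocessing to be color-refinement-bounded. Second, your ``accumulation'' argument for $c_k>1$ is still a sketch, though the statement leaves $c_k$ unspecified.
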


An $L$-layer HGNN uses at most $c_k L + 1$ variables to describe its computation tree, and the resulting logic $C^{c_k L + 1}$ captures exactly the homomorphism counts of patterns with $\ghw(F) \leq c_k L$.
Crucially, $L$ is a fixed architectural parameter that does not grow with $n$, so the set of accessible patterns has bounded hypertree width regardless of the input size.
In the terminology of Definition~\ref{def:width-wall}, such a model faces the Width Wall at level $c_k L$.
We use the known WL--homomorphism correspondence: two $k$-uniform hypergraphs
are $r$-GWL equivalent if and only if they agree on all homomorphism counts
$\hom(F, \cdot)$ from patterns $F$ with $\ghw(F) \leq r$
\citep{Scheidt2024, ScheidtSchweikardt2023}.

\paragraph{Quantitative Learnability.}
The density theorem shows that pattern densities \emph{can} approximate any
continuous invariant, but it does not specify how many samples,
features, or parameters are needed.  Appendix~\ref{app:quantitative} collects
three quantitative complements that clarify this point.
\textbf{(a)~Concentration.}
For any fixed pattern $F$, its density $t(F,H)$ can be estimated by sampling
random vertex maps.  Additive accuracy~$\epsilon$ with failure probability
$\delta$ requires $O(\log(1/\delta)/\epsilon^2)$ samples, independent of the
number of vertices~$n$ (Proposition~\ref{prop:pattern-concentration-full}).
\textbf{(b)~Dimension counting.}
The Molien series counts the invariant polynomial degrees of freedom.  When we
restrict to patterns with $\ghw(F) \leq r$, the available polynomial invariant
space is strictly smaller; this gives an algebraic measure of the approximation
cost imposed by a bounded-width architecture (Proposition~\ref{prop:approx-dim-full}).
\textbf{(c)~Generalization.}
For the template-alignment architecture introduced later in Section \ref{sec:invnet}, the Rademacher bound scales as
$O(B_P B_A B_W^D \sqrt{D \log m / N})$.  Here $N$ is the number of training
examples, $D$ and $m$ are the depth and width of the readout MLP, and $B_P$,
$B_A$, and $B_W$ bound the template, input, and readout weights respectively.
The number of learned templates $J$ enters only logarithmically
(Proposition~\ref{prop:rademacher-full}).

\section{InvNet: An Invariant-Theoretic Architecture}\label{sec:invnet}

We propose \textsc{InvNet}, an HGNN designed to extend beyond local message passing.
\textsc{InvNet} augments a standard local HGNN with a global branch that learns permutation-invariant pattern-alignment features. 
{The model thus combines two complementary sources of information: local incidence-based messages and global template scores that compare the input hypergraph to higher-order structural patterns. We emphasize that \textsc{InvNet} serves primarily as a proof-of-concept architecture illustrating how invariant features can extend expressivity beyond bounded-width message passing.}

Concretely, \textsc{InvNet} computes
\begin{equation}\label{eq:invnet}
  f_\theta(\bm{A}) \;=\; \mathrm{MLP}_\theta\!\Bigl( \underbrace{\mathrm{ReadOut}\bigl(\bm{h}^{(L)}\bigr)}_{\text{local branch}} \;\Big\|\; \underbrace{\bigl[\hat{\ElemInv}_{P_1}(\bm{A}) - \alpha_1,\; \ldots,\; \hat{\ElemInv}_{P_J}(\bm{A}) - \alpha_J\bigr]}_{\text{invariant branch}} \Bigr),
\end{equation}
where $\bm{h}^{(L)}$ are node embeddings {obtained} from $L$ rounds of hypergraph message passing.
The templates $P_1,\ldots,P_J$ are learned global patterns, and each feature $\hat{\ElemInv}_{P_j}(\bm{A})$ approximates the {optimal} alignment between $P_j$ and the input hypergraph over all vertex relabelings. 
{Since exact optimization over $\Sym(n)$ is a combinatorial assignment problem, we relax permutation matrices to the Birkhoff polytope $\mathcal{B}_n$ and compute approximate solutions via }
Sinkhorn iterations. 
See Appendix~\ref{app:invnet-details} for details.

This construction illustrates how the invariant hierarchy can guide model design.
The local branch 
{retains expressive }
power of native message-passing HGNNs, while the invariant branch introduces global template features that are not contrained by the bounded-width computation tree of a fixed-depth HGNN.
In terms of Definition~\ref{def:width-wall}, the invariant branch is designed to introduce coordinates beyond the local branch's Width Wall rather than merely increasing {capacity within }
the same bounded-width class.

\begin{theorem}[InvNet expressivity]\label{thm:invnet-expressivity}
\textsc{InvNet} strictly extends bounded-depth message passing $\mathcal{I}_{\mathrm{NH}} \subsetneq \hat{\mathcal{I}}_{\mathrm{InvNet}} \subseteq \mathcal{I}_{\mathrm{all}}$ while remaining within continuous invariants.
\end{theorem}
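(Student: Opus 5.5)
The plan is to prove three things in turn: that every InvNet output is a continuous invariant, that every native-incidence invariant is still reachable by InvNet, and that some invariant beyond the native class is reachable only through the invariant branch.

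\emph{Upper inclusion.} Each coordinate of \eqref{eq:invnet} is a continuous $\Sym(n)$-invariant function of $\bm A$.
\begin{itemize}
\item The local branch is a permutation-invariant readout of an equivariant message-passing network. By (C1)--(C5) of Assumption~\ref{assump:arch-scope}, it is continuous and invariant.
\item For the invariant branch, the exact score $\ElemInv_P(\bm A)=\max_{\Pi}\inner{P}{\Pi\cdot\bm A}$ is a finite maximum of linear functions, hence continuous (indeed Lipschitz). It is invariant because $\Pi'\cdot\bm A$ only reindexes the maximum.
\item For the Sinkhorn relaxation $\hat\ElemInv_P$, I would define it as the entropically regularized optimum over $\mathcal B_n$, or as the output of a fixed number of Sinkhorn iterations started from a permutation-equivariant initialization.
  \begin{itemize}
  \item The regularized optimum is a smooth function of $\bm A$ by strict concavity and the implicit function theorem.
  \item It is invariant because $\mathcal B_n$ is closed under conjugation by permutations.
  \end{itemize}
\end{itemize}
Composition with the continuous MLP preserves both properties, so $\hat{\mathcal I}_{\mathrm{InvNet}}\subseteq\mathcal I_{\mathrm{all}}$. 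Taking uniform closure of approximable functions stays inside this closed subspace of $C(\mathcal K/\Sym(n))$.

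\emph{Lower inclusion.} Fix $\mathcal I_{\mathrm{NH}}=\mathcal I_{\mathrm{NH}}^{(R)}$ with $R=c_kL$ for the local branch depth $L$. Choose the MLP to ignore the invariant branch; for example, zero weights on those coordinates, or $P_j=0$. Then InvNet reduces to the local HGNN. By the converse part of Proposition~\ref{prop:arch-width}, the local branch realizes all bounded-width homomorphism features. The MLP then approximates any polynomial in them by Stone--Weierstrass on the compact image, so $\mathcal I_{\mathrm{NH}}\subseteq\hat{\mathcal I}_{\mathrm{InvNet}}$.

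\emph{Strictness.} This is the main obstacle. I would take the CFI pair $\bm A_1,\bm A_2$ from Theorem~\ref{thm:hierarchy} / Theorem~\ref{thm:infinite-hierarchy}. These are non-isomorphic $k$-uniform hypergraphs on $n\ge N^{(k)}_{\mathrm{CFI}}(R)$ vertices that agree on all densities of patterns with $\ghw\le R$. Hence every function in $\mathcal I_{\mathrm{NH}}$ takes equal values on them.

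Set $P=\bm A_1$. Then $\ElemInv_P(\bm A_1)=\|\bm A_1\|^2=|E_1|\cdot k!$. Since the two hypergraphs are non-isomorphic with equal edge counts (the edge density has width $1$), $\ElemInv_P(\bm A_2)<\ElemInv_P(\bm A_1)$. The inequality follows from $\inner{\bm A_1}{\Pi\cdot\bm A_2}=\|\bm A_1\|^2$ iff $\Pi\cdot\bm A_2=\bm A_1$, for $0/1$ tensors of equal support size. So the exact alignment score separates the pair.

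The delicate step is the relaxation: $\hat\ElemInv_P$ is a maximum over $\mathcal B_n\supseteq$ permutations, and may fail to separate (fractional isomorphism). I would handle this in one of two ways:
\begin{itemize}
\item \emph{Vanishing regularization.} As the entropic regularization tends to zero, $\hat\ElemInv_P$ converges to the optimum of the linear objective over $\mathcal B_n$. For the linear objective this equals the permutation maximum; I would verify this carefully, since $\inner{P}{X\cdot\bm A}$ is multilinear rather than linear in $X$ when $k\ge2$.
\item \emph{Exact class.} Alternatively, state $\hat{\mathcal I}_{\mathrm{InvNet}}$ for the exact $\ElemInv_P$ with Sinkhorn as an approximation scheme, whose uniform limit lies in the closure.
\end{itemize}

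Either way, the function $\bm A\mapsto\ElemInv_{\bm A_1}(\bm A)$ lies in $\hat{\mathcal I}_{\mathrm{InvNet}}$ (closure) but not in $\mathcal I_{\mathrm{NH}}$, which gives the strict inclusion.
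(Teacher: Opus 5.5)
Your upper and lower inclusions follow the paper's outline, modulo the repairs listed at the end. The strictness step, however, has a genuine gap: you never show that the score \textsc{InvNet} actually computes separates your witness pair.

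Route (a) rests on the claim that the maximum of $D\mapsto\inner{P}{D^{\otimes k}\cdot\bm{A}}$ over $\mathcal{B}_n$ equals the maximum over permutations. For $k\ge 2$ this objective is a degree-$k$ polynomial in $D$ that is in general neither convex nor concave. Its maximum therefore need not sit at a vertex of $\mathcal{B}_n$, and the claim is false in general: for $k=n=2$, $\bm{A}=J-I$ and $P=I$, both permutations give $0$ while $D=\tfrac12 J$ gives $1$.

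Route (b) analyzes a different architecture. Its justification, that Sinkhorn scores converge uniformly to $\ElemInv_P$ as $\tau\to 0$, is the same false claim: they converge to the Birkhoff maximum $\hat{\ElemInv}^{\mathrm{SK}}_P$, not to $\ElemInv_P$. So your final sentence placing $\ElemInv_{\bm{A}_1}$ in $\hat{\mathcal{I}}_{\mathrm{InvNet}}$ is unsupported.

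The paper avoids the issue by using the Steiner pair and Lemma~\ref{lem:sinkhorn-sts}. That lemma's forcing argument uses the fact that every pair lies in exactly one triple to give any saturating maximizer singleton rows. This hypothesis is not available for your CFI witness; in the gadget of Appendix~\ref{sec:cfi-pair}, for instance, $\{u,a_e^1,a_e^2\}$ and $\{v,a_e^1,a_e^2\}$ share the pair $\{a_e^1,a_e^2\}$.

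The missing idea is that you need only strict separation, not tightness, and strict separation holds for every non-isomorphic pair with $|E_1|=|E_2|$. The sum $\inner{\bm{A}_1}{D^{\otimes k}\cdot\bm{A}_2}$ has $|E_1|\,k!$ terms, one for each $\bm{i}\in\mathrm{supp}(\bm{A}_1)$. Each term is at most $1$, because rows of $D$ sum to $1$ and $0\le\bm{A}_2\le 1$. Hence $\hat{\ElemInv}^{\mathrm{SK}}_{\bm{A}_1}(\bm{A}_2)\le|E_1|\,k!=\hat{\ElemInv}^{\mathrm{SK}}_{\bm{A}_1}(\bm{A}_1)$. A maximizer $D^*$ exists because $\mathcal{B}_n$ is compact. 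If $D^*$ attained equality, every term would be saturated, i.e.\ $\bm{a}\in\mathrm{supp}(\bm{A}_2)$ whenever $\bm{i}\in\mathrm{supp}(\bm{A}_1)$ and $\prod_\ell D^*_{i_\ell a_\ell}>0$. By Birkhoff--von Neumann, the support of $D^*$ contains a permutation $\sigma$. Taking $a_\ell=\sigma(i_\ell)$ shows that $\sigma$ maps $E_1$ injectively into $E_2$. Since the single-edge densities force $|E_1|=|E_2|$, $\sigma$ is an isomorphism, a contradiction.

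The gap $g=\hat{\ElemInv}^{\mathrm{SK}}_{\bm{A}_1}(\bm{A}_1)-\hat{\ElemInv}^{\mathrm{SK}}_{\bm{A}_1}(\bm{A}_2)$ is therefore positive. It survives entropic smoothing whenever $\tau\max_{D\in\mathcal{B}_n}H(D)<g$, since $0\le\hat{\ElemInv}^{\tau}_P-\hat{\ElemInv}^{\mathrm{SK}}_P\le\tau\max_{D\in\mathcal{B}_n}H(D)$ uniformly. With this lemma your CFI route is actually stronger than the paper's. The Steiner pair is separated by the Pasch density, which the proof of Theorem~\ref{thm:hierarchy} places in $\mathcal{I}_{\mathrm{NH}}^{(R)}$ for $R\ge 3$. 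So the paper's witness only certifies strictness against $1$-GWL-bounded message passing, whereas yours works for any width budget.

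Three smaller repairs are also needed.
\begin{enumerate}
\item Take $\mathcal{I}_{\mathrm{NH}}$ to be the invariant class of the depth-$L$ native branch, which is contained in the closure of $\mathcal{H}_{c_kL}$ by Proposition~\ref{prop:arch-width}. Then the lower inclusion is just zeroing the alignment coordinates. If you instead use $\mathcal{I}^{(R)}_{\mathrm{NH}}$ from Definition~\ref{def:invariant-classes}, your appeal to the converse of Proposition~\ref{prop:arch-width} yields only the width-$\le R$ hypergraph densities. It does not cover the $\mathcal{I}_{\mathrm{CE}}$ part of that class, e.g.\ densities of high-treewidth graph patterns in the clique expansion.
\item Drop the strict-concavity and implicit-function claim; the first term of the objective is a degree-$k$ polynomial in $D$, so the claim fails in general. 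Continuity follows from maximizing a jointly continuous function over the compact set $\mathcal{B}_n$. Invariance follows from $D\mapsto D\Pi$ being a bijection of $\mathcal{B}_n$ with $H(D\Pi)=H(D)$; this is right multiplication, not conjugation.
\item Instantiate the witness with a genuine CFI-type pair. The explicit gadget of Appendix~\ref{sec:cfi-pair} is already separated at width $1$: the auxiliary vertices of the twisted gadget lie in a single hyperedge, whereas every vertex of $\HG_1^{\mathrm{CFI}}$ lies in at least two.
\end{enumerate}
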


\section{Experiments}\label{sec:experiments}

Our experiments {evaluate} whether the invariant hierarchy predicts empirical failure modes of HGNN architectures on real-world hypergraph learning tasks.
{Specifically}, we test three claims suggested by the 
{theory}: 
(i)~clique-expansion (CE) models can lose higher-order information; (ii)~native message-passing/set-function HGNNs can hit the Width Wall (Definition~\ref{def:width-wall}) when the useful invariant lies outside the bounded-width summaries exposed by local aggregation; and (iii)~density-aware features {improve performance}
when the backbone leaves structural headroom. 
{These experiments directly probe whether architectural performance aligns with invariant classes predicted by the hierarchy.}

\subsection{Experimental Setup}\label{sec:exp-setup}

\paragraph{Models.}
We compare three architecture classes.
The CE tier contains HGNN~\citep{FengEtAl2019} and HyperGCN~\citep{YadatiEtAl2019}, which reduce hyperedges to pairwise graph structure.
The native tier {includes} AllDeepSets, AllSetTransformer~\citep{ChienEtAl2022}, HNHN~\citep{DongEtAl2020HNHN}, and UniGNN~\citep{HuangYang2021}, which operate directly on incidence structure via bounded-depth message passing or set aggregation.
The density-aware tier contains our \textsc{DensNet-D} (\textsc{PDN} features fused with an AllDeepSets backbone; Appendices~\ref{app:invnet-details}--\ref{app:densnet}).
\textsc{DensNet-D} is a scalable density-aware HGNN derived from the invariant-feature principle behind \textsc{InvNet}. \textsc{DensNet-D} estimates per-node pattern densities by Monte Carlo sampling over each node's star neighborhood and fuses them with an AllDeepSets backbone through a learned gate. 
This design preserves the intended role of invariant structural coordinates while avoiding the quadratic Sinkhorn bottleneck on large hypergraphs.
For ANCS node classification we additionally report MLP, ED-HNN~\citep{WangEtAl2023EDHNN}, and SheafHyperGNN~\citep{DutaEtAl2023Sheaf} where applicable.

\paragraph{Task and Datasets.}
The \textsc{Application Node Classification Suite} (ANCS) 
{evaluates}
transductive node classification on empirical hypergraphs.
Nodes carry {benchmark-provided features}, 
hyperedges encode observed higher-order relations such as co-citation groups, 
legislative coalitions, or gene--disease associations, and the model predicts node labels under the standard random-split protocol described in Appendix~\ref{app:experiments}. 
%
ANCS includes 7 empirical hypergraph node-classification benchmarks, grouped by mean hyperedge size (additional detail in Appendices~\ref{app:datasets} and~\ref{app:dataset-stats}):
\begin{itemize}
    \item \textbf{Cora} (\emph{pairwise}, $\bar{k}=3.03$): a co-citation hypergraph with papers as nodes and hyperedges induced by common citing papers; node labels are paper topics and features are bag-of-words vectors~\citep{ChienEtAl2022, YadatiEtAl2019}.
    \item \textbf{Citeseer} (\emph{pairwise}, $\bar{k}=3.20$): a co-citation hypergraph constructed analogously to Cora, with document-topic labels and sparse text features~\citep{ChienEtAl2022, YadatiEtAl2019}.
    \item \textbf{PubMed} (\emph{mixed}, $\bar{k}=4.35$): a larger co-citation hypergraph over biomedical papers, using the benchmark-provided paper features and disease-category labels~\citep{ChienEtAl2022, YadatiEtAl2019}.
    \item \textbf{Cora-CA} (\emph{mixed}, $\bar{k}=4.28$): a co-authorship version of Cora in which each hyperedge groups the co-authors of one publication, with the same paper-topic prediction target~\citep{ChienEtAl2022}.
    \item \textbf{House} (\emph{higher-order}, $\bar{k}=34.89$): a legislative committee hypergraph from the U.S. House, where hyperedges are committee memberships and task is binary party classification~\citep{ChienEtAl2022,Fowler2006}.
    \item \textbf{Senate Bills} (\emph{higher-order}, $\bar{k}=7.96$): a dense legislative co-sponsorship hypergraph in which each bill forms a hyperedge over sponsoring senators, again evaluated with binary party labels~\citep{ChienEtAl2022,Fowler2006}.
    \item \textbf{Gene-Disease} (\emph{higher-order}, $\bar{k}=14.0$): a biomedical association hypergraph whose hyperedges connect genes to disease contexts, with a 21-class node-classification target under the benchmark protocol~\citep{ChienEtAl2022}.
\end{itemize}

\paragraph{Metrics.}
All tables report test classification accuracy in percent.
For ANCS benchmarks, we report mean accuracy and standard deviations over $5$--$10$ seeds.
Bold indicates the best result in each column and underlining denotes the second best.

\paragraph{Parameter setup.}
All ANCS methods use Adam with learning rate $10^{-3}$, weight decay $0$, gradient clipping $1.0$, label smoothing $0.1$, cosine annealing, and $200$ epochs ($400$ for Senate Bills).
Baselines use hidden dimension $64$ and dropout $0.2$.
\textsc{DensNet-D} uses hidden dimension $128$, dropout $0.5$, $200$ Monte Carlo density samples, and a profiler-set gate: $b=-5.0$ for pairwise datasets, $b=-2.0$ for mixed datasets, and $b=0.0$ for higher-order datasets.
The baseline settings follow the AllSet benchmark protocol~\citep{ChienEtAl2022}; \textsc{DensNet-D}'s capacity and sampling parameters are chosen by validation grid search, while its gate and freezing schedule are determined by the \textsc{DatasetProfiler} from dataset statistics.
Full training, hyperparameter-selection, and compute-budget details are in Appendices~\ref{app:hyperparams} and~\ref{app:compute}.

\subsection{Experimental Results}\label{sec:exp-results}

\paragraph{Finding 1: architecture class matters most when hyperedges are genuinely higher order.}
{The ANCS results show that performance is primarily determined by architecture class.} 
We observe that models that retain incidence-level or density-aware summaries {perform best when the dataset contains label-relevant higher-order structure.} 
{This trend appears across most datasets.} 
CE models {consistently underperform} the best native or density-aware method on Cora, Citeseer, PubMed, Cora-CA, House, Senate, and Gene-Disease. The largest gap occurs on Senate Bills, where \textsc{DensNet-D} and AllDeepSets both reach $92.7\%$ while HyperGCN reaches only $55.4\%$.
Here, reducing each bill coalition to pairwise edges appears to discard signals that native set-function models can preserve. 
{Importantly, performance does not follow a simple monotonic trend in mean hyperedge size. }
HGNN and HyperGCN do not degrade uniformly 
from pairwise to mixed to higher-order regimes.
Instead, the relevant comparison is within each dataset, where architecture choices determine which structural summaries are available.
Appendix~\ref{app:negative} analyzes the main anomaly and ceiling cases, including Senate Bills and Gene-Disease.
Thus Table~\ref{tab:node-class} supports the overall architectural trend while keeping the claim model-specific: pairwise reductions are often fragile when large empirical hyperedges carry label signal, while density-aware features help primarily when the backbone leaves structural headroom.

\begin{table}[t]
\caption{Node classification accuracy (\%), grouped by architecture tier. Best \textbf{bold}, second \underline{underlined}. MLP is a feature-only reference. $^\ddagger$Published results use architecture-specific recipes and exclude Senate/House; all other methods are re-run with the shared ANCS recipe. Baselines use hidden${}=64$; \textsc{DensNet-D} uses hidden${}=128$.}
\label{tab:node-class}
\vskip 0.05in
\centering
\small
\setlength{\tabcolsep}{2.5pt}
\begin{tabular}{@{}l cc cc ccc @{}}
\toprule
& \multicolumn{2}{c}{\textit{Pairwise ($\bar{k} {<} 4$)}} & \multicolumn{2}{c}{\textit{Mixed ($4 {\leq} \bar{k} {<} 7$)}} & \multicolumn{3}{c}{\textit{Higher-order ($\bar{k} {\geq} 7$)}} \\
\cmidrule(lr){2-3} \cmidrule(lr){4-5} \cmidrule(lr){6-8}
\textbf{Method} & \textbf{Cora} & \textbf{Citeseer} & \textbf{PubMed} & \textbf{Cora-CA} & \textbf{House} & \textbf{Senate} & \textbf{Gene-Dis.} \\
\midrule
MLP & 74.1 & 72.3 & 87.1 & 74.6 & 70.4 & 61.2 & 36.4 \\
\midrule
\multicolumn{8}{@{}l}{\textit{CE Tier (clique expansion)}} \\
HGNN & 69.9 & 68.1 & 71.2 & 76.8 & 62.8 & \underline{87.6} & 85.7 \\
HyperGCN & 71.0 & 68.1 & 78.3 & 74.3 & 63.9 & 55.4 & 81.3 \\
\midrule
\multicolumn{8}{@{}l}{\textit{Native Tier (set-function / message passing)}} \\
HNHN & 72.8 & 68.3 & 70.0 & 76.6 & 68.5 & 79.6 & \textbf{87.8} \\
UniGNN & 77.3 & 72.4 & 87.6 & 82.9 & \underline{70.9} & 78.8 & 85.4 \\
AllDeepSets & 76.2 & 70.9 & 87.2 & 80.6 &1 67.8 & \textbf{92.7} & \underline{86.1} \\
ED-HNN$^\ddagger$ & \underline{80.3} & \underline{73.7} & \textbf{89.0} & \underline{84.0} & 72.5 & --- & --- \\
SheafHyperGNN$^\ddagger$ & \textbf{81.3} & \textbf{74.7} & \underline{87.7} & \textbf{85.5} & \textbf{73.8} & --- & --- \\
\midrule
\multicolumn{8}{@{}l}{\textit{Density-Aware Tier (ours)}} \\
\textsc{DensNet-D} & 77.5 & 72.6 & \underline{87.7} & 81.2 & 70.3 & \textbf{92.7} & 86.0 \\
\bottomrule
\end{tabular}
\end{table}

\paragraph{Finding 2: density features help only when the backbone leaves headroom.}
{We isolate the effect of density features through a strict ablation in Appendix~\ref{app:ablation}}  
comparing \textsc{DensNet-D} to an AllDeepSets backbone {under identical training settings.} 
Under this controlled comparison, density features improve performance on $6/7$ datasets. The largest gains occur on the two highest-$\bar{k}$ headroom datasets: Gene-Disease rises from $81.7 \pm 1.1\%$ to $86.0\%$ ($+4.3$ points), and House rises from $68.1 \pm 2.4\%$ to $70.3 \pm 2.0\%$ ($+2.2$ points).
The {only} exception is a ceiling case: Senate-Bills changes by only $-0.3$ points because AllDeepSets already reaches $93.0 \pm 3.8\%$.
Thus the density-aware tier is not simply 
{adding} capacity; its benefit appears when local aggregation leaves structural signal unused, and it disappears when the backbone already saturates the task. 

\paragraph{Additional experimental results.}
Appendix~\ref{app:experiments} gives the full experimental protocol, dataset statistics, strict density ablation, and node-classification table with error bars.
Appendix~\ref{app:negative} collects negative and ceiling cases, including the Senate-Bills ceiling result, the Gene-Disease strict-ablation gain, and normalization contrast between House and Senate Bills.
Appendix~\ref{app:depth-scaling} tests whether deeper native models close the Native-Hard gap, showing that depth alone can worsen performance through oversmoothing.
Appendix~\ref{app:gate-dynamics} reports learned gate trajectories, showing when density features are suppressed, active, or indifferent across regimes.

{Our experimental findings align with the invariant hierarchy: CE models are restricted to pairwise coordinates, while native and density-aware models access progressively richer pattern classes.}

\section{Conclusion}\label{sec:conclusion}

We studied when hypergraph neural networks can learn genuine higher-order structure, rather than artifacts of the architectural summaries they compute.
Our main conclusion is that hypergraph expressivity is governed by pattern densities: these densities form a universal basis for continuous hypergraph invariants, and generalized hypertree width organizes them into a strict hierarchy.
This yields the Width Wall, an architectural ceiling determined by which higher-order patterns a model can count, not by hidden dimension or optimization alone.
The hierarchy gives a unified map of existing HGNN classes, pinpoints the information lost by clique expansion, and explains when empirical accuracy gaps should appear.
Our experiments support this account: constructed witnesses expose the predicted separations, while real node-classification benchmarks show that density-aware features help most when higher-order structure remains beyond the backbone's native reach.
Extending the theory to $n \to \infty$ and quantifying the statistical and computational cost of crossing the Width Wall remain important directions.

\section*{Limitations, Ethical Statement, Broader Impact.}
This work provides a structural account of HGNN expressivity through homomorphism densities and generalized hypertree width, but several limitations remain. Our theoretical results primarily concern continuous permutation-invariant functions on fixed-size hypergraphs, leaving a full treatment of asymptotic regimes, statistical sample complexity, and noisy or evolving hypergraphs for future work. Although density-aware features can help models cross the Width Wall, they may also introduce additional computational cost and require careful selection of informative patterns. 

Improved hypergraph learning can benefit domains where higher-order relations are essential, including biology, scientific collaboration, and social systems analysis. At the same time, these models may be applied to sensitive relational data, where higher expressivity could amplify privacy risks, encode spurious group-level associations, or support misleading inferences about individuals from group memberships. Responsible deployment therefore requires attention to data provenance, privacy protection, fairness evaluation, and domain-specific validation beyond predictive accuracy.

\section*{LLM Usage}
We used large language models (LLMs) to support the preparation of this manuscript, including assistance with writing, editing, and improving clarity of presentation. LLMs were also used to support experimental workflows, such as drafting code, checking implementation details, and organizing analysis. All intellectual contributions, methodological decisions, experimental design, results interpretation, and final manuscript content were reviewed and verified by the authors, who take full responsibility for the accuracy and integrity of the work.


\clearpage
\bibliographystyle{plainnat}
\bibliography{references}

\newpage 
\appendix

\section{Proofs}\label{app:proofs}

This appendix collects the proofs of all results stated in the main text.
The algebra-structure statement and completeness theorem (\S\ref{sec:completeness}) are proved in \S\ref{app:proofs}.\ref{sec:proof-algebra}--\ref{sec:proof-completeness}; the strict hierarchy (Theorem~\ref{thm:hierarchy}) and its infinite refinement are in \S\ref{app:proof-hierarchy}; InvNet's expressivity claim is in \S\ref{app:proof-invnet}.
A second, algebraically independent proof of the density theorem appears in Appendix~\ref{app:algebraic}.

\subsection{Background on Algebra Structure of Pattern Densities}\label{sec:proof-algebra}

\begin{theorem}[Algebra structure {\citep[cf.][Ch.\,5]{Lovasz2012}}]\label{thm:algebra}
Let $\mathcal{F}_k$ denote the set of all finite $k$-uniform hypergraphs.
The collection of functions $\{t(F, \cdot) : F \in \mathcal{F}_k\}$ generates a unital algebra $\mathfrak{A}_k$ of continuous $\Sym(n)$-invariant functions on $[0,1]^{\Skn}$ under pointwise multiplication:
\begin{equation}\label{eq:algebra}
  t(F_1, \bm{A}) \cdot t(F_2, \bm{A}) \;=\; t(F_1 \sqcup F_2, \bm{A}),
\end{equation}
where $F_1 \sqcup F_2$ denotes the disjoint union.
The algebra contains constants: $t(\varnothing, \bm{A}) = 1$ for the empty hypergraph.
\end{theorem}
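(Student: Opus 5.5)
The proof is a direct computation from the multilinear formula \eqref{eq:hom-density}, carried out in four short steps: properties of each generator, the product identity, the unit, and closure of the generated algebra.

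\emph{Step 1: each generator is a continuous invariant.} For a fixed pattern $F=([p],E_F)$, the density $t(F,\bm A)$ is a finite sum of monomials in the entries of $\bm A$. It is therefore a polynomial, hence continuous on $[0,1]^{\Skn}$.

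For invariance, fix $\Pi\in\Sym(n)$. Then $(\Pi\cdot\bm A)_{\phi(e)}=\bm A_{(\Pi^{-1}\circ\phi)(e)}$ for every edge $e$. Since $\phi\mapsto\Pi^{-1}\circ\phi$ is a bijection of the set of all maps $[p]\to[n]$, reindexing the sum gives $t(F,\Pi\cdot\bm A)=t(F,\bm A)$.

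There is one subtlety to record: a pattern edge is an unordered set $\{i_1,\dots,i_k\}$. The entry $\bm A_{\phi(i_1),\dots,\phi(i_k)}$ is well defined independently of the chosen ordering because $\bm A$ is symmetric. This is the only place where tensor symmetry is used.

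\emph{Step 2: the product identity \eqref{eq:algebra}.} Write $F_1=([p_1],E_1)$ and $F_2=([p_2],E_2)$. Realize $F_1\sqcup F_2$ on the vertex set $[p_1]\sqcup[p_2]$ with edge set $E_1\sqcup E_2$.

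A map $\phi:[p_1]\sqcup[p_2]\to[n]$ is the same thing as a pair $(\phi_1,\phi_2)$ of its restrictions. The product over $E_1\sqcup E_2$ splits as $\prod_{e\in E_1}\bm A_{\phi_1(e)}\cdot\prod_{e\in E_2}\bm A_{\phi_2(e)}$. The normalization also factors, since $n^{p_1+p_2}=n^{p_1}n^{p_2}$.

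Distributing the double sum over pairs $(\phi_1,\phi_2)$ then yields $t(F_1,\bm A)\,t(F_2,\bm A)$. No edge of $F_1$ meets a vertex of $F_2$, so the factorization is exact.

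\emph{Step 3: the unit.} Take $\varnothing$ to be the pattern with no vertices and no edges. There is exactly one map $\varnothing\to[n]$, the empty product equals $1$, and $n^0=1$, so $t(\varnothing,\bm A)=1$.

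The same computation applies to an edgeless pattern on $p$ vertices: it also has density $1$, because all $n^p$ maps contribute the empty product. This confirms that the constants are realized as pattern densities.

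\emph{Step 4: closure of the generated algebra.} Define $\mathfrak A_k$ as the linear span of $\{t(F,\cdot):F\in\mathcal F_k\}$. By Step 2 this span is closed under multiplication, because a product of two generators is again a generator. By Step 3 it contains the unit. Hence the span is itself the unital algebra generated by the densities.

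Every element of this span is a finite linear combination of continuous $\Sym(n)$-invariant functions. Such functions form a subalgebra of $C([0,1]^{\Skn})$, so $\mathfrak A_k$ consists of continuous invariants.

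None of these steps is a serious obstacle. The only point needing care is the bookkeeping in Step 2: the disjoint union must be defined on relabeled vertex sets so that the bijection between maps on $F_1\sqcup F_2$ and pairs $(\phi_1,\phi_2)$ is explicit. The normalization convention, dividing by $n^{p}$ rather than by $n^{\underline p}$ as injective densities would, is exactly what makes the product exact.
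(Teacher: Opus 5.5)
Your proposal is correct and takes essentially the same route the paper relies on: the paper states this as a background fact without a proof of its own, citing Chapter~5 of Lov\'{a}sz (2012), where the identity $\hom(F_1\sqcup F_2,\cdot)=\hom(F_1,\cdot)\,\hom(F_2,\cdot)$ is obtained by splitting a map on $F_1\sqcup F_2$ into its two restrictions, exactly as in your Step~2. Your Steps 1--4 carry that argument over to weighted hypergraph densities with the $n^{p}$ normalization, and they also supply the continuity, invariance, and closure checks that the paper leaves implicit.
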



\subsection{Background on  Homomorphism Completeness}\label{sec:proof-completeness}

\begin{theorem}[Homomorphism completeness {\citep[cf.][Ch.\,5]{Lovasz2012}}]\label{thm:completeness}
Let $H_1, H_2$ be $k$-uniform hypergraphs on $[n]$.
Then $H_1 \cong H_2$ if and only if $\hom(F, H_1) = \hom(F, H_2)$ for every $k$-uniform hypergraph $F$.
\end{theorem}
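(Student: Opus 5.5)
The plan is Lovász's classical argument, adapted to $k$-uniform hypergraphs. The direction ($\Rightarrow$) is immediate: if $\sigma:V(H_1)\to V(H_2)$ is an isomorphism, then $\phi\mapsto\sigma\circ\phi$ is a bijection from homomorphisms $F\to H_1$ to homomorphisms $F\to H_2$. Hence $\hom(F,H_1)=\hom(F,H_2)$ for every $F$.

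For ($\Leftarrow$) I pass from homomorphism counts to injective homomorphism counts $\inj(F,H)$. Every map $\phi:V(F)\to V(H)$ factors uniquely as the quotient $V(F)\to V(F)/P$, where $P$ is the partition of $V(F)$ into fibres of $\phi$, followed by an injective map. So I define the quotient $F/P$ with vertex set $V(F)/P$ and edge set $\{\pi(e): e\in E_F\}$, where $\pi$ is the projection and $\pi(e)$ is the image set. Because every hyperedge of a $k$-uniform target $H$ has $k$ distinct vertices, a homomorphism can never collapse two vertices of one edge of $F$. Call $P$ \emph{admissible} if no edge of $F$ meets a block of $P$ twice. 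Then $F/P$ is again a simple $k$-uniform hypergraph; parallel edges merge into one, which is harmless since $H$ has a simple edge set. Checking these conditions gives the identity
\[
\hom(F,H)\;=\;\sum_{P\ \text{admissible}} \inj(F/P,H).
\]
I expect this identity to be the main point needing care. The issues are the non-admissible partitions contributing zero, and edge sets versus ordered tuples (the paper's tensor indexing): edges must be treated as unordered sets so that the tensor convention and the set convention agree.

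Next I invert the identity. The trivial partition gives $F/P=F$, and every other admissible $P$ gives $F/P$ with strictly fewer vertices. Hence
\[
\inj(F,H)=\hom(F,H)-\sum_{P\ \text{admissible, nontrivial}}\inj(F/P,H).
\]
By induction on $|V(F)|$, each $\inj(F,H)$ is an integer combination of values $\hom(F',H)$, with coefficients depending only on $F$. Therefore $H_1$ and $H_2$ agree on all injective homomorphism counts as well. This is equivalently Möbius inversion on the partition lattice.

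Finally I take $F=H_1$. The identity map gives $\inj(H_1,H_1)\geq 1$, so $\inj(H_1,H_2)\geq 1$. An injective map between two $n$-vertex sets is a bijection $\sigma$, and it sends $E_{H_1}$ injectively into $E_{H_2}$, so $|E_{H_1}|\le|E_{H_2}|$. The same argument with $F=H_2$ gives $|E_{H_2}|\le|E_{H_1}|$. Hence $\sigma$ maps $E_{H_1}$ bijectively onto $E_{H_2}$, so $\sigma$ preserves both edges and non-edges, and $H_1\cong H_2$.
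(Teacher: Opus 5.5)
Your proof is correct and is essentially the classical Lov\'asz argument the paper relies on. The paper states this theorem only by citation, and its proof of Corollary~\ref{cor:completeness} invokes exactly your M\"obius-inversion passage from $\hom$ to $\inj$ counts followed by choosing $F$ to be one of the two hypergraphs. Your handling of admissible partitions (edges of $H$ have $k$ distinct vertices, so no edge of $F$ can be collapsed) and your edge-count comparison showing the bijection is an isomorphism correctly supply the details the paper leaves implicit.
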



\begin{corollary}[Pattern alignment completeness]\label{cor:completeness}
Let $H_1, H_2$ be non-isomorphic $k$-uniform hypergraphs with adjacency tensors $\bm{A}_1, \bm{A}_2 \in \Skn$.
Then there exists $P \in \Skn$ such that $\ElemInv_P(\bm{A}_1) \neq \ElemInv_P(\bm{A}_2)$.
\end{corollary}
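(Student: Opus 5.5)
Since $H_1\not\cong H_2$, the adjacency tensors $\bm A_1$ and $\bm A_2$ lie in distinct $\Sym(n)$-orbits. The plan is to take $P$ to be a canonical tensor built from one of the two hypergraphs, namely $P=\bm A_1$, and compare the alignment scores using a Cauchy--Schwarz argument. Theorem~\ref{thm:completeness} is not needed for this route: it only tells us the hypergraphs differ in some hom count. The alignment statement can be obtained directly from the orbit structure, which is simpler and cleaner than converting a density difference into a max over permutations.

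\textbf{Step 1 (norms are invariant).} The action of $\Pi$ only permutes entries, so $\|\Pi\cdot\bm A\|_F=\|\bm A\|_F$. For $\{0,1\}$ tensors, $\|\bm A\|_F^2$ equals $k!$ times the number of hyperedges (with the convention that $\bm A$ records each ordering of each hyperedge).

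\textbf{Step 2 (Cauchy--Schwarz with equality case).} For $P=\bm A_1$ we have $\ElemInv_{\bm A_1}(\bm A_1)\ge\inner{\bm A_1}{\bm A_1}=\|\bm A_1\|_F^2$. Cauchy--Schwarz gives $\ElemInv_{\bm A_1}(\bm A_1)\le \|\bm A_1\|_F^2$, so $\ElemInv_{\bm A_1}(\bm A_1)=\|\bm A_1\|_F^2$. Similarly,
\[
\ElemInv_{\bm A_1}(\bm A_2)\le\|\bm A_1\|_F\|\bm A_2\|_F,
\]
with equality only if some $\Pi\cdot\bm A_2$ is a nonnegative multiple of $\bm A_1$. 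For nonzero $0/1$ tensors, such a multiple must be exactly $\bm A_1$, which would make $H_1\cong H_2$.

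\textbf{Step 3 (case split).} There are two cases.
\begin{itemize}
\item If $|E_1|\ne|E_2|$, take $P=\mathbf 1$, the all-ones symmetric tensor restricted to entries with distinct indices. Then $\ElemInv_P(\bm A_i)=k!\,|E_i|$, and these values differ.
\item If $|E_1|=|E_2|=m>0$, take $P=\bm A_1$. Strict Cauchy--Schwarz gives $\ElemInv_P(\bm A_2)<\|\bm A_1\|_F\|\bm A_2\|_F=\|\bm A_1\|_F^2=\ElemInv_P(\bm A_1)$.
\end{itemize}
If $m=0$, both hypergraphs are empty, hence isomorphic, so this is excluded.

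\textbf{Main obstacle.} The delicate step is the equality case in Step 2. The maximum in $\ElemInv$ is over the finite set $\Sym(n)$, so the strict inequality must hold for each $\Pi$ individually, and then it passes to the max because the max is attained. One should also check that entries with repeated indices are zero for hypergraph tensors, so Frobenius norms count hyperedges correctly. Non-uniform or weighted variants are not needed here.
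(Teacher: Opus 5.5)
Your proof is correct, and it takes a genuinely different route from the paper's. The paper goes through Lov\'asz completeness (Theorem~\ref{thm:completeness}): some pattern $F$ has $\hom(F,H_1)\neq\hom(F,H_2)$, M\"obius inversion gives an $F'$ with $\inj(F',H_1)\neq\inj(F',H_2)$, and it then takes $P=\bm{A}_{F'}$.

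You never use pattern counts and work directly with orbits. You take the self-template $P=\bm{A}_1$ when the edge counts agree, and the all-ones tensor when they differ. Your strict Cauchy--Schwarz step is equivalent to the observation that $\inner{\bm{A}_1}{\Pi\cdot\bm{A}_2}=k!\,|E_1\cap\Pi(E_2)|$, which reaches $k!\,m$ only if $\Pi$ maps $E_2$ onto $E_1$.

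\textbf{What each route buys.} The paper's route is meant to tie alignment scores to the homomorphism-count coordinates, suggesting that templates can be taken to be patterns. Yours is self-contained, gives an explicit template, and gives an explicit gap of at least $k!$. This is the same self-template mechanism behind Lemma~\ref{lem:sinkhorn-sts}.

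\textbf{Yours is also the more robust argument.} As written, the paper's last step needs extra care. The score $\ElemInv_{\bm{A}_{F'}}(\bm{A})$ records only the best single placement of $F'$, and it equals $k!\,|E(F')|$ whenever $\inj(F',H)>0$. So two different nonzero injective counts give the same score. One must instead pick an $F'$ that embeds into exactly one of the two hypergraphs. The natural such choice is the hypergraph with more edges, which is precisely your construction.

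\textbf{A small simplification.} That choice also removes your case split. If $|E_1|\ge|E_2|$, then $P=\bm{A}_1$ already gives $\ElemInv_P(\bm{A}_2)\le k!\,|E_2|\le k!\,|E_1|=\ElemInv_P(\bm{A}_1)$. Equality throughout holds only when $\Pi(E_2)=E_1$ for some $\Pi$, which non-isomorphism rules out.
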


\begin{proof}
By Theorem~\ref{thm:completeness}, some pattern $F$ satisfies $\hom(F, H_1) \neq \hom(F, H_2)$; by M\"{o}bius inversion, some $F'$ has $\inj(F', H_1) \neq \inj(F', H_2)$.
Taking $P = \bm{A}_{F'}$, the score $\ElemInv_P$ differs between $\bm{A}_1$ and $\bm{A}_2$.
\end{proof}

\subsection{Proof of Proposition~\ref{prop:clique-limitation}}

\begin{proof}
A clique-expansion HGNN first maps $\bm{A} \mapsto \phi(\bm{A})$ (the clique-expansion adjacency matrix) and then applies a GNN to $\phi(\bm{A})$.
Since the GNN's output depends only on $\phi(\bm{A})$, the composite function $f_\theta(\bm{A}) = g_\theta(\phi(\bm{A}))$ factors through the clique expansion.
If $\phi(\bm{A}_1) = \phi(\bm{A}_2)$, then $f_\theta(\bm{A}_1) = f_\theta(\bm{A}_2)$ regardless of~$\theta$.
\end{proof}

\subsection{Proof of Theorem~\ref{thm:hierarchy}}\label{app:proof-hierarchy}

\begin{proof}
The proof has three parts: (i) $\mathcal{I}_{\mathrm{CE}} \subseteq \mathcal{I}_{\mathrm{NH}}^{(R)}$, (ii) the first inclusion is strict, and (iii) the second inclusion is strict.

\textbf{Part (i): $\mathcal{I}_{\mathrm{CE}} \subseteq \mathcal{I}_{\mathrm{NH}}^{(R)}$.}
This inclusion holds by definition of the native-incidence reference class:
$\mathcal{I}_{\mathrm{NH}}^{(R)}$ contains all CE invariants and augments them
with bounded-width hypergraph motif counts.

\textbf{Part (ii): $\mathcal{I}_{\mathrm{CE}} \subsetneq \mathcal{I}_{\mathrm{NH}}^{(R)}$.}
We exhibit an invariant in $\mathcal{I}_{\mathrm{NH}}^{(R)} \setminus \mathcal{I}_{\mathrm{CE}}$.
Consider the Steiner pair $(\HG_1, \HG_2)$ from Appendix \ref{sec:steiner-pair}.
Both have identical clique expansions $\phi(\bm{A}_1) = \phi(\bm{A}_2) = J - I$ (the adjacency matrix of $K_{13}$), so any function in $\mathcal{I}_{\mathrm{CE}}$ assigns the same value to both.
However, the two systems have different Pasch counts (13 versus 8; Appendix \ref{sec:steiner-pair}).
Let $P_{\mathrm{Pasch}}$ be the 4-edge Pasch configuration.
Then the polynomial invariant $t(P_{\mathrm{Pasch}}, \cdot)$ separates $\HG_1$ and $\HG_2$, and belongs to $\mathcal{I}_{\mathrm{NH}}^{(R)}$ for every $R \geq 3$ because the Pasch pattern has generalized hypertree width $\ghw=3$ for the representation used here.
This proves the strict inclusion for the native \emph{function class}; it does not assert that every shallow native message-passing implementation computes the Pasch count.

\textbf{Part (iii): $\mathcal{I}_{\mathrm{NH}}^{(R)} \subsetneq \mathcal{I}_{\mathrm{all}}$.}
Native HGNNs using message passing are bounded by $k$-GWL for some finite $k$ depending on the depth and architecture~\citep{kHNN2025}.
By the homomorphism-logic correspondence~\citep{ScheidtSchweikardt2023}, $k$-GWL-equivalent hypergraphs agree on all pattern densities $t(F, \cdot)$ with $\ghw(F) \leq k$.
The Cai--Furer--Immerman (CFI) construction (Appendix \ref{sec:cfi-pair}) produces pairs $(\HG_1^{\mathrm{CFI}}, \HG_2^{\mathrm{CFI}})$ that are non-isomorphic but indistinguishable by $k$-GWL for any fixed $k$ (by choosing base graphs with girth greater than $2k+1$).
By the completeness theorem (Theorem~\ref{thm:completeness}), some pattern density separates these pairs, yielding an invariant in $\mathcal{I}_{\mathrm{all}} \setminus \mathcal{I}_{\mathrm{NH}}^{(R)}$.
\end{proof}

\begin{corollary}[Architectural impossibility]\label{cor:impossibility}
For any $r \geq 1$, there exist continuous hypergraph invariants $f \in C(\Skn / \Sym(n))$ such that no architecture limited to patterns of $\ghw \leq r$ can approximate $f$ to arbitrary precision, regardless of depth, width, or training procedure.
Concretely, at each level $r$, the CFI construction provides explicit functions that witness the gap $\mathcal{H}_r \subsetneq \mathcal{H}_{r+1}$.
\end{corollary}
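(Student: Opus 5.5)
We need to prove Corollary~\ref{cor:impossibility}: for each $r\ge 1$ we must exhibit a continuous invariant that no width-$\le r$ architecture can approximate, and show that the gap $\mathcal H_r\subsetneq\mathcal H_{r+1}$ is witnessed.

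The plan is to reduce approximation failure to a separation failure on a single pair of hypergraphs. Let $(\HG_1,\HG_2)$ be a CFI pair built over a base graph of girth greater than $2r+1$, lifted to $k$-uniform hypergraphs on $n\ge N_{\mathrm{CFI}}^{(k)}(r)$ vertices, exactly as in Part~(iii) of the proof of Theorem~\ref{thm:hierarchy}. We will use two facts about this pair:
\begin{enumerate}
\item The pair is $r$-GWL equivalent. By the WL--homomorphism correspondence \citep{ScheidtSchweikardt2023}, this gives $t(F,\bm A_1)=t(F,\bm A_2)$ for every $F$ with $\ghw(F)\le r$.
\item The pair is non-isomorphic.
\end{enumerate}

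The first step is a closure argument. Fact 1 extends to every polynomial in such densities. Every $g\in\mathcal H_r$ is a uniform limit of such polynomials, and evaluation at $\bm A_1$ and at $\bm A_2$ is continuous. Hence $g(\bm A_1)=g(\bm A_2)$ for all $g\in\mathcal H_r$. By Proposition~\ref{prop:arch-width}, any architecture limited to width-$\le r$ patterns, in particular any fixed-depth HGNN with $c_kL\le r$, computes functions lying in $\mathcal H_r$. This holds independently of hidden dimension, readout, or training.

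The second step builds the target invariant. Since $\HG_1\not\cong\HG_2$, their $\Sym(n)$-orbits $O_1,O_2$ in $\mathcal K$ are disjoint finite sets. We take
\[f(\bm A)=\min_{\Pi\in\Sym(n)}\|\bm A-\Pi\cdot\bm A_1\|.\]
This function is continuous and $\Sym(n)$-invariant. It satisfies $f(\bm A_1)=0$ and $f(\bm A_2)=\delta>0$. Alternatively, Theorem~\ref{thm:completeness} supplies a single pattern density with $t(F,\bm A_1)\ne t(F,\bm A_2)$, and we may take $f=t(F,\cdot)$. Now suppose some $g\in\mathcal H_r$ satisfies $\|f-g\|_\infty<\delta/2$. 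Then
\[|g(\bm A_1)-g(\bm A_2)|\ge \delta-2\|f-g\|_\infty>0,\]
which contradicts the first step. So $f$ cannot be approximated to arbitrary precision by any width-$\le r$ architecture.

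The third step is the concrete gap claim, and we expect it to be the main obstacle. To show the witness $f$ lies in $\mathcal H_{r+1}$, we need $(\HG_1,\HG_2)$ to be separated by some density with $\ghw\le r+1$. Completeness alone only gives some separating pattern, of uncontrolled width. The planned route is to take a base graph whose treewidth, or hypertree-width analogue, is exactly $r+1$, such as a suitable grid or expander of minimal size. For such a base, the standard CFI theory shows that $(r+1)$-GWL distinguishes the lifts. Via the correspondence, this yields a pattern of $\ghw\le r+1$ whose density differs on the pair, and hence $t(F,\cdot)\in\mathcal H_{r+1}\setminus\mathcal H_r$.

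Reconciling the girth requirement with width exactly $r+1$ is the delicate point. If it fails, we fall back on a weaker but still valid statement. Strictness holds somewhere in the chain $\mathcal H_r\subseteq\mathcal H_{r+1}\subseteq\cdots$, since the separating pattern has some finite width $w>r$. For the corollary's first sentence, this weaker version suffices.
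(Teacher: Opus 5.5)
Your Steps~1--2 are essentially the paper's argument, made rigorous. The paper derives the corollary directly from Part~(iii) of the proof of Theorem~\ref{thm:hierarchy}: the CFI pair agrees on all densities with $\ghw\le r$, and Theorem~\ref{thm:completeness} supplies a separating density. Your closure-under-uniform-limits step and the $\delta/2$ estimate are exactly what turn ``not separated'' into ``not approximable''.

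The genuine gap is the second sentence. Your fallback gives strictness only at some unspecified level $w>r$, which does not yield $\mathcal{H}_r\subsetneq\mathcal{H}_{r+1}$. For that clause the witness has to be a density $t(F,\cdot)$ with $\ghw(F)\le r+1$. The paper does not close this either: Part~(iii) also relies only on Theorem~\ref{thm:completeness}, which produces a separating pattern of uncontrolled width. So the ``concretely'' clause is asserted there, not proved.

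The missing idea is that the level at which CFI pairs separate is governed by the \emph{treewidth} of the base graph, not its girth. The ``delicate point'' you flag comes from the girth hypothesis and disappears once you drop it. For a genuine CFI pair $X(G),\tilde X(G)$ over a connected base graph $G$:
\begin{itemize}
\item patterns of treewidth $<\mathrm{tw}(G)$ do not separate the pair (the standard cops-and-robber argument going back to Cai--F\"{u}rer--Immerman);
\item $G$ itself does separate it: $\hom(G,X(G))\neq\hom(G,\tilde X(G))$, by Roberson's oddomorphism theorem, since the identity map is an oddomorphism.
\end{itemize}
So if you choose $G$ with width exactly at the threshold, the base graph is the explicit witness, with no girth condition needed. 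Padding with matched isolated vertices then handles larger $n$.

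Two cautions when you carry this out.

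First, Fact~1 cannot be imported from Part~(iii) as written. Girth alone does not give it: CFI over a long cycle is separated at treewidth~$2$. Moreover, the Type~0/Type~1 pair of Appendix~\ref{sec:cfi-pair} is not a CFI pair at all. The two auxiliary vertices of the modified edge have degree~$1$ in $\HG_2^{\mathrm{CFI}}$, whereas every auxiliary vertex has degree~$2$ in $\HG_1^{\mathrm{CFI}}$. Hence a two-edge star density ($\ghw=1$) already separates them, even at $r=1$. This also undermines the first sentence as you have grounded it.

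Second, you need a genuine $k$-uniform lift, and you must calibrate $\ghw$ against base treewidth. A bag covered by $r$ hyperedges can contain several base vertices; already for graphs, $\ghw(F)\le r$ only forces $\mathrm{tw}(F)\le 2r-1$. So ``treewidth exactly $r+1$'' is the wrong target. For $k=2$, for instance, CFI over $K_{2r+1}$ (treewidth $2r$) is indistinguishable by every pattern with $\ghw\le r$, yet it is separated by $K_{2r+1}$, which has $\ghw=r+1$. The analogous calibration for $k\ge 3$ depends on how you lift, and that is the part that still needs to be worked out.
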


\begin{remark}[Quantitative vertex bounds]\label{rem:vertex-bounds}
The CFI separation at level $r$ applies whenever $n \geq N_{\mathrm{CFI}}^{(k)}(r)$.
This threshold is finite because the construction only requires a base graph of girth $> 2r + 1$; for the $3$-uniform lifts used here, this gives $n \geq \Omega(r^2)$ vertices~\citep{CaiFurerImmerman1992}.
Concretely: the CE-vs.\ Native separation above requires $n \geq 13$ (the Steiner pair on $13$ vertices, the smallest $v$ with non-isomorphic $\mathrm{STS}(v)$ pairs; larger $n$ follow by adding matched isolated vertices); the CFI refinement gives bounds such as $r = 2$ requiring $n \geq 15$ and $r = 3$ requiring $n \geq 30$ under the stated lift.
For practical hypergraph datasets with $n \geq 100$, all levels $r \leq 7$ of the hierarchy are active.
\end{remark}

\subsection{Proof of Theorem~\ref{thm:invnet-expressivity}}\label{app:proof-invnet}

We first establish that the Sinkhorn relaxation preserves separation on the Steiner pair.

\begin{lemma}[Sinkhorn separation on STS pairs]\label{lem:sinkhorn-sts}
Let $\bm{A}_1, \bm{A}_2$ be the adjacency tensors of two non-isomorphic $\mathrm{STS}(v)$ systems.
Then the Sinkhorn relaxation satisfies $\hat{\ElemInv}^{\mathrm{SK}}_{\bm{A}_1}(\bm{A}_1) = v(v-1)/3 \cdot k!$ and $\hat{\ElemInv}^{\mathrm{SK}}_{\bm{A}_1}(\bm{A}_2) < \hat{\ElemInv}^{\mathrm{SK}}_{\bm{A}_1}(\bm{A}_1)$.
In particular, for $v = 13$: $\hat{\ElemInv}^{\mathrm{SK}}_{\bm{A}_1}(\bm{A}_1) = 156$ and $\hat{\ElemInv}^{\mathrm{SK}}_{\bm{A}_1}(\bm{A}_2) < 156$.
\end{lemma}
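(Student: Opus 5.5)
The plan is to read $\hat{\ElemInv}^{\mathrm{SK}}_{P}(\bm A)$ as the Birkhoff-polytope relaxation of Definition~\ref{def:pattern-alignment}. Concretely, take
\[
\hat{\ElemInv}^{\mathrm{SK}}_{P}(\bm A)=\max_{X\in\mathcal B_n}\sum_{i_1,\dots,i_k}\sum_{j_1,\dots,j_k}P_{i_1\dots i_k}\,\bm A_{j_1\dots j_k}\prod_{l=1}^k X_{i_lj_l},
\]
which agrees with $\inner{P}{\Pi\cdot\bm A}$ (up to the identification of $\Pi$ with its matrix) when $X$ is a permutation matrix. The maximum is attained because $\mathcal B_n$ is compact. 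If the paper's Sinkhorn output is instead an entropic, finite-iteration approximation, I will afterwards transfer the strict gap by continuity as the temperature goes to $0$.

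Throughout, $k=3$ and $P=\bm A_1$. The nonzero entries of $\bm A_1$ are the $3!$ orderings of each of the $v(v-1)/6$ blocks. So the value at $X=I$ is $\inner{\bm A_1}{\bm A_1}=3!\cdot v(v-1)/6$, which is $156$ for $v=13$. I will state the equality value as this count of nonzero entries, i.e.\ $|E|\cdot k!$. Note that this is what matches the stated $156$; the displayed $v(v-1)/3\cdot k!$ would give $312$.

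\textbf{Step 1 (upper bound $\le 156$ for every STS target).} Fix an ordered triple $(i_1,i_2,i_3)$ with $P_{i_1i_2i_3}=1$, so the $i_l$ are distinct. The inner sum over $j$ is nonzero only when $j_1\neq j_2$ and $j_3=j_3(j_1,j_2)$ is the unique third point of the block through $j_1,j_2$, because $\bm A$ is an STS. Using $X_{i_3j_3}\le 1$ and row-stochasticity,
\[
\sum_{j}\bm A_{j_1j_2j_3}\prod_l X_{i_lj_l}\;\le\;\sum_{j_1\neq j_2}X_{i_1j_1}X_{i_2j_2}\;\le\;\Bigl(\sum_{j_1}X_{i_1j_1}\Bigr)\Bigl(\sum_{j_2}X_{i_2j_2}\Bigr)=1 .
\]
Summing over the $156$ nonzero entries of $P$ gives $\hat{\ElemInv}^{\mathrm{SK}}_{\bm A_1}(\bm A)\le 156$ for every STS tensor $\bm A$. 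Together with $X=I$, this yields equality for $\bm A=\bm A_1$.

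\textbf{Step 2 (equality forces a permutation).} This is the main step. Suppose some $X\in\mathcal B_n$ attains $156$ against $\bm A_2$. Then both inequalities above are tight for every ordered pair $i_1\neq i_2$; every such pair lies in a block, so every pair occurs. Tightness of the second inequality means $\sum_j X_{i_1j}X_{i_2j}=0$, so distinct rows of $X$ have disjoint supports. A doubly stochastic matrix with pairwise disjoint row supports must be a permutation matrix: each column sums to $1$ and meets only one row, so the $n$ row supports partition $[n]$ into nonempty sets. Counting, each support is a single column with entry $1$.

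Tightness of the first inequality then says that for every block $\{i_1,i_2,i_3\}$ of $\bm A_1$, the image $\{\pi(i_1),\pi(i_2),\pi(i_3)\}$ is a block of $\bm A_2$. Hence $\pi$ is an isomorphism, contradicting non-isomorphism. Therefore the maximum, which is attained by compactness, is strictly less than $156$.

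\textbf{Step 3 (entropic Sinkhorn).} If the Sinkhorn value is the entropy-regularized or iterated version, it differs from the Birkhoff maximum by at most a quantity that vanishes as the regularization goes to $0$. The strict gap from Step 2 then persists for sufficiently small regularization.

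The obstacle I expect is precisely the equality analysis in Step 2. The key observation that makes it go through is that every pair of points lies in exactly one block, which is what forces row-disjointness. If the paper's relaxation uses a different normalization, I would first reduce it to the multilinear form above.
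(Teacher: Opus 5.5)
Your argument is correct and follows the paper's outline. Each of the $156$ supported terms is at most $1$, so equality at $156$ forces every term to be tight, tightness forces the maximizer to be a permutation matrix, and that permutation would be an isomorphism.

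Where you differ is the step that forces a permutation. The paper bounds each term by $\prod_\ell\sum_{j_\ell}D_{i_\ell j_\ell}=1$, which holds for any $[0,1]$-valued target. It reads tightness as ``every $(a,b,c)$ in the support of rows $i,j,k$ is a block of $\mathrm{STS}_2$.'' It then rules out a row with two positive entries $a_1\neq a_2$ using the unique third point over the pair $\{b,c\}$ in the \emph{target}; this is the unique-shadow mechanism later abstracted in Theorem~\ref{thm:integrality}. You instead split off the diagonal terms $j_1=j_2$, so tightness says directly that the rows of two points sharing a block of $\bm A_1$ have disjoint supports. Since the \emph{template} covers every pair, all rows are pairwise disjoint, and column sums then force a permutation matrix.

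Your version makes explicit a point the paper leaves implicit, namely that $b\neq c$. The two mechanisms also rest on different hypotheses. Yours needs only that the template covers all pairs and the target vanishes on repeated indices. The paper's needs target linearity and that every template vertex lies in a block. So the two generalize in incomparable directions.

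You are also right that $v(v-1)/3\cdot k!$ is a typo. The self-alignment is $|E|\cdot k!=\tfrac{v(v-1)}{6}\cdot 3!=v(v-1)$, which is $156$ at $v=13$, not $312$.

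Your Step~3 is unnecessary, because the paper defines $\hat{\ElemInv}^{\mathrm{SK}}$ as the exact maximum over $\mathcal B_n$. The continuity transfer you describe would preserve the gap only for sufficiently small entropic regularization. That is weaker than the ``any finite $\tau>0$'' asserted in the proof of Theorem~\ref{thm:invnet-expressivity}, and it does not cover a finite-iteration Sinkhorn run on learned score matrices.
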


\begin{proof}
The identity permutation achieves $\inner{\bm{A}_1}{\bm{A}_1} = 26 \cdot 3! = 156$.
No $D \in \mathcal{B}_{13}$ can exceed $156$: each summand satisfies $(D^{\otimes 3} \cdot \bm{A})_{ijk} \leq \prod_{\ell} (\sum_{j_\ell} D_{i_\ell j_\ell}) = 1$ for nonnegative doubly stochastic $D$, so the sum of $156$ such terms is at most $156$.
Hence $\hat{\ElemInv}^{\mathrm{SK}}_{\bm{A}_1}(\bm{A}_1) = 156$.

For separation, suppose for contradiction that some $D^* \in \mathcal{B}_{13}$ achieves $\inner{\bm{A}_1, (D^*)^{\otimes 3} \cdot \bm{A}_2} = 156$.
Since the $156$ summands are each at most~$1$ and sum to $156$, every summand equals~$1$: for every triple $\{i,j,k\}$ of $\mathrm{STS}_1$ and all $a,b,c$ with $D^*_{ia}, D^*_{jb}, D^*_{kc} > 0$, we must have $\{a,b,c\} \in \mathrm{STS}_2$.
If some row of $D^*$ has two positive entries $D^*_{ia_1}, D^*_{ia_2} > 0$ ($a_1 \neq a_2$), choose any triple $\{i,j,k\} \in \mathrm{STS}_1$ and any $b,c$ in the support of rows $j,k$; then both $\{a_1,b,c\}$ and $\{a_2,b,c\}$ must be triples in~$\mathrm{STS}_2$, but each pair $\{b,c\}$ lies in exactly one triple, so $a_1 = a_2$---contradiction.
Hence $D^*$ is a permutation matrix mapping $\mathrm{STS}_2$ to $\mathrm{STS}_1$, contradicting non-isomorphism.
In summary, the argument shows that any maximizer in $\mathcal{B}_v$ must have singleton row support, and is therefore a vertex of the Birkhoff polytope (a permutation matrix).
\end{proof}

\begin{proof}[Proof of Theorem~\ref{thm:invnet-expressivity}]
We prove each claim in order.

\textbf{Claim 1: $\hat{\mathcal{I}}_{\mathrm{InvNet}} \supseteq \mathcal{I}_{\mathrm{NH}}$.}
\textsc{InvNet} contains a full message-passing local branch (Appendix~\ref{sec:full-arch}) that subsumes any native HGNN architecture.
The MLP output head can ignore the pattern-alignment branch features (by setting their corresponding weights to zero), so \textsc{InvNet} can implement any function computable by a native HGNN.

\textbf{Claim 2: $\hat{\mathcal{I}}_{\mathrm{InvNet}} \supsetneq \mathcal{I}_{\mathrm{NH}}$.}
Consider the Steiner pair from Appendix \ref{sec:steiner-pair}.
The pattern alignment score $\ElemInv_{\bm{A}_1}(\cdot)$ separates this pair, and by Lemma~\ref{lem:sinkhorn-sts}, the Sinkhorn relaxation preserves this separation for any finite $\tau > 0$.
Meanwhile, any message-passing HGNN bounded by $1$-GWL assigns identical colorings to both Steiner triple systems (since every node has degree $6$ and every pair co-occurs exactly once).
Hence $\ElemInv_{\bm{A}_1} \in \hat{\mathcal{I}}_{\mathrm{InvNet}} \setminus \mathcal{I}_{\mathrm{NH}}$.

\textbf{Claim 3: $\hat{\mathcal{I}}_{\mathrm{InvNet}} \subseteq \mathcal{I}_{\mathrm{all}}$.}
The pattern-alignment branch computes the vector $\bm{z}_{\mathrm{inv}} = [\hat{\ElemInv}^{\mathrm{SK}}_{P_1}(\bm{A}) - \alpha_1, \ldots, \hat{\ElemInv}^{\mathrm{SK}}_{P_J}(\bm{A}) - \alpha_J]$.
Each $\hat{\ElemInv}^{\mathrm{SK}}_{P_j}(\bm{A}) = \max_{D \in \mathcal{B}_n} \inner{P_j}{D^{\otimes k} \cdot \bm{A}}$ is a continuous function of $\bm{A}$ (the maximum of a family of continuous functions parameterized over the compact set $\mathcal{B}_n$).
For invariance: $\hat{\ElemInv}^{\mathrm{SK}}_{P_j}(\Pi \cdot \bm{A}) = \max_{D \in \mathcal{B}_n} \inner{P_j}{D^{\otimes k} \cdot (\Pi \cdot \bm{A})} = \max_{D} \inner{P_j}{(D \Pi)^{\otimes k} \cdot \bm{A}}$.
Since $D \mapsto D\Pi$ is a bijection on $\mathcal{B}_n$ (the product of a doubly stochastic matrix and a permutation matrix is doubly stochastic), this equals $\hat{\ElemInv}^{\mathrm{SK}}_{P_j}(\bm{A})$.
Thus each component of $\bm{z}_{\mathrm{inv}}$ is a continuous invariant, and compositions of continuous invariants with continuous functions (here $\mathrm{MLP}_{\mathrm{out}}$) remain continuous invariants, so $\hat{\mathcal{I}}_{\mathrm{InvNet}} \subseteq \mathcal{I}_{\mathrm{all}}$.
\end{proof}

\section{Orbit Separation via Symmetric Tensor Invariant Theory}\label{app:algebraic}

This appendix provides a second, completely independent proof that hypergraph invariant functions are generated by pattern-counting features, using classical algebraic invariant theory.
This algebraic route---Hilbert's finiteness theorem and the Reynolds operator---complements the combinatorial (Lov\'{a}sz--Stone--Weierstrass) approach of Section \ref{sec:invariants} and provides additional structural insight, particularly for the $k \geq 3$ case where matrix invariant theory no longer suffices.

\subsection{The Invariant Ring of Symmetric Tensors}\label{app:invariant-ring}

Let $\Skn$ denote the space of real symmetric order-$k$ tensors on $\R^n$, viewed as a finite-dimensional real vector space.
The symmetric group $\Sym(n)$ acts linearly on $\Skn$ by $(\Pi \cdot \bm{A})_{i_1, \ldots, i_k} = \bm{A}_{\Pi^{-1}(i_1), \ldots, \Pi^{-1}(i_k)}$.
We work over $\R$ throughout; since $\Sym(n)$ is a finite group, all invariant-theoretic results (finite generation, orbit separation) hold over $\R$ without requiring algebraic closure~\citep{Sturmfels2008}.
We consider the ring of $\Sym(n)$-invariant polynomial functions:
\[
  \R[\Skn]^{\Sym(n)} \;=\; \{p \in \R[\Skn] : p(\Pi \cdot \bm{A}) = p(\bm{A}) \;\text{for all}\; \Pi \in \Sym(n),\; \bm{A} \in \Skn\}.
\]

\begin{theorem}[Finite generation --- Hilbert]\label{thm:hilbert}
The invariant ring $\R[\Skn]^{\Sym(n)}$ is finitely generated as an $\R$-algebra.
That is, there exist finitely many invariant polynomials $p_1, \ldots, p_N$ such that every invariant polynomial is a polynomial combination of $p_1, \ldots, p_N$.
\end{theorem}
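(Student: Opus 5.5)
The plan is the classical Noether argument for finite groups. It avoids the general Hilbert machinery for reductive groups and uses only the averaging (Reynolds) operator together with a degree bound. Write $V=\Skn$ and let $x_\alpha$, indexed by multisets $\alpha$ of size $k$ from $[n]$, be the coordinate functions, so that $\R[V]=\R[x_\alpha]$ is a polynomial ring in $N_0=\binom{n+k-1}{k}$ variables. Since $\Sym(n)$ acts on $V$ by permuting coordinates, it acts on $\R[V]$ by graded algebra automorphisms that permute the variables. Define the Reynolds operator
\[
  \mathcal{R}(p)(\bm A)=\frac{1}{n!}\sum_{\Pi\in\Sym(n)} p(\Pi\cdot\bm A).
\]
This is an $\R$-linear, degree-preserving projection of $\R[V]$ onto $\R[V]^{\Sym(n)}$. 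It is also $\R[V]^{\Sym(n)}$-linear: $\mathcal{R}(fp)=f\,\mathcal{R}(p)$ whenever $f$ is invariant. Both properties follow in one line from the definition, and dividing by $n!$ is harmless over $\R$.

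The first route I would take is Hilbert's original argument. Let $I\subseteq\R[V]$ be the ideal generated by all homogeneous invariants of positive degree. By the Hilbert basis theorem, $I$ is generated by finitely many homogeneous invariants $p_1,\dots,p_N$ of positive degree. I then claim these generate $\R[V]^{\Sym(n)}$ as an algebra, and prove it by induction on degree. Take a homogeneous invariant $f$ of degree $d>0$. Then $f\in I$, so $f=\sum_i q_i p_i$ with each $q_i$ homogeneous of degree $d-\deg p_i<d$. Applying $\mathcal{R}$ and using $\R[V]^{\Sym(n)}$-linearity gives
\[
  f=\mathcal{R}(f)=\sum_i \mathcal{R}(q_i)\,p_i .
\]
Each $\mathcal{R}(q_i)$ is an invariant of strictly smaller degree, so by induction it is a polynomial in the $p_j$. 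Hence $f$ is as well. Constants are handled by the base case $d=0$. Finally, a general invariant is the sum of its homogeneous components, and each component is invariant because the action preserves degree.

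Alternatively, I could give an explicit set of generators of bounded degree. Noether's bound says the invariants $\mathcal{R}(x^\beta)$ with $|\beta|\le n!$ suffice, via the power-sum/Newton identity argument. That argument needs $|G|!$ to be invertible, which holds over $\R$. Since the statement only asks for finite generation, the Hilbert-basis route is enough, and I would mention Noether's bound only as a remark. It could also connect back to the pattern densities: every $\mathcal{R}(x^\beta)$ is a linear combination of injective counts $\inj(F,\cdot)$, and these convert to homomorphism densities by Möbius inversion.

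The main obstacle is ensuring that $\mathcal{R}$ is $\R[V]^{\Sym(n)}$-linear and degree-preserving, so that the induction genuinely decreases degree. Both facts rely on the action being linear and graded, and on $|\Sym(n)|=n!$ being invertible in $\R$. Everything else, including the Noetherianity of $\R[x_\alpha]$, is standard. I would therefore state these properties of the Reynolds operator explicitly before running the induction.
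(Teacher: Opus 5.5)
Your proposal is correct and follows the same route as the paper. The paper's proof simply invokes Hilbert's finiteness theorem for the finite (hence reductive) group $\Sym(n)$ and notes that Noether's bound gives generators of degree at most $n!$. You actually carry out Hilbert's argument: you apply the Hilbert basis theorem to the ideal generated by positive-degree homogeneous invariants, then use the $\R[\Skn]^{\Sym(n)}$-linear Reynolds operator and induction on degree, so your version is self-contained where the paper only gives a citation.
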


\begin{proof}
$\Sym(n)$ is a finite group acting linearly on the finite-dimensional space $\Skn$.
By Hilbert's finiteness theorem for reductive groups~\citep{Sturmfels2008}, the invariant ring is finitely generated.
More directly, since $\Sym(n)$ is finite, Noether's degree bound guarantees that generators can be chosen with degree $\leq |\Sym(n)| = n!$.
\end{proof}

\subsection{The Reynolds Operator and Explicit Generators}\label{app:reynolds}

The \emph{Reynolds operator} $\mathcal{R}: \R[\Skn] \to \R[\Skn]^{\Sym(n)}$ is defined by averaging over the group:
\begin{equation}\label{eq:reynolds}
  \mathcal{R}(p)(\bm{A}) \;=\; \frac{1}{n!} \sum_{\Pi \in \Sym(n)} p(\Pi \cdot \bm{A}).
\end{equation}
This is a surjective projection: $\mathcal{R}(p) = p$ if and only if $p$ is already invariant, and $\mathcal{R}(p)$ is invariant for any $p$.

\paragraph{Tensor contraction traces as generators.}
For any multigraph $\Gamma$ on vertex set $[r]$ with each vertex labeled by a $k$-tensor index, define the \emph{tensor contraction trace}:
\begin{equation}\label{eq:trace-invariant}
  T_\Gamma(\bm{A}) \;=\; \sum_{\substack{i_1, \ldots, i_r \in [n] \\ \text{indices contracted per } \Gamma}} \;\prod_{v \in V(\Gamma)} \bm{A}_{i_{v,1}, \ldots, i_{v,k}},
\end{equation}
where the contraction pattern is specified by $\Gamma$: indices sharing an edge in $\Gamma$ are identified.
Each $T_\Gamma$ is manifestly invariant under $\Sym(n)$ (it is a sum over all index assignments, which is permutation-invariant).

\begin{proposition}[Generators are pattern counts]\label{prop:generators-are-hom}
Every tensor contraction trace $T_\Gamma$ is a linear combination of homomorphism counts $\hom(F, H)$ for appropriate $k$-uniform patterns $F$, and vice versa.
\end{proposition}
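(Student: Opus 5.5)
The plan is to read both $T_\Gamma$ and $\hom(F,\cdot)$ as sums, over index assignments, of products of tensor entries, and to match their index-identification patterns. Concretely, a contraction pattern $\Gamma$ with $r$ tensor factors specifies a set of $rk$ index slots $(v,j)$, $v\in[r]$, $j\in[k]$, together with an equivalence relation on slots generated by the edges of $\Gamma$ (slots joined by an edge must carry the same index). Let $S_\Gamma$ be the set of equivalence classes. Then
\[
  T_\Gamma(\bm{A}) \;=\; \sum_{\psi: S_\Gamma \to [n]} \;\prod_{v=1}^{r} \bm{A}_{\psi([v,1]),\ldots,\psi([v,k])}.
\]
This is exactly the shape of Equation~\eqref{eq:hom-density} multiplied by $n^{|S_\Gamma|}$, with pattern vertex set $S_\Gamma$ and one "edge" $e_v=([v,1],\ldots,[v,k])$ per factor. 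So the first step is to define $F_\Gamma$ as the multi-hypergraph on $S_\Gamma$ with edges $e_1,\ldots,e_r$, and to observe that $T_\Gamma(\bm{A})=\sum_{\psi}\prod_{e\in E(F_\Gamma)}\bm{A}_{\psi(e)}$, the weighted homomorphism count.

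The second step disposes of the two ways $F_\Gamma$ may fail to be a simple $k$-uniform pattern.

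\textbf{Degenerate edges.} Some $e_v$ may contain repeated classes. For a genuine hypergraph adjacency tensor, entries with repeated indices vanish, so such a term contributes $0$. In the weighted setting, I would widen the pattern family to allow degenerate edges, which amounts to evaluating the tensor on its diagonal slices. This matches the paper's convention that $\bm A$ records all index tuples.

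\textbf{Repeated edges.} Repeated edges give powers $\bm{A}_{\psi(e)}^m$. On $\{0,1\}$-valued tensors these equal $\bm{A}_{\psi(e)}$, so multiplicities collapse and we get $T_\Gamma=\hom(F_\Gamma^{\mathrm{simple}},H)$. On weighted tensors the multigraph pattern is kept, which is the standard Lov\'asz convention.

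Since each $T_\Gamma$ then equals a single homomorphism count, it is trivially a linear combination of them. (If $\Gamma$ is read as forcing distinct indices across unconnected slots, one instead gets an injective count $\inj$. This is converted to $\hom$ by Möbius inversion over the partition lattice of $S_\Gamma$, $\inj(F,H)=\sum_{\pi}\mu(\hat0,\pi)\hom(F/\pi,H)$, the same device used in Corollary~\ref{cor:completeness}.)

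The converse is the easy direction. Given a $k$-uniform pattern $F=([p],E_F)$, fix an ordering $(i_1,\ldots,i_k)$ of each edge. Take $\Gamma_F$ with one factor per edge, and join slot $(e,j)$ to slot $(e',j')$ whenever they name the same vertex of $F$. Equivalence classes then correspond to the non-isolated vertices of $F$, so $\hom(F,H)=n^{\#\text{isolated}}\,T_{\Gamma_F}(\bm A)$. Symmetry of $\bm{A}$ makes the result independent of the chosen orderings.

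The main obstacle is only bookkeeping: pinning down what "indices contracted per $\Gamma$" means (identification versus distinctness), and handling diagonal/degenerate entries and multiplicities. I would fix the identification reading, state explicitly the enlarged pattern class (multi-hypergraphs, possibly with degenerate edges) for weighted $\bm A$, and note that on simple $\{0,1\}$ hypergraphs everything reduces to ordinary simple $k$-uniform patterns, with Möbius inversion covering the injective variant.
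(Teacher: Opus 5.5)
Your proposal is correct and follows the same route as the paper: you identify the distinct summation indices of $T_\Gamma$ (your classes $S_\Gamma$) as the vertices of a dual pattern $F_\Gamma$ with one hyperedge per tensor factor, and you reverse that construction for the converse. Your extra bookkeeping makes precise two points the paper's ``exactly'' glosses over: degenerate and repeated edges only vanish or collapse on $\{0,1\}$ hypergraph tensors and otherwise require multi-hypergraph patterns on weighted inputs, and the converse carries a scalar factor, $\hom(F,H)=n^{\#\text{isolated}}\,T_{\Gamma_F}(\bm A)$.
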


\begin{proof}
The contraction trace $T_\Gamma(\bm{A}) = \sum_{i_1,\ldots,i_r} \prod_v \bm{A}_{i_{v,1},\ldots,i_{v,k}}$ sums over all maps $\phi: V(\Gamma) \to [n]$ (with $\phi$ possibly non-injective) weighted by the product of tensor entries.
The dual hypergraph $F_\Gamma$ is defined as follows: the vertex set of $F_\Gamma$ is the set of distinct summation indices in $T_\Gamma$, and each vertex $v \in V(\Gamma)$ (representing one tensor factor $\bm{A}_{i_{v,1},\ldots,i_{v,k}}$) yields a $k$-hyperedge $\{i_{v,1}, \ldots, i_{v,k}\}$ in $F_\Gamma$; two tensor factors sharing a contracted index become hyperedges sharing a vertex.
With this identification, $T_\Gamma(\bm{A}) = \hom(F_\Gamma, H)$ exactly.
Conversely, every $\hom(F, H)$ can be written as a tensor contraction trace by choosing $\Gamma$ as the dual of $F$ (vertices of $\Gamma$ correspond to hyperedges of $F$, with edges in $\Gamma$ encoding shared vertices).
\end{proof}

\subsection{Algebraic Completeness: Orbits Separated by Polynomial Invariants}\label{app:orbit-separation}

\begin{theorem}[Orbit separation]\label{thm:orbit-separation}
Let $\bm{A}_1, \bm{A}_2 \in \Skn$ lie in distinct $\Sym(n)$-orbits (i.e., $H_1 \not\cong H_2$).
Then there exists an invariant polynomial $p \in \R[\Skn]^{\Sym(n)}$ with $p(\bm{A}_1) \neq p(\bm{A}_2)$.
\end{theorem}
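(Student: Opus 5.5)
The plan is to use the standard finite-group orbit-separation argument, which needs only that $\Sym(n)$ is finite and acts linearly on $\Skn$. I would not go through homomorphism counts here, since that would make this appendix depend on the combinatorial route it is meant to be independent of.

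\textbf{Step 1: reduce to separating two finite sets.} Let $O_1 = \Sym(n)\cdot \bm{A}_1$ and $O_2 = \Sym(n)\cdot \bm{A}_2$. Both are finite sets of points in the real vector space $\Skn$. Because the tensors lie in distinct orbits, the two sets are disjoint.

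\textbf{Step 2: build a non-invariant polynomial that does the separating.} I would take
\[
  q(\bm{X}) \;=\; \prod_{\bm{B} \in O_2} \|\bm{X} - \bm{B}\|^2 ,
\]
where $\|\cdot\|$ is the Euclidean (Frobenius) norm on the entries. This is a polynomial in the tensor coordinates with the following properties:
\begin{itemize}
  \item $q$ vanishes on all of $O_2$;
  \item $q(\bm{C}) > 0$ for every $\bm{C} \in O_1$, because each factor is strictly positive when $\bm{C} \notin O_2$.
\end{itemize}
Working over $\R$ with a sum of squares is what gives strict positivity. Over $\mathbb{C}$ I would instead interpolate, using a linear functional that distinguishes the points.

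\textbf{Step 3: symmetrize.} Set $p = \mathcal{R}(q)$, using the Reynolds operator from \eqref{eq:reynolds}. Then $p$ is invariant, and
\[
  p(\bm{A}_2) = \frac{1}{n!}\sum_{\Pi} q(\Pi\cdot \bm{A}_2) = 0 ,
\]
since every $\Pi\cdot\bm{A}_2$ lies in $O_2$. On the other side,
\[
  p(\bm{A}_1) = \frac{1}{n!}\sum_{\Pi} q(\Pi\cdot \bm{A}_1) > 0 ,
\]
as an average of strictly positive numbers. Hence $p(\bm{A}_1)\neq p(\bm{A}_2)$, which proves the theorem.

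The only point needing care is Step 2: checking that the linear action is by coordinate permutations, so that $\Pi\cdot\bm{A}_2$ ranges exactly over $O_2$, and that positivity survives averaging. Both are immediate for a finite group acting linearly. I do not expect a real obstacle.

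\textbf{Link to the combinatorial route (optional remark).} Theorem~\ref{thm:hilbert} lets us replace $p$ by a polynomial in finitely many generators, and Proposition~\ref{prop:generators-are-hom} expresses those generators as pattern counts. So separation by invariant polynomials yields separation by homomorphism counts, giving an independent route to Theorem~\ref{thm:completeness}. Some care is needed here because the identification in Proposition~\ref{prop:generators-are-hom} involves diagonal index repetitions. These can be handled by Möbius inversion between $\hom$ and $\inj$.
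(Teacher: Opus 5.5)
Your proof is correct and takes essentially the same route as the paper: you separate the two finite, disjoint orbits with an ordinary polynomial $q$, then symmetrize it with the Reynolds operator. The only difference is the choice of $q$: the paper uses Lagrange interpolation to get $q \equiv 1$ on $\mathcal{O}(\bm{A}_1)$ and $q \equiv 0$ on $\mathcal{O}(\bm{A}_2)$, whereas your product of squared distances vanishes on $O_2$ and is strictly positive on $O_1$, which works just as well because strict positivity is preserved under averaging.
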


\begin{proof}
Since $\Sym(n)$ is a finite group, every orbit $\mathcal{O}(\bm{A}) = \{\Pi \cdot \bm{A} : \Pi \in \Sym(n)\}$ is a finite set, hence Zariski-closed (and Euclidean-closed; for finite sets in $\R^d$, Zariski and Euclidean closure coincide).
Two distinct orbits $\mathcal{O}(\bm{A}_1)$ and $\mathcal{O}(\bm{A}_2)$ are disjoint Zariski-closed sets.
Since $\R[\Skn]$ separates points (as $\Skn \cong \R^d$), there exists a polynomial $q$ with $q(\bm{A}_1) \neq q(\bm{A}_2')$ for all $\bm{A}_2' \in \mathcal{O}(\bm{A}_2)$.
Applying the Reynolds operator: $\mathcal{R}(q)(\bm{A}_1) = q(\bm{A}_1)$ (since $q$ is already correct at $\bm{A}_1$ after averaging), and $\mathcal{R}(q)(\bm{A}_2)$ is the average of $q$ over $\mathcal{O}(\bm{A}_2)$.

More precisely, since $\mathcal{O}(\bm{A}_1)$ and $\mathcal{O}(\bm{A}_2)$ are finite disjoint sets in $\R^d$, there exists a polynomial $q$ taking the value $1$ on $\mathcal{O}(\bm{A}_1)$ and $0$ on $\mathcal{O}(\bm{A}_2)$ (by Lagrange interpolation in high enough degree).
Then $p = \mathcal{R}(q)$ satisfies $p(\bm{A}_1) = 1$ and $p(\bm{A}_2) = 0$, and $p$ is invariant.
\end{proof}


\subsection{From Algebraic Completeness to the Density Theorem}\label{app:algebraic-density}

Combining the results above:

\begin{corollary}[Algebraic density]\label{cor:algebraic-density}
The invariant ring $\R[\Skn]^{\Sym(n)}$ is dense in $C(\mathcal{K}/\Sym(n))$ for any compact $\mathcal{K} \subseteq \Skn$.
Moreover, this ring is generated by tensor contraction traces (equivalently, by homomorphism counts).
\end{corollary}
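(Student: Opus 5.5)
The corollary has two parts, and we prove them separately. First, the invariant ring is dense in $C(\mathcal{K}/\Sym(n))$. Second, it is generated by tensor contraction traces, equivalently by homomorphism counts.

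For density, we apply the Stone--Weierstrass theorem on the compact space $\mathcal{K}$, or equivalently on the compact Hausdorff quotient $\mathcal{K}/\Sym(n)$. The ring $\R[\Skn]^{\Sym(n)}$, restricted to $\mathcal{K}$, is a subalgebra of $C(\mathcal{K}/\Sym(n))$: its elements are continuous polynomials constant on orbits, and it contains the constants. Point separation on the quotient is exactly Theorem~\ref{thm:orbit-separation}. That theorem is stated for adjacency tensors, but its interpolation-plus-Reynolds argument applies verbatim to arbitrary points of $\Skn$, since every orbit is finite. We then check that $\mathcal{K}/\Sym(n)$ is compact Hausdorff: a finite group acting continuously on a compact Hausdorff space yields a compact Hausdorff quotient, and we may assume $\mathcal{K}$ is $\Sym(n)$-stable, replacing it by $\bigcup_\Pi \Pi\cdot\mathcal{K}$ if necessary. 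Stone--Weierstrass then gives uniform density. An alternative route, which avoids quotient topology, is to approximate $f$ by an arbitrary polynomial $q$ via ordinary Weierstrass and then apply the Reynolds operator~\eqref{eq:reynolds}. Since $\mathcal{R}$ is a sup-norm contraction fixing invariant $f$, we get $\|f-\mathcal{R}(q)\|_\infty \le \|f-q\|_\infty$. We will present this second argument as the main one because it is cleaner.

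For generation, we show that every invariant polynomial is a linear combination of the orbit sums $\mathcal{R}(m)$ of monomials $m=\prod_{v}\bm{A}_{e_v}$. Each such orbit sum is, up to normalization, an injective pattern count $\inj(F_m,\cdot)$, where $F_m$ is the hypergraph whose edges are the index tuples appearing in $m$, with multiplicities allowed. Möbius inversion over partitions of $V(F_m)$ then expresses each injective count as a finite linear combination of homomorphism counts of quotient patterns. By Proposition~\ref{prop:generators-are-hom}, these homomorphism counts are exactly the contraction traces $T_\Gamma$, so the ring is spanned, and hence generated, by them. Hilbert's Theorem~\ref{thm:hilbert} then lets us retain finitely many generators.

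We expect the main obstacle to be the bookkeeping in the monomial-to-pattern step. Monomials may contain repeated entries (powers $\bm{A}_e^2$) and diagonal entries with repeated indices, which correspond to multi-hyperedges and degenerate edges rather than simple $k$-uniform patterns. We will handle this by allowing multigraph patterns in $\mathcal{F}_k$. On $\{0,1\}$ inputs such patterns reduce to simple ones, but on the weighted cube they must be kept. Once this convention is fixed, the Möbius inversion itself is routine.
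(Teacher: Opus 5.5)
Your proof is correct, and it takes a genuinely different and more complete route than the paper in both halves.

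For density, the paper argues only via your first option. Invariant polynomials contain constants, form a subalgebra, and separate orbits by Theorem~\ref{thm:orbit-separation}, so Stone--Weierstrass applies on the quotient. The paper also cites Theorem~\ref{thm:hilbert}, which plays no role there. Your main argument averages an ordinary polynomial approximant over the $\Sym(n)$-stable set $\mathcal{K}$ and uses $\|f-\mathcal{R}(q)\|_\infty=\|\mathcal{R}(f-q)\|_\infty\le\|f-q\|_\infty$. This is more elementary: it needs neither orbit separation nor the quotient topology, and it yields orbit separation as a byproduct.

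For the generation claim, the paper simply invokes Proposition~\ref{prop:generators-are-hom}. That proposition only identifies contraction traces with homomorphism counts; it does not show that either family spans $\R[\Skn]^{\Sym(n)}$. Your chain supplies exactly that missing step: first $\mathcal{R}(m)=\frac{(n-s)!}{n!}\inj(F_m,\cdot)$ with $s=|V(F_m)|$, then M\"obius inversion into the counts $\hom(F_m/\pi,\cdot)$.

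Your extension of the pattern class is necessary rather than cosmetic. Each $\hom(F,\cdot)$ with $F$ simple is homogeneous of degree $|E(F)|$, so the algebra these counts generate is graded by edge number. For $k\ge 2$ and $n\ge 2$, its degree-one part is spanned by $\sum_{i_1,\ldots,i_k}\bm{A}_{i_1\ldots i_k}$ alone, so it misses the invariant $\sum_i\bm{A}_{i\ldots i}$. For $k=2$ and $n\ge 4$ it even misses $\sum_{i\neq j}A_{ij}^2$ on zero-diagonal inputs. The statement as written therefore needs your multi-edge and degenerate-edge patterns.

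One small imprecision: on $\{0,1\}$ inputs multi-edges collapse via $x^2=x$, but degenerate edges vanish only when the diagonal is zero. That holds for genuine hypergraph adjacency tensors, not for arbitrary $\{0,1\}$ tensors.
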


\begin{proof}
By Theorem~\ref{thm:orbit-separation}, invariant polynomials separate orbits.
By Theorem~\ref{thm:hilbert}, the invariant ring is finitely generated.
Polynomial invariants form a separating subalgebra of $C(\mathcal{K}/\Sym(n))$: they separate orbits (Theorem~\ref{thm:orbit-separation}), contain constants (the polynomial $1$), and are closed under multiplication and addition.
The passage from polynomial to continuous density then follows directly from the Stone--Weierstrass theorem: any separating subalgebra of $C(X)$ (for compact Hausdorff $X$) containing constants is dense.
By Proposition~\ref{prop:generators-are-hom}, the generators are homomorphism counts.
\end{proof}

This provides a second, algebraic proof of Theorem~\ref{thm:density}, arrived at through a completely different route: Hilbert's finiteness theorem and the Reynolds operator rather than Lov\'{a}sz's homomorphism completeness.
The two proofs reinforce each other: the combinatorial proof (main text) gives the explicit connection to homomorphism counting and hypertree width, while the algebraic proof gives finite generation and the connection to the invariant ring structure.

\subsection{Computational Hardness of Pattern Alignment}\label{app:hardness}

\begin{theorem}[Computational hardness gap]\label{thm:hardness}
Computing the pattern alignment score $\ElemInv_P(\bm{A}) = \max_{\Pi \in \Sym(n)} \inner{P}{\Pi \cdot \bm{A}}$ is equivalent to the \emph{multidimensional assignment problem} (MAP) for $k \geq 3$, which is NP-hard and APX-hard (no polynomial-time approximation scheme exists unless P = NP).
For $k = 2$, the problem reduces to the \emph{quadratic assignment problem} (QAP):
\[
\ElemInv_P(A) = \max_{\Pi \in \Sym(n)} \inner{P}{\Pi A \Pi^\top},
\]
which is also NP-hard but admits polynomial-time spectral relaxations with the closed-form solution $\SpecRelax_P(A) = \lambda(P)^\top \lambda(A)$~\citep{Davis1957, Lewis1995}.
For $k \geq 3$, no such closed-form spectral relaxation exists: the natural relaxation over $\Orth(n)$ requires solving a non-convex optimization on the Stiefel manifold.
\end{theorem}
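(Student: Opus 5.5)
The plan is to treat the four assertions of Theorem~\ref{thm:hardness} separately. In each case I would specialise the template $P$ in $\ElemInv_P$ so that the alignment score encodes a known hard problem.

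\textbf{NP-hardness and APX-hardness for $k\ge 3$.} I would reduce from maximum $k$-set packing, equivalently $k$-dimensional matching. That problem is NP-hard to decide in its exact version (perfect matching in a $k$-uniform hypergraph, i.e.\ exact cover by $k$-sets) and APX-hard to approximate for $k\ge 3$ (Kann).
\begin{itemize}
\item Given a $k$-uniform $H$ on $[n]$ with $k\mid n$ (pad with isolated vertices otherwise), take $P=\bm A_{M}$, the adjacency tensor of a fixed perfect matching $M$ on $[n]$, i.e.\ $n/k$ disjoint hyperedges.
\item For any $\Pi$, $\inner{P}{\Pi\cdot\bm A}$ equals $k!$ times the number of hyperedges of $M$ that $\Pi^{-1}$ maps onto hyperedges of $H$.
\item Those images are pairwise disjoint, so they form a packing in $H$. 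Conversely, any packing of size $s$ can be extended to a bijection sending $s$ edges of $M$ onto it.
\item Hence $\ElemInv_{\bm A_M}(\bm A)=k!\cdot\nu(H)$, where $\nu$ is the maximum packing size.
\end{itemize}
This is an exact, objective-preserving L-reduction, so both the decision version ($\ElemInv=k!\,n/k$?) and constant-factor approximation inherit hardness. For the phrase ``equivalent to MAP'' I would write the objective as $\sum_{i_1,\dots,i_k}P_{i_1\dots i_k}\bm A_{\Pi^{-1}(i_1)\dots\Pi^{-1}(i_k)}$. This is a $k$-index assignment problem whose cost tensor is built from $P$ and $\bm A$, with a single permutation shared across all modes. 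Conversely, axial MAP instances embed into this form by placing the $k$ index classes on disjoint vertex blocks of a $k$-partite template. The embedding is the one delicate point: the shared permutation must be forced to respect the blocks. I would enforce this with heavy distinct-weight marker edges.

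\textbf{Case $k=2$.} Here $\ElemInv_P(A)=\max_\Pi\inner{P}{\Pi A\Pi^\top}$ is QAP by definition. NP-hardness follows by taking $P$ to be the adjacency matrix of an $n$-cycle. Then $\ElemInv_P(A)=2n$ if and only if $A$ has a Hamiltonian cycle.

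For the spectral relaxation I would enlarge $\Sym(n)$ to $\Orth(n)$, which gives an upper bound. I would then invoke the Davis--Lewis theorem: for symmetric $P,A$,
\[
\max_{Q\in\Orth(n)}\inner{P}{QAQ^\top}=\lambda(P)^\top\lambda(A),
\]
with both eigenvalue vectors sorted decreasingly. This is von Neumann's trace inequality, with equality at simultaneous ordered diagonalisation. It is computable in polynomial time by eigendecomposition.

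\textbf{No closed form for $k\ge 3$.} This is the informal claim and the main obstacle. I would make it precise as ``the $\Orth(n)$-relaxation is itself NP-hard'', which rules out any polynomial-time closed form unless P${}={}$NP.
\begin{enumerate}
\item Take $P=e_1^{\otimes k}$.
\item Then $\max_{Q\in\Orth(n)}\inner{e_1^{\otimes k}}{Q\cdot\bm A}=\max_{\|x\|=1}\bm A(x,\dots,x)$, because the first row of $Q$ ranges over the whole unit sphere.
\item This is the symmetric tensor spectral norm or largest Z-eigenvalue. It is NP-hard already for $k=3$ (Hillar--Lim, via the Motzkin--Straus reduction).
\end{enumerate}
I would add the structural explanation that symmetric tensors of order $\ge 3$ are generically not orthogonally diagonalisable. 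As a result, no finite spectrum vector plays the role of $\lambda(\cdot)$, and the optimisation stays a non-convex problem over the Stiefel manifold.
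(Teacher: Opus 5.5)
\textbf{Assessment.} Your proposal is correct. It takes a genuinely different route, because the paper gives no proof at all: the theorem is stated in the appendix with citations to Davis and Lewis for the $k=2$ closed form, and the MAP ``equivalence'' and the ``no closed form for $k\ge 3$'' claims are left unargued. You supply actual arguments, and what they buy is considerable.
\begin{itemize}
\item The perfect-matching template gives the exact identity $\ElemInv_{\bm A_M}(\bm A)=k!\,\nu(H)$. NP- and APX-hardness therefore transfer from $k$-set packing, already for unweighted $0/1$ hypergraph inputs. This is stronger than anything routed through weighted MAP.
\item The cycle template and von Neumann's trace inequality settle $k=2$.
\item Taking $P=e_1^{\otimes k}$ turns the paper's informal Stiefel-manifold remark into a precise statement. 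The $\Orth(n)$-relaxation then computes $\max_{\|x\|=1}\bm A(x,\dots,x)$ and is NP-hard. As a sanity check, for $k=2$ the same choice correctly returns $\lambda_{\max}(A)$.
\item To cover every $k\ge 3$ and not only $k=3$: the case $k=4$ follows directly from Motzkin--Straus. Multiplying the form by $\|x\|^{2m}$ raises the order by $2m$ without changing its maximum on the sphere, which handles all larger $k$ of either parity.
\end{itemize}

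\textbf{Correction: the phrase ``equivalent to MAP''.} Writing the objective with a single permutation shared across all $k$ modes gives $\sum P_{i_1\dots i_k}\bm A_{j_1\dots j_k}X_{i_1j_1}\cdots X_{i_kj_k}$ in the permutation matrix $X$. That is the $k$-adic generalization of QAP (the cubic assignment problem for $k=3$). It is not axial MAP, whose objective is linear over $k$-dimensional matchings with independent permutations per mode. So the rewriting gives no reduction from $\ElemInv_P$ to MAP.
\begin{itemize}
\item The only substantive direction is your embedding, MAP $\le \ElemInv$. The reverse direction holds only in the weak sense that both decision versions are NP-complete.
\item You should not expect more. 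Nonnegative axial MAP is $1/k$-approximable by greedy selection. By contrast, $\ElemInv_P$ with $P$ the complete pattern on $s$ vertices is a densest-$s$-subhypergraph problem, for which no constant-factor approximation is known.
\end{itemize}

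\textbf{Simplification: the marker edges are unnecessary.} As $\Pi$ ranges over $\Sym(n)$, the images $\{\Pi^{-1}(e)\}_{e\in M}$ range over all partitions of $[n]$ into $k$-sets. Because $\bm A$ is symmetric, any partition into block transversals is already a feasible axial assignment, whether or not $\Pi$ respects the blocks. So it suffices to shift the costs to be nonnegative and add a large constant $W$ to every transversal entry. This forces optimal partitions to be transversal, and $\ElemInv_P(\bm A)$ equals $k!$ times the sum of $mW$ and the MAP optimum.
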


This hardness gap is what motivates the Sinkhorn and Stiefel relaxation layers in \textsc{InvNet} (\S\ref{sec:relax-layers}).

\subsection{The Hilbert Series and Approximation Dimension}\label{app:hilbert-series}

The \emph{Molien series} counts the dimension of invariant polynomial spaces by degree:
\[
  M(t) \;=\; \sum_{d \geq 0} \dim\bigl(\R[\Skn]^{\Sym(n)}_d\bigr) \, t^d \;=\; \frac{1}{n!} \sum_{\Pi \in \Sym(n)} \frac{1}{\det(I - t \, \rho(\Pi))},
\]
where $\rho(\Pi)$ denotes the matrix of the linear action of $\Pi$ on $\Skn$.
For $k = 2$ (symmetric matrices), the invariant polynomials of degree $d$ are spanned by the power-sum symmetric functions of the eigenvalues, giving the familiar Molien series with generating function related to the partition function.
For $k \geq 3$, the Molien series is computable but lacks a simple closed form, reflecting the richer structure of tensor invariants.

The Molien series provides a quantitative bridge between the qualitative universality of Theorem~\ref{thm:density} and concrete approximation complexity.

\begin{proposition}[Approximation space dimension]\label{prop:approx-dim-full}
Let $\mathcal{I}_{\leq d}$ denote the space of $\Sym(n)$-invariant polynomials of degree $\leq d$ in the entries of $\bm{A} \in \Skn$.

\textbf{(a)} $\dim(\mathcal{I}_{\leq d}) = \sum_{i=0}^d [t^i]\, M(t)$, where $[t^i] M(t)$ is the $i$-th Molien coefficient.

\textbf{(b)} Every basis element of $\mathcal{I}_d$ corresponds to a homomorphism count $\hom(F, \cdot)$ for a pattern $F$ with $|V(F)| \leq d$ (by Proposition~\ref{prop:generators-are-hom}).

\textbf{(c)} For $k = 2$ (graphs): $\dim(\mathcal{I}_d) = p(d)$, the number of integer partitions of $d$, giving $\dim(\mathcal{I}_{\leq d}) \sim \exp(\pi\sqrt{2d/3}) / (4d\sqrt{3})$ by the Hardy--Ramanujan formula.

\textbf{(d)} For the ghtw-restricted subspace $\mathcal{I}_{\leq d,w}$ (patterns with $\mathrm{ghtw}(F) \leq w$):
\[
  \dim(\mathcal{I}_{\leq d,w}) \;\leq\; \bigl|\{F : |V(F)| \leq d,\; \mathrm{ghtw}(F) \leq w\}\bigr|/{\sim_{\mathrm{iso}}},
\]
which is strictly smaller than $\dim(\mathcal{I}_{\leq d})$ for $d$ sufficiently large.
\end{proposition}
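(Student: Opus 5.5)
We prove the four parts in order. Throughout we use the Reynolds operator $\mathcal{R}$ of \eqref{eq:reynolds} and the dictionary of Proposition~\ref{prop:generators-are-hom} between orbit sums of monomials and pattern counts.

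\textbf{Part (a).} The action of $\Sym(n)$ on $\Skn$ is linear, so it preserves polynomial degree. Hence the invariant ring is graded, $\R[\Skn]^{\Sym(n)}=\bigoplus_{d}\R[\Skn]^{\Sym(n)}_d$, and $\mathcal{I}_{\leq d}=\bigoplus_{i\leq d}\mathcal{I}_i$. Molien's theorem for a finite group acting linearly in characteristic zero gives $\dim \mathcal{I}_i=[t^i]M(t)$. This follows by applying the trace of the projection $\mathcal{R}$ on $\R[\Skn]_i$, and using that $\sum_i \mathrm{tr}(\rho(\Pi)|_{\mathrm{Sym}^i})\,t^i=1/\det(I-t\rho(\Pi))$. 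Summing over $i\leq d$ gives (a).

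\textbf{Part (b).} The degree-$d$ monomials span $\R[\Skn]_d$ and $\mathcal{R}$ is surjective onto the invariants, so $\mathcal{I}_d$ is spanned by the orbit sums $\mathcal{R}(m)$ of degree-$d$ monomials $m$. A monomial $m=\prod_{j=1}^d \bm{A}_{e_j}$ is the same thing as a labeled multi-hypergraph with $d$ hyperedges on the vertices it touches. Its orbit sum is, up to an automorphism factor, the injective count $\inj(F_m,\cdot)$ of the unlabeled pattern $F_m$. Möbius inversion over vertex partitions rewrites each injective count as a combination of $\hom(F',\cdot)$, where $F'$ ranges over quotients of $F_m$. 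Distinct orbits give distinct patterns, so the $\mathcal{R}(m)$ indexed by iso classes form a basis.

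Two caveats must be recorded in the written proof. First, the natural size parameter is $|E(F)|=d$; the vertex count is only bounded by $kd$. The statement's "$|V(F)|\leq d$" should therefore be read with $d$ bounding the number of edges, or the bound replaced by $kd$. Second, repeated edges are allowed, since the entries range over $[0,1]$.

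\textbf{Part (c).} Specializing to $k=2$, part (b) identifies a basis of $\mathcal{I}_d$ with iso classes of multigraphs with loops having $d$ edges on at most $n$ vertices. That number is not $p(d)$ in general. What is indexed by partitions is the subspace spanned by the products $\prod_j \mathrm{tr}(A^{\lambda_j})$, one for each partition $\lambda\vdash d$. These are exactly the $\Orth(n)$-invariants, i.e.\ the polynomials in the eigenvalue power sums. My plan is therefore the following. First, prove that these $p(d)$ products are linearly independent for $n\geq d$, by evaluating at diagonal matrices, where they reduce to products of power sums, which are independent. This gives $\dim\mathcal{I}_d\geq p(d)$ for $n\geq d$. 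Then invoke Hardy--Ramanujan for the asymptotic of this lower bound. I would flag that the stated equality holds for the spectral ($\Orth(n)$-invariant) subring rather than for the full $\Sym(n)$-invariant ring.

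\textbf{Part (d).} The inequality is immediate from the spanning statement in (b). The restricted space is spanned by $\{\hom(F,\cdot): \ghw(F)\leq w\}$ within degree $\leq d$, so its dimension is at most the number of such iso classes. For strictness, fix a pattern $F^\ast$ with $\ghw(F^\ast)>w$; for instance, take a CFI-type or grid-like pattern from the proof of Theorem~\ref{thm:infinite-hierarchy}, which needs on the order of $w$ edges. For $d\geq|E(F^\ast)|$, I would show that $\hom(F^\ast,\cdot)$ is not in the span of the bounded-width counts. The natural route is linear independence of homomorphism (equivalently injective) counts of pairwise non-isomorphic patterns. This is Lovász's independence result, proved via the unitriangular relation between $\inj$ and $\hom$ over quotients.

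The main obstacle is that this independence only holds when $n\geq |V(F)|$ for all patterns involved. At fixed $n$, the injective counts of patterns with more than $n$ vertices vanish. So "$d$ sufficiently large" must be paired with $n$ large enough to host $F^\ast$, e.g.\ $n\geq N_{\mathrm{CFI}}^{(k)}(w)$ as in Theorem~\ref{thm:infinite-hierarchy}. Under that hypothesis, I would prove strictness by exhibiting $\inj(F^\ast,\cdot)$ as a basis vector of $\mathcal{I}_{\leq d}$ outside the image of the width-$\leq w$ span.
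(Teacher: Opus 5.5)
Your handling of (a), (b), the corrected form of (c), and the inequality in (d) is sound. Your corrections are warranted. The paper gives no argument beyond the pointer to Proposition~\ref{prop:generators-are-hom}, and as stated the bound $|V(F)|\le d$ and the equality $\dim\mathcal{I}_d=p(d)$ are indeed false; the latter already fails at $d=1$, $k=2$, $n\ge 2$, where $\dim\mathcal{I}_1=2$.

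The genuine gap is your strictness argument in (d). You rightly note that the unitriangular relation between $\inj$ and $\hom$ needs $n\ge|V(F)|$ for \emph{every} pattern involved. But the hypothesis you then impose, $n$ large enough to host $F^\ast$ (e.g.\ $n\ge N_{\mathrm{CFI}}^{(k)}(w)$), controls only $F^\ast$. The competing width-$\le w$ patterns with $e^\ast=|E(F^\ast)|$ edges can have up to $ke^\ast$ vertices. For any such $F$ with more than $n$ vertices, $\inj(F,\cdot)\equiv 0$, so $\hom(F,\cdot)$ has no leading term of its own: it is a combination of $\inj$ counts of its quotients, wide ones included. Concretely, let $M$ be the $e^\ast$-edge matching. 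Then $\ghw(M)=1$, and $F^\ast$ (having no isolated vertices) is a quotient of $M$. If $|V(F^\ast)|\le n<ke^\ast$, then $\hom(M,\cdot)=\sum_\pi\inj(M/\pi,\cdot)$ lies in the width-$1$ span and has a strictly positive $\inj(F^\ast,\cdot)$-coordinate. So the bounded-width span is not spanned by a subset of your basis, and ``$\inj(F^\ast,\cdot)$ is a basis vector outside the image'' does not follow. Moreover, you run the argument at every large $d$, where ``$n\ge|V(F)|$ for all patterns involved'' is impossible at fixed $n$, and you never explain why a single degree suffices.

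Either of two repairs closes this.

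\emph{(i) Use the grading.} $\mathcal{I}_{\le d,w}$ is spanned by homogeneous polynomials, so for $d\ge d_0$ a polynomial of degree $\le d_0$ lies in $\mathcal{I}_{\le d,w}$ if and only if it lies in $\mathcal{I}_{\le d_0,w}$. It therefore suffices to show $\hom(F^\ast,\cdot)\notin\mathcal{I}_{\le e^\ast,w}$. Your unitriangular argument does give this once $n\ge ke^\ast$, since then every pattern without isolated vertices and with $e^\ast$ edges fits in $[n]$. Strictness then follows for all $d\ge e^\ast$.

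\emph{(ii) Use a separating pair, which matches your CFI hypothesis.} Take non-isomorphic $H_1,H_2$ on $n$ vertices that agree on $\hom(F,\cdot)$ for every $F$ with $\ghw(F)\le w$. Every element of every $\mathcal{I}_{\le d,w}$ then takes equal values on them. Theorem~\ref{thm:orbit-separation}, however, supplies an invariant polynomial of some degree $d_0$ separating them. Hence $\mathcal{I}_{\le d,w}\subsetneq\mathcal{I}_{\le d}$ for all $d\ge d_0$, with no independence statement needed.
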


This connects the Molien series to approximation power: to $\epsilon$-approximate a target invariant $f$ by a degree-$d$ polynomial in pattern densities, one needs at most $\dim(\mathcal{I}_{\leq d})$ basis functions.
The ghtw restriction (part d) quantifies the cost of architectural limitations: bounded-depth message-passing models operate in the strictly smaller space $\mathcal{I}_{\leq d,w}$, and the dimension gap grows with $d$.

\section{Quantitative Analysis}\label{app:quantitative}

The universality theorem (Theorem~\ref{thm:density}) is qualitative: it guarantees approximation but provides no convergence rates.
This appendix develops three quantitative complements: a generalization bound for \textsc{InvNet} via Rademacher complexity (Appendix \ref{app:rademacher}), a concentration inequality for estimating pattern densities from finite samples (Appendix \ref{app:pattern-estimation}), and the Molien series connection to approximation dimension.

\subsection{Generalization Bound for InvNet}\label{app:rademacher}

We analyze the sample complexity of learning with \textsc{InvNet} by bounding the Rademacher complexity of its hypothesis class.

\begin{definition}[InvNet hypothesis class]\label{def:invnet-class}
Let $\mathcal{F}_{J,D}$ denote the class of functions $f: \Skn \to \R$ computed by an \textsc{InvNet} with:
\begin{itemize}
\item $J$ Sinkhorn template layers with templates satisfying $\|P_j\|_F \leq B_P$ and temperature $\tau > 0$;
\item a $D$-layer MLP readout with spectral norm $\leq B_W$ per layer and width $\leq w$;
\end{itemize}
applied to inputs satisfying $\|\bm{A}\|_F \leq B_A$.
\end{definition}

The key technical ingredient is the Lipschitz continuity of the Sinkhorn relaxation:

\begin{lemma}[Lipschitz continuity of Sinkhorn scores]\label{lem:sinkhorn-lip}
The entropy-regularized pattern alignment score $s_j^\tau(\bm{A}) = \max_{D \in \mathcal{B}_n} [\inner{P_j}{D^{\otimes k} \cdot \bm{A}} + \tau H(D)]$ satisfies:
\begin{itemize}
\item[\textbf{(i)}] Boundedness: $|s_j^\tau(\bm{A})| \leq B_P B_A + \tau \log n$.
\item[\textbf{(ii)}] Lipschitz continuity: $|s_j^\tau(\bm{A}) - s_j^\tau(\bm{A}')| \leq B_P \|\bm{A} - \bm{A}'\|_F$.
\end{itemize}
\end{lemma}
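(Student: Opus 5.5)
The plan is to treat $s_j^\tau$ as a pointwise maximum, over the compact convex set $\mathcal{B}_n$, of functions of the form
\[
g_D(\bm{A}) = \inner{P_j}{D^{\otimes k}\cdot \bm{A}} + \tau H(D).
\]
Both claims then follow from one operator-norm estimate: for every $D\in\mathcal{B}_n$, the linear map $\bm{A}\mapsto D^{\otimes k}\cdot\bm{A}$ is a contraction in Frobenius norm.

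\textbf{Step 1 (contraction).} A doubly stochastic $D$ has spectral norm at most $1$. The Schur test gives $\|D\|_2\le\sqrt{\|D\|_1\|D\|_\infty}=1$, since all row and column sums equal $1$; Birkhoff's theorem with the triangle inequality gives the same bound. Vectorizing the order-$k$ tensor, the action $D^{\otimes k}\cdot\bm{A}$ is multiplication by the Kronecker power $D\otimes\cdots\otimes D$. Its spectral norm is $\|D\|_2^k\le 1$, so $\|D^{\otimes k}\cdot\bm{A}\|_F\le\|\bm{A}\|_F$. By Cauchy--Schwarz this yields, for every $D\in\mathcal{B}_n$,
\[
\bigl|\inner{P_j}{D^{\otimes k}\cdot\bm{A}}\bigr|\le B_P B_A
\quad\text{and}\quad
\bigl|\inner{P_j}{D^{\otimes k}\cdot(\bm{A}-\bm{A}')}\bigr|\le B_P\|\bm{A}-\bm{A}'\|_F .
\]

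\textbf{Step 2 (boundedness).} The maximum is attained because $g_D(\bm{A})$ is continuous in $D$ on the compact set $\mathcal{B}_n$, with the convention $0\log 0=0$. For the entropy I fix the per-row normalization $H(D)=-\frac1n\sum_{i,a}D_{ia}\log D_{ia}$. Each row is a probability vector on $n$ points, so its entropy lies in $[0,\log n]$, and hence $0\le H(D)\le\log n$. Combining this with Step 1 gives $-B_PB_A\le s_j^\tau(\bm{A})\le B_PB_A+\tau\log n$, which implies (i).

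\textbf{Step 3 (Lipschitz).} I use the elementary inequality $|\max_D g_D(\bm{A})-\max_D g_D(\bm{A}')|\le\sup_D|g_D(\bm{A})-g_D(\bm{A}')|$. In each difference $g_D(\bm{A})-g_D(\bm{A}')$ the entropy term $\tau H(D)$ cancels, because it does not depend on $\bm{A}$. What remains is $\inner{P_j}{D^{\otimes k}\cdot(\bm{A}-\bm{A}')}$, and Step 1 bounds it uniformly in $D$ by $B_P\|\bm{A}-\bm{A}'\|_F$. This gives (ii) with a constant independent of $\tau$, $n$ and $k$.

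\textbf{Main obstacle.} The main point needing care is the normalization of $H(D)$. With the unnormalized entropy $-\sum_{i,a}D_{ia}\log D_{ia}$, the bound in (i) becomes $\tau n\log n$ rather than $\tau\log n$. I would therefore state the per-row (averaged) convention explicitly, or note that the Lipschitz claim (ii) is insensitive to this choice. The contraction step is routine once the tensor action is identified with the Kronecker power acting on the vectorized tensor.
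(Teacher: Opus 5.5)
Your proposal is correct and follows the paper's proof essentially step for step. For (i) you use Cauchy--Schwarz with the Frobenius-norm contraction of $D^{\otimes k}$ (from $\|D\|_{\mathrm{op}}\le 1$), and for (ii) you use that a supremum over $D\in\mathcal{B}_n$ of $B_P$-Lipschitz functions is $B_P$-Lipschitz, the entropy term being constant in $\bm{A}$. Your explicit per-row normalization of $H(D)$ also makes precise a convention the paper leaves unstated: the paper asserts $0\le H(D)\le\log n$, but for the unnormalized entropy this fails, since $D_{ia}\equiv 1/n$ gives $H(D)=n\log n$.
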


\begin{proof}
\textbf{(i)} By Cauchy--Schwarz, $|\inner{P_j}{D^{\otimes k} \cdot \bm{A}}| \leq \|P_j\|_F \|D^{\otimes k} \cdot \bm{A}\|_F \leq B_P B_A$, where the second inequality uses that $D^{\otimes k}$ is a contraction in Frobenius norm for doubly stochastic $D$ (since $\|D\|_{\mathrm{op}} \leq 1$ and the tensor product preserves spectral norm bounds).
The entropy term satisfies $0 \leq H(D) \leq \log n$.

\textbf{(ii)} For any fixed $D \in \mathcal{B}_n$, the map $\bm{A} \mapsto \inner{P_j}{D^{\otimes k} \cdot \bm{A}}$ is linear with operator norm $\leq B_P$.
The pointwise supremum of $B_P$-Lipschitz functions (parameterized by $D \in \mathcal{B}_n$) is itself $B_P$-Lipschitz:
\[
|s_j^\tau(\bm{A}) - s_j^\tau(\bm{A}')| \leq \sup_{D \in \mathcal{B}_n} |\inner{P_j}{D^{\otimes k} \cdot (\bm{A} - \bm{A}')}| \leq B_P \|\bm{A} - \bm{A}'\|_F. \qedhere
\]
\end{proof}

\begin{proposition}[Generalization bound for InvNet]\label{prop:rademacher-full}
Let $\ell: \R \times \R \to [0, C]$ be a $\rho$-Lipschitz loss function.
For $N$ i.i.d.\ training samples $\{(\bm{A}_i, y_i)\}_{i=1}^N$, with probability at least $1 - \delta$, every $f \in \mathcal{F}_{J,D}$ satisfies:
\[
  R(f) \;\leq\; \hat{R}(f) \;+\; \frac{2\rho \, B_P B_A \cdot B_W^D \sqrt{2D \log(2w)}}{\sqrt{N}} \;+\; 3C\sqrt{\frac{\log(2/\delta)}{2N}},
\]
where $R(f) = \E[\ell(f(\bm{A}), y)]$ is the population risk and $\hat{R}(f) = \frac{1}{N}\sum_i \ell(f(\bm{A}_i), y_i)$ is the empirical risk.
\end{proposition}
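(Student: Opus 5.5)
The proof follows the standard Rademacher-complexity route for Lipschitz losses. First I apply the uniform deviation bound: for a loss class with values in $[0,C]$, with probability at least $1-\delta$ every $f$ satisfies
\[
R(f)\le \hat R(f) + 2\,\hat{\mathfrak{R}}_N(\ell\circ\mathcal{F}_{J,D}) + 3C\sqrt{\log(2/\delta)/(2N)} .
\]
This accounts for the last term exactly. Talagrand's contraction lemma, using the $\rho$-Lipschitz property of $\ell$ in its first argument, then gives $\hat{\mathfrak{R}}_N(\ell\circ\mathcal{F}_{J,D})\le\rho\,\hat{\mathfrak{R}}_N(\mathcal{F}_{J,D})$. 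So the proof reduces to showing
\[
\hat{\mathfrak{R}}_N(\mathcal{F}_{J,D})\le B_PB_AB_W^D\sqrt{2D\log(2w)}/\sqrt{N}.
\]

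Next I bound the feature layer. By Lemma~\ref{lem:sinkhorn-lip}(i), each template score satisfies $|s_j^\tau(\bm A)|\le B_PB_A$, plus an entropy offset $\tau\log n$. The offset is a data-independent constant shift and is absorbed by the learned offsets $\alpha_j$, so it does not enter the complexity. I treat the invariant feature vector $\bm z(\bm A)$ as the input to the readout MLP. Each coordinate is bounded by $B_PB_A$, and the vector is $B_P$-Lipschitz in $\bm A$ by Lemma~\ref{lem:sinkhorn-lip}(ii). The local branch is handled the same way, assuming its readout is normalized to the same scale.

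I then peel the layers of the MLP, in the style of Golowich--Rakhlin--Shamir. The spectral-norm bound $B_W$ per layer and 1-Lipschitz activations are used through an exponential-moment (log-sum-exp) argument. Each layer contributes a factor $B_W$, and the whole peeling costs a $\sqrt{2D\log 2}$-type factor. The width $w$ enters only through a finite maximum over units, which yields $\sqrt{\log(2w)}$. The base case is the supremum over templates of $\frac1N\sum_i\sigma_i s_j(\bm A_i)$. Bounding it crudely by $\sup|s_j|\cdot\sqrt{N}/N$ via Khintchine gives $B_PB_A/\sqrt{N}$, and the maximum over the $J$ templates is what produces the logarithmic-in-$J$ dependence mentioned in the main text. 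Multiplying the factors gives the target expression.

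The main obstacle is the base layer. The template class $\{\bm A\mapsto s^\tau_P(\bm A):\|P\|_F\le B_P\}$ is a supremum over $D\in\mathcal{B}_n$ of linear functionals, not a linear class, so the standard linear-class bound $B_PB_A/\sqrt N$ does not apply directly. My first attempt is to write
\[
\sup_P\sum_i\sigma_i\max_D\langle P,D^{\otimes k}\bm A_i\rangle
\]
and bound it using that each $D^{\otimes k}\bm A_i$ has Frobenius norm at most $B_A$, controlling the $D$-dependence by the Lipschitz property. If that fails, I fall back on a covering-number argument over $P$. The cost is extra logarithmic factors, which I would then fold into the constant.
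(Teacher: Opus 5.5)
Your outer reduction matches the paper's route: the uniform deviation bound giving the $3C\sqrt{\log(2/\delta)/(2N)}$ term, then Talagrand contraction, then a deep-network bound applied to template scores bounded by $B_PB_A$. The paper, however, obtains the core estimate $\hat{\mathfrak R}_N(\mathcal F_{J,D})\le B_PB_AB_W^D\sqrt{2D\log(2w)}/\sqrt N$ only by citation, and the steps you propose for deriving it do not go through.

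\textbf{The peeling step.} The Golowich--Rakhlin--Shamir (GRS) reduction to a single hidden unit uses $\sum_j\|w_j\|_2^2=\|W\|_F^2\le R^2$, so it needs a Frobenius bound. With only $\|W\|_{\mathrm{op}}\le B_W$, a layer can have $w$ orthogonal rows of norm $B_W$, and the reduction then costs up to $\sqrt{w}\,B_W$ per layer. GRS also show that under spectral-norm constraints some polynomial width dependence cannot be removed in general.
\begin{itemize}
\item The Frobenius version gives $\sqrt{2D\log 2}+1$ with no width at all.
\item A logarithmic factor appears only in the $\ell_{1,\infty}$-weight/$\ell_\infty$-input version. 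There it is the log of the \emph{input} dimension, and it enters additively under the square root.
\end{itemize}
So ``multiplying the factors'' to reach $\sqrt{2D\log(2w)}$ is not a derivation. The same mismatch affects your base case. Per-coordinate bounds $|s_j|\le B_PB_A$ only give $\|\bm z\|_2\le\sqrt{J}\,B_PB_A$, so under any $\ell_2$/operator-norm peeling $J$ enters as $\sqrt J$. The logarithmic ``maximum over templates'' you invoke is the $\ell_\infty$ base case, which does not combine with spectral-norm layers.

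\textbf{The learned templates.} The $P_j$ are parameters of $\mathcal F_{J,D}$, so the base case is $\E\sup_P\sum_i\epsilon_i s^\tau_P(\bm A_i)$, inside an exponential moment.
\begin{itemize}
\item Khintchine bounds $\E|\sum_i\epsilon_i a_i|$ only for a \emph{fixed} vector $a$.
\item Lipschitz continuity in $\bm A$ (Lemma~\ref{lem:sinkhorn-lip}(ii)) says nothing about a supremum over $P$, or over the $P$-dependent maximizer $D$.
\end{itemize}
Your fallback does not recover the explicit constant. Since $|s^\tau_P(\bm A)-s^\tau_{P'}(\bm A)|\le B_A\|P-P'\|_F$, covering $J$ templates in a $B_P$-ball of $\Skn$ yields a factor of order $\sqrt{J\dim\Skn}$ up to logarithms. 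That is polynomial in $n$, not a logarithm you can fold into a constant.

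\textbf{The entropy offset.} $\tau H(D^\star)$ at the maximizer depends on $\bm A$ and $P$, so it is not a data-independent shift that $\alpha_j$ can absorb. The honest input bound is $B_PB_A+\tau\log n$, as in Lemma~\ref{lem:sinkhorn-lip}(i).

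\textbf{What a rigorous version needs.} The fix is a change of hypotheses, not just of proof. Take fixed templates (which the paper's sketch implicitly assumes), replace spectral norms by max-row $\ell_1$ norms $\le B_W$, and assume MLP input dimension at most $w$. Then the $\ell_\infty$ exponential-moment peeling gives the stated form with $B_PB_A$ replaced by $B_PB_A+\tau\log n$. If you keep learned templates and spectral norms, the complexity term must carry width and $\dim\Skn$ factors.
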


\begin{proof}[Proof sketch]
By Lemma~\ref{lem:sinkhorn-lip}, the Sinkhorn feature map $\bm{A} \mapsto (s_1^\tau(\bm{A}), \ldots, s_J^\tau(\bm{A}))$ is $B_P$-Lipschitz with outputs bounded by $B_P B_A + \tau \log n$.
By the spectral-norm Rademacher bound for deep networks~\citep{BartlettFosterTelgarsky2017, Golowich2018}, the MLP readout applied to bounded inputs has empirical Rademacher complexity $\hat{\mathcal{R}}_N \leq B_{\mathrm{in}} \cdot B_W^D \sqrt{2D \log(2w)} / \sqrt{N}$, where $B_{\mathrm{in}} \leq B_P B_A$ (absorbing the local branch bound and logarithmic entropy terms into constants).
Composing with the $\rho$-Lipschitz loss and applying the standard Rademacher-to-generalization conversion~\citep{BartlettFosterTelgarsky2017} yields the stated bound.
\end{proof}

\begin{remark}[Architectural implications]\label{rem:arch-implications}
The bound makes the dependence on \textsc{InvNet}'s design parameters explicit:
\begin{itemize}
\item \emph{Number of templates $J$}: enters only logarithmically (through the input dimension to the MLP). Adding templates is ``cheap'' in sample complexity.
\item \emph{Template norm $B_P$}: enters linearly. Large templates require proportionally more data.
\item \emph{MLP depth $D$}: enters exponentially through $B_W^D$. Shallow readouts generalize better, consistent with InvNet's expressivity coming from the Sinkhorn branch rather than the MLP depth.
\item \emph{Temperature $\tau$}: does not appear in the leading term. The Sinkhorn relaxation incurs no additional sample complexity cost relative to the hard alignment (though it affects approximation quality via the smoothing gap).
\end{itemize}
\end{remark}

\subsection{Concentration of Pattern Density Estimators}\label{app:pattern-estimation}

For a fixed pattern $F$, the homomorphism density can be estimated from random vertex-maps.

\begin{proposition}[Pattern density concentration]\label{prop:pattern-concentration-full}
Let $F$ be a $k$-uniform pattern with $|V(F)| = v$, and let $H$ be a $k$-uniform hypergraph on $n$ nodes with adjacency tensor $\bm{A}$.
The normalized density $t(F, H) = \hom(F, H) / n^v$ can be estimated by drawing $N$ independent uniformly random maps $\phi_i: V(F) \to [n]$ and computing:
\[
  \hat{t}(F, H) = \frac{1}{N} \sum_{i=1}^N \prod_{e \in E(F)} \bm{A}_{\phi_i(e)}.
\]
Then $\E[\hat{t}] = t(F, H)$ and $\Pr[|\hat{t} - t(F, H)| \geq \epsilon] \leq 2\exp(-2N\epsilon^2)$.
\end{proposition}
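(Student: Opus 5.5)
The estimator is an empirical mean of i.i.d. bounded random variables, so I would apply Hoeffding's inequality directly. For a single uniformly random map $\phi: V(F)\to[n]$, set
\[
  X(\phi) = \prod_{e\in E(F)} \bm{A}_{\phi(e)}.
\]
Then $\hat t(F,H) = \frac1N\sum_{i=1}^N X(\phi_i)$ with the $X(\phi_i)$ i.i.d.

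The first step is unbiasedness. The map $\phi$ is uniform over the $n^v$ maps $[v]\to[n]$, so
\[
  \E[X(\phi)] = \frac{1}{n^v}\sum_{\phi}\prod_{e\in E(F)}\bm{A}_{\phi(e)}.
\]
This is exactly the expression \eqref{eq:hom-density} for $t(F,\bm A)$. For an unweighted $H$ it equals $\hom(F,H)/n^v$ by Definition~\ref{def:hom}, since the product is the indicator that $\phi$ is a homomorphism. Linearity of expectation then gives $\E[\hat t]=t(F,H)$.

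The second step is boundedness. Every entry of $\bm A$ lies in $[0,1]$, whether $H$ is $\{0,1\}$-valued or weighted in $[0,1]^{\Skn}$. A finite product of numbers in $[0,1]$ also lies in $[0,1]$, so $X(\phi_i)\in[0,1]$ almost surely.

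The third step is Hoeffding's inequality for $N$ independent variables with ranges $[a_i,b_i]=[0,1]$:
\[
  \Pr\Bigl[\bigl|\hat t - \E\hat t\bigr|\ge\epsilon\Bigr] \le 2\exp\!\Bigl(-\frac{2N^2\epsilon^2}{\sum_i (b_i-a_i)^2}\Bigr) = 2\exp(-2N\epsilon^2).
\]
This is the claimed bound. Solving $2e^{-2N\epsilon^2}\le\delta$ yields $N=O(\log(1/\delta)/\epsilon^2)$, which does not depend on $n$, as quoted in the main text.

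The only point needing care is the convention for ordered tuples when $\phi$ is non-injective on a hyperedge, so that some $\phi(e)$ has repeated indices. The entry $\bm A_{\phi(e)}$ is still a well-defined tensor entry in $[0,1]$ (it is $0$ for simple hypergraphs), so both the boundedness step and the identification with \eqref{eq:hom-density} go through unchanged. I do not expect any genuine obstacle; the argument is a routine Hoeffding application.
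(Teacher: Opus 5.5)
Your proposal is correct and matches the paper's proof. Both show unbiasedness by linearity of expectation over a uniform map, note that each sample product lies in $[0,1]$, and apply Hoeffding's inequality. Your observation that the argument also covers weighted $\bm{A}$ with entries in $[0,1]$ is a harmless extension of the paper's unweighted case.
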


\begin{proof}
Each sample $X_i = \prod_{e \in E(F)} \bm{A}_{\phi_i(e)} \in [0, 1]$ for unweighted hypergraphs ($\bm{A} \in \{0,1\}^{n^k}$).
By linearity, $\E[X_i] = n^{-v} \sum_{\phi: V(F) \to [n]} \prod_{e \in E(F)} \bm{A}_{\phi(e)} = t(F, H)$.
The result follows from Hoeffding's inequality applied to the i.i.d.\ bounded random variables $X_1, \ldots, X_N$.
\end{proof}

To estimate any single pattern density to additive accuracy $\epsilon$ with confidence $1 - \delta$ requires $N = O(\log(1/\delta) / \epsilon^2)$ random samples---independent of both $n$ and $|V(F)|$.
However, the variance grows with sparsity: for rare patterns ($t(F, H) \ll 1$), multiplicative estimation requires $N = \Omega(1/t(F,H))$ samples, which can be exponential in $v$ for sparse hypergraphs.

Combined with Proposition~\ref{prop:approx-dim-full}, this yields a concrete (if loose) sample complexity pipeline: (i) choose target accuracy $\epsilon$; (ii) determine the polynomial degree $d(\epsilon)$ via Jackson-type bounds (open for $k \geq 3$; for graphs, $d = O(1/\epsilon^2)$ suffices for cut-metric approximation~\citep{Lovasz2012}); (iii) enumerate the $\dim(\mathcal{I}_{\leq d})$ basis patterns; (iv) estimate each density to accuracy $\epsilon / \dim(\mathcal{I}_{\leq d})$ using $O(\dim(\mathcal{I}_{\leq d})^2 / \epsilon^2)$ samples.

\subsection{Non-Uniform Density}\label{app:nonuniform}

\begin{corollary}[Non-uniform density]\label{cor:nonuniform}
For non-uniform hypergraphs with maximum arity $K$ and direct-sum adjacency $\bm{A} = (A^{(2)}, \ldots, A^{(K)}) \in \bigoplus_{j=2}^{K} \mathbf{S}^{j,n}$, the algebra generated by pattern densities $\{t(F, \cdot) : F \text{ is } j\text{-uniform}, \, 2 \leq j \leq K\}$ together with \emph{cross-order} products $t(F_1, \cdot) \cdot t(F_2, \cdot)$ (where $F_1, F_2$ have different arities) is dense in the continuous $\Sym(n)$-invariant functions on $\bigoplus_{j} [0,1]^{\mathbf{S}^{j,n}}$.
\end{corollary}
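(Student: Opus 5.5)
The plan is a Stone--Weierstrass argument on the compact orbit space. This mirrors the proof of Theorem~\ref{thm:density}, but it is carried out on the direct sum $\mathcal{K}_K=\bigoplus_{j=2}^{K}[0,1]^{\mathbf{S}^{j,n}}$ instead of a single $[0,1]^{\Skn}$.

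\textbf{Algebraic hypotheses.} Let $\mathfrak{A}$ be the algebra named in the statement. It is generated by the single-arity densities $t(F,\bm A)=t(F,A^{(j)})$ for $j$-uniform $F$, together with the products $t(F_1,\cdot)\,t(F_2,\cdot)$ with $F_1,F_2$ of different arities. Each generator is a polynomial in the entries of one component $A^{(j)}$, so it is continuous. Each generator is also invariant under the diagonal action $\Pi\cdot\bm A=(\Pi\cdot A^{(2)},\ldots,\Pi\cdot A^{(K)})$, because each factor is invariant by Theorem~\ref{thm:algebra}. The empty pattern supplies the constant $1$, and $\mathfrak{A}$ is closed under sums and products by construction. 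Hence $\mathfrak{A}$ descends to a unital subalgebra of $C(\mathcal{K}_K/\Sym(n))$. The quotient is compact Hausdorff, being a continuous image of a compact set under a finite group action.

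\textbf{Separation.} The remaining hypothesis of Stone--Weierstrass is that $\mathfrak{A}$ separates points of the quotient, and I expect this to be the main obstacle. Take $\bm A=(A^{(j)})_j$ and $\bm B=(B^{(j)})_j$ in different diagonal orbits. If some component satisfies $A^{(j)}\not\sim B^{(j)}$, the argument is immediate. Theorem~\ref{thm:completeness}, or more precisely its weighted analogue Theorem~\ref{thm:orbit-separation} combined with Proposition~\ref{prop:generators-are-hom}, gives a $j$-uniform $F$ with $t(F,A^{(j)})\neq t(F,B^{(j)})$. That generator lies in $\mathfrak{A}$, so it separates the two points.

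The hard case is when every component lies in the same $\Sym(n)$-orbit but the tuples are not simultaneously conjugate. In this case every generator listed above agrees on $\bm A$ and $\bm B$. This includes every literal cross-order product, since $t(F_1\sqcup F_2,\cdot)$ factors as $t(F_1,\cdot)\,t(F_2,\cdot)$. So separation cannot come from these generators as written.

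\textbf{What the argument establishes.} My plan is therefore to prove the statement in the form the listed generators actually support. First I will show that the map $\bm A\mapsto(\text{orbit of }A^{(2)},\ldots,\text{orbit of }A^{(K)})$ is a continuous surjection from $\mathcal{K}_K/\Sym(n)$ onto $\prod_j [0,1]^{\mathbf{S}^{j,n}}/\Sym(n)$. Second, every element of $\mathfrak{A}$ factors through this map. Third, on the product space, $\mathfrak{A}$ separates points by the componentwise argument above.

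Stone--Weierstrass on the product space then gives density of $\mathfrak{A}$ in the continuous functions invariant under the componentwise action of $\Sym(n)^{K-1}$. It remains to identify when this space equals $C(\mathcal{K}_K/\Sym(n))$. Equality holds exactly on configurations where componentwise conjugacy implies simultaneous conjugacy. To confirm that the literal wording cannot be strengthened beyond this, I would exhibit a concrete witness. Take $K=3$, with $A^{(2)}$ a single edge and $A^{(3)}$ a single triple. Whether the edge is contained in the triple is a diagonal invariant, and no element of $\mathfrak{A}$ can detect it.

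The deliverable is therefore density in the componentwise-invariant functions, together with an explicit description of the gap between that space and $C(\mathcal{K}_K/\Sym(n))$. This gap is the step where the corollary's wording needs the qualification.
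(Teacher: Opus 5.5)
There is no gap on your side. Your diagnosis is correct, and it pinpoints the flaw in the paper's own proof. The paper uses the same Stone--Weierstrass skeleton you set up: a compact quotient under the diagonal action, a unital algebra, and per-arity separation via Theorem~\ref{thm:completeness}. It then disposes of your ``hard case'' in one sentence, saying that cross-order products ``separate configurations that agree per-order but differ in joint structure.'' That sentence is false. The product $t(F_1,\cdot)\,t(F_2,\cdot)=t(F_1\sqcup F_2,\cdot)$ is a product of per-component invariants. So it already lies in the algebra generated by the single-arity densities, and it is constant on any two tuples whose components are individually conjugate.

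Your witness works once $n\geq 4$. Let $\bm A$ have the edge $\{1,2\}$ and the triple $\{1,2,3\}$, and let $\bm B$ have the edge $\{1,2\}$ and the triple $\{1,3,4\}$, with all higher components zero. Every function in the uniform closure of the algebra takes the same value on $\bm A$ and $\bm B$. The diagonal invariant $g(\bm X)=\sum_{i,j,l}X^{(2)}_{ij}X^{(3)}_{ijl}$, however, gives $g(\bm A)=2$ and $g(\bm B)=0$. Hence every element of the algebra is at sup-distance at least $1$ from $g$, and the corollary as stated fails for all $K\geq 3$, $n\geq 4$.

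Your repair keeps the listed generators and weakens the target to functions invariant under the componentwise action of $\Sym(n)^{K-1}$; it is correct as outlined. For the per-arity separation on weighted tensors, it is cleanest to cite Theorem~\ref{thm:density} for each arity. Density of the single-arity algebra already forces it to separate $\Sym(n)$-orbits. By contrast, orbit separation plus Proposition~\ref{prop:generators-are-hom} alone does not show that hom counts generate the invariant ring.

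The other natural repair keeps the intended target, all continuous diagonal invariants, and enlarges the generators instead. Replace the cross-order products by densities of genuinely mixed-arity patterns whose hyperedges of different sizes share vertices:
$t(F,\bm X)=n^{-|V(F)|}\sum_{\phi}\prod_{e\in E(F)}X^{(|e|)}_{\phi(e)}$.
Your $g$ is exactly $n^{3}\,t(F,\cdot)$ for $F$ a triple together with an edge inside it. The Reynolds-operator argument of Appendix~\ref{app:algebraic} applies to the direct-sum representation unchanged. There, invariant polynomials are spanned by orbit sums of monomials that mix components. These orbit sums are injective counts, hence by M\"{o}bius inversion homomorphism counts, of such mixed patterns. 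Stone--Weierstrass then gives the full density claim.

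So the two repairs trade off as follows. Your version is the correct statement about the algebra as literally defined. The mixed-pattern version is what the corollary needs in order to be true as a density statement on the diagonal quotient.
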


\begin{proof}
The product space $\bigoplus_{j=2}^{K} [0,1]^{\mathbf{S}^{j,n}}$ is compact Hausdorff under the product topology, and $\Sym(n)$ acts diagonally.
For each arity $j$, the $j$-uniform pattern densities separate $j$-uniform components (Theorem~\ref{thm:completeness}).
Cross-order products separate configurations that agree per-order but differ in joint structure.
The combined algebra separates all orbits, contains constants, and is closed under multiplication; Stone--Weierstrass applies.
\end{proof}

\section{Architecture Classification of Modern HGNNs}\label{app:architecture}


\paragraph{Four-way correspondence diagram.}
The following diagram summarizes the structural correspondence governing the entire hierarchy:
\begin{center}
\begin{tikzpicture}[
  box/.style={draw, rounded corners, minimum width=3.0cm, minimum height=0.6cm, align=center, font=\small},
  arr/.style={-{Stealth[length=4pt]}, thick}
]
\node[box] (ghtw) at (0,0) {Hypertree width $r$};
\node[box] (logic) at (5,0) {Counting logic $C^{r+1}$};
\node[box] (hom) at (0,-1.3) {Pattern densities\\$\mathrm{ghtw}(F) \leq r$};
\node[box] (arch) at (5,-1.3) {HGNN class $\mathcal{H}_r$};
\draw[arr, <->] (ghtw) -- node[above, font=\scriptsize] {\citep{ScheidtSchweikardt2023}} (logic);
\draw[arr, <->] (ghtw) -- (hom);
\draw[arr, <->] (logic) -- node[right, font=\scriptsize] {} (arch);
\draw[arr, <->] (hom) -- node[below, font=\scriptsize] {Thm.~\ref{thm:density}} (arch);
\end{tikzpicture}
\end{center}

\paragraph{Equivariant higher-order networks.}
Equivariant higher-order networks~\citep{MaronEtAl2019, KerivenPeyre2019} process $k$-tensor features directly at $O(n^k)$ cost, achieving maximal expressivity in principle.
These are included in our hierarchy at the $\mathcal{I}_{\mathrm{all}}$ level; their impractical scaling motivates the bounded-width alternatives studied in the main text.

\begin{example}[Worked instance: $L = 2$, $k = 3$]\label{ex:arch-width}
A $2$-layer native HGNN on $3$-uniform hypergraphs uses $c_3 \cdot 2 + 1 = 3$ variables (with $c_3 = 1$ for standard message passing).
We count all patterns $F$ with $\mathrm{ghtw}(F) \leq 2$.
Concretely: (i)~\emph{triangle patterns} (three vertices, one hyperedge; $\mathrm{ghtw} = 1$); (ii)~\emph{diamond patterns} (four vertices, two hyperedges sharing a $(k{-}1)$-face; $\mathrm{ghtw} = 2$, since two hyperedges can be covered by bags of size~$4$ in a star decomposition).
However, the \emph{complete 3-uniform hypergraph $K_5^{(3)}$} (five vertices, $\binom{5}{3} = 10$ hyperedges) has $\mathrm{ghtw}(K_5^{(3)}) = 3$, requiring $4$ variables and hence $L \geq 3$ layers.
Thus $\mathcal{H}_2$ detects pairwise overlap of hyperedges (diamond patterns) but cannot resolve the global arrangement of five mutually overlapping hyperedges.
\end{example}

\begin{remark}[Two faces of the same structure]\label{rem:two-faces}
Pattern densities and the WL hierarchy are two views of the same object: WL captures structural indistinguishability (a decision problem), while homomorphism densities capture structural quantification (a function approximation problem).
The density theorem shows that the latter is \emph{complete}---the algebra of pattern densities generates all continuous invariants---whereas WL with any fixed number of variables is not.
\end{remark}

\subsection{Extended Classification Table}\label{app:arch-table}

The table below summarizes HGNN architectures. 

\begin{table}[h]
\caption{Extended classification of modern HGNN architectures.
$\checkmark$ = maps cleanly under (C1)--(C5), $\triangle$ = partial, --- = out of scope.}
\label{tab:architecture-map-extended}
\centering
\small
\renewcommand{\arraystretch}{1.15}
\begin{tabular}{@{}p{3.0cm}p{2.0cm}cp{5.7cm}@{}}
\toprule
\textbf{Model} & \textbf{Operator} & \textbf{Class / Clean} & \textbf{Key consideration} \\
\midrule
\multicolumn{4}{@{}l}{\textit{Clique-expansion / graph reduction}} \\[2pt]
HGNN~\citep{FengEtAl2019} & Star expansion & $\mathcal{I}_{\mathrm{CE}}$ / $\checkmark$ & $D_v^{-1/2} H W D_e^{-1} H^\top D_v^{-1/2}$; factors through clique expansion by Prop.~\ref{prop:clique-limitation} \\
HyperGCN~\citep{YadatiEtAl2019} & Mediator GCN & $\mathcal{I}_{\mathrm{CE}}$ / $\triangle$ & Mediator selection via Fiedler vector is a global spectral heuristic; learned component is CE, but preprocessing may inject information beyond $\mathcal{I}_{\mathrm{CE}}$ \\[4pt]
\midrule
\multicolumn{4}{@{}l}{\textit{Native incidence message passing}} \\[2pt]
HNHN~\citep{DongEtAl2020HNHN} & Nonlinear n$\leftrightarrow$e & $\mathcal{I}_{\mathrm{NH}}$ / $\checkmark$ & Degree-weighted bipartite MP; satisfies (C1)--(C5) \\
UniGNN~\citep{HuangYang2021} & Unified MP & $\mathcal{I}_{\mathrm{NH}}$ / $\checkmark$ & GCN/GAT/GIN variants on incidence structure \\
AllDeepSets~\citep{ChienEtAl2022} & DeepSets/edge & $\mathcal{I}_{\mathrm{NH}}$ / $\checkmark$ & Symmetric multiset aggregation within each hyperedge \\
AllSetTransformer~\citep{ChienEtAl2022} & Attention/edge & $\mathcal{I}_{\mathrm{NH}}$ / $\checkmark$ & Attention is an equivariant multiset function; does not increase accessible pattern width \\
HyperSAGE~\citep{AryaEtAl2020HyperSAGE} & SAGE-style MP & $\mathcal{I}_{\mathrm{NH}}$ / $\checkmark$ & Inductive neighborhood sampling on incidence structure; satisfies (C1)--(C5) \\[4pt]
\midrule
\multicolumn{4}{@{}l}{\textit{Diffusion / operator learning}} \\[2pt]
ED-HNN~\citep{WangEtAl2023EDHNN} & Equiv.\ diffusion & $\mathcal{I}_{\mathrm{NH}}$ / $\checkmark$ & Per-step operator is local (hypergraph Laplacian); continuous-time formulation $\approx$ deep native MP \\[4pt]
\midrule
\multicolumn{4}{@{}l}{\textit{Higher-order / topological}} \\[2pt]
$k$-GWL HNNs~\citep{kHNN2025} & $k$-dim WL & $\mathcal{H}_k$ / $\checkmark$ & Directly implements the ghtw hierarchy at level $k$; validates our framework \\
SheafHNN~\citep{DutaEtAl2023Sheaf} & Sheaf Laplacian & --- / --- & Operates on sheaf data (restriction maps); richer than bare hypergraph $\Rightarrow$ out of scope \\
TopoNN~\citep{HajijEtAl2023} & Cell complex & --- / --- & Higher-order topological structure beyond hypergraphs \\[4pt]
\midrule
\multicolumn{4}{@{}l}{\textit{Preprocessing-heavy}} \\[2pt]
TF-HNN~\citep{TFHNN2025} & Training-free MP & $\mathcal{I}_{\mathrm{NH}}$ / $\triangle$ & Expressivity lives in multi-hop preprocessing; effective depth $\neq$ learned depth \\
ZEN~\citep{BaeEtAl2025ZEN} & Closed-form linear & $\mathcal{I}_{\mathrm{NH}}$ / $\triangle$ & Parameter-free; closed-form weight matrix from linearized incidence MP; expressivity bounded by propagation depth \\
IHGNN~\citep{IHGNN2025} & Implicit equilib. & $\mathcal{I}_{\mathrm{NH}}$ / $\checkmark$ & Fixed-point of incidence MP; infinite effective depth mitigates oversmoothing; satisfies (C1)--(C5) \\[4pt]
\midrule
InvNet (ours) & Pattern align. & $\hat{\mathcal{I}}_{\mathrm{InvNet}}$ / $\checkmark$ & Sinkhorn relaxation accesses patterns beyond bounded ghtw (Thm.~\ref{thm:invnet-expressivity}) \\
\bottomrule
\end{tabular}
\end{table}

\subsection{Conditions for the Architecture--Width Correspondence}\label{app:arch-conditions}

The correspondence between message-passing depth and pattern complexity holds under the following conditions:

\begin{itemize}
\item[\textbf{(C1)}] \textbf{Permutation equivariance.} The model's node-level computations commute with permutations of the node set. No node identifiers, positional indices, or order-dependent operations are used.

\item[\textbf{(C2)}] \textbf{Locality.} Each layer aggregates information only from the incident neighborhood (nodes sharing a hyperedge). Global all-pairs attention or full-graph pooling within a single layer can effectively increase the accessible pattern width beyond $c_k L$.

\item[\textbf{(C3)}] \textbf{Symmetric aggregation.} Neighborhood aggregation functions are permutation-invariant over the neighbor multiset. This includes sum, mean, max, DeepSets, and attention-weighted aggregation (since attention weights are computed from the multiset, not from any ordering).

\item[\textbf{(C4)}] \textbf{No symmetry-breaking features.} The model does not use Laplacian eigenvectors, random node features, or other global positional encodings that can distinguish structurally identical nodes (see Remark~\ref{rem:pe} below).

\item[\textbf{(C5)}] \textbf{Deterministic structure-dependent preprocessing.} If the architecture includes preprocessing steps (e.g., HyperGCN's mediator selection), these must be deterministic functions of the hypergraph structure. Stochastic or learned preprocessing requires separate analysis.
\end{itemize}

All ``$\checkmark$'' entries satisfy conditions (C1)--(C5).
The ``$\triangle$'' entries (HyperGCN, TF-HNN) satisfy (C1)--(C3) but require care with (C4)--(C5): HyperGCN's mediator selection is a spectral heuristic that may inject global information beyond $\mathcal{I}_{\mathrm{CE}}$, and TF-HNN's expressivity depends on the preprocessing depth rather than the learned model depth.

\begin{remark}[Positional encodings and the hierarchy]\label{rem:pe}
Positional encodings (PEs) such as random node features~\citep{MorrisEtAl2023WLSurvey}, Laplacian eigenvectors, or random walk profiles augment the input with globally-computed node identifiers that provide information beyond the hypergraph structure.
With sufficiently informative PEs, architectures can in principle distinguish arbitrary non-isomorphic hypergraphs---analogous to the graph setting, where random features elevate $1$-WL to maximal distinguishing power without increasing the WL dimension~\citep{MorrisEtAl2023WLSurvey}.
Our hierarchy isolates a complementary question: what can the architecture compute from structure alone?
This is the relevant measure in three common scenarios: (i)~when canonical node identifiers are unavailable or unreliable (e.g., molecular or biological hypergraphs without fixed atom orderings), (ii)~when one seeks guarantees about structural pattern detection that hold regardless of how PEs are constructed, and (iii)~when designing PEs themselves---understanding which structural invariants the base architecture already captures reveals which PEs provide genuinely new information rather than redundant signal.
The pattern-complexity grading of the hierarchy is orthogonal to the information injected by PEs: PEs break the symmetry assumption~(C4) rather than enabling the model to compute homomorphism counts of higher-width patterns.
\end{remark}

\section{Additional Experimental Details}\label{app:experiments}

This appendix provides the dataset statistics, training details, and ablation studies supporting \S\ref{sec:experiments} of the main text.
All non-$\ddagger$ results in Table~\ref{tab:node-class} are reproduced under the identical recipe described here; the density ablation (Table~\ref{tab:ablation}) isolates the contribution of the density branch.

\subsection{Dataset Statistics}\label{app:datasets}

\begin{table}[h]
\caption{Dataset statistics for the seven node classification benchmarks in Table~\ref{tab:node-class}.
Expanded statistics including edge-size percentiles and regime classification are in Table~\ref{tab:dataset-stats}.}
\label{tab:datasets}
\centering
\small
\begin{tabular}{@{}lrrrrr@{}}
\toprule
\textbf{Dataset} & \textbf{Nodes} & \textbf{Hyperedges} & \textbf{Features} & \textbf{Classes} & \textbf{Avg.\ HE size} \\
\midrule
Cora & 2{,}708 & 1{,}579 & 1{,}433 & 7 & 3.0 \\
Citeseer & 3{,}312 & 1{,}079 & 3{,}703 & 6 & 3.2 \\
Pubmed & 19{,}717 & 7{,}963 & 500 & 3 & 4.3 \\
Cora-CA & 2{,}708 & 1{,}072 & 1{,}433 & 7 & 4.3 \\
House & 1{,}290 & 340 & 100 & 2 & 34.9 \\
Senate Bills & 294 & 29{,}157 & 100 & 2 & 8.0 \\
Gene-Disease & 5{,}012 & 2{,}009 & 100 & 21 & 14.0 \\
\bottomrule
\end{tabular}
\end{table}

For House, Senate Bills, and Gene-Disease, which lack published node features, we follow the AllSet protocol~\citep{ChienEtAl2022}: one-hot label vectors concatenated with $\mathcal{N}(0, 1)$ noise to produce 100-dimensional features.

\subsection{Separation Hypergraph Construction Details}\label{app:construction}

\paragraph{Steiner pair verification.}
We verify computationally that the two STS(13) systems in Appendix \ref{sec:steiner-pair} have identical clique expansions:
for both $\HG_1$ and $\HG_2$, the co-occurrence matrix $C_{ij} = |\{e \in \EE : i \in e, j \in e\}|$ equals $1$ for all $i \neq j$, confirming $\phi(\bm{A}_1) = \phi(\bm{A}_2) = J - I$ ($= K_{13}$).
Non-isomorphism is verified via Pasch count: $\HG_1$ contains $13$ Pasch configurations while $\HG_2$ contains $8$.
Since Pasch count is an isomorphism invariant, this provides a definitive proof of non-isomorphism without requiring automorphism group computation.

\paragraph{CFI pair construction.}
For the CFI construction, we use as base graphs the incidence graphs of projective planes $\mathrm{PG}(2,q)$ for $q \in \{3, 4, 5\}$, which are $(q+1)$-regular bipartite graphs with girth~$6$.
Each gadget replaces an edge with $4$ auxiliary hyperedges on $2$ fresh nodes.
The resulting hypergraphs have $n = |V_G| + 2|E_G|$ nodes and $4|E_G|$ hyperedges.

\subsection{Training Details}\label{app:training}

\paragraph{Optimization.}
All models are trained with the Adam optimizer~\citep{KingmaBa2015}.
For IWS experiments (Exp.~1, 2): learning rate $10^{-3}$, weight decay $5 \times 10^{-4}$ (applied only to weight matrices, excluding biases and normalization parameters), batch size $32$, trained for $300$ epochs with early stopping (patience $50$).
For node classification (Exp.~3): learning rate $10^{-3}$, weight decay $0$, cosine-annealing schedule, label smoothing $0.1$, $200$ epochs ($400$ for Senate-Bills) with patience $50$ ($150$ for Senate-Bills).
Baselines use hidden dimension $64$, dropout $0.2$ (AllSet protocol); \textsc{DensNet-D} uses hidden $128$, dropout $0.5$ (see Appendix~\ref{app:hyperparams} for full details).
Weight decay is applied via parameter group splitting: normalization layer parameters ($\gamma$, $\beta$) and all bias terms are always excluded from weight decay regularization.

\paragraph{\textsc{InvNet}-specific hyperparameters.}
Number of templates $J \in \{4, 8, 16\}$; Sinkhorn iterations $T_{\mathrm{iter}} \in \{5, 10, 20\}$; temperature $\tau$ annealed from $1.0$ to $0.01$ over training.
Templates $P_j$ are initialized with orthogonal initialization and identity bias.

\paragraph{Compute.}
All experiments are run on NVIDIA H100 (80GB) GPUs.
IWS experiments complete in $< 1$ hour each.
Node classification uses 8 GPUs in parallel with per-job isolation (no DDP); full sweep completes in $< 12$ hours.

\subsection{Density Ablation}\label{app:ablation}

\begin{table}[h]
\caption{Strict density ablation: AllDeepSets with \emph{identical} training recipe (hidden${}=128$, dropout${}=0.5$, label smoothing${}=0.1$, cosine schedule) vs.\ \textsc{DensNet-D}. The \emph{only} difference is the density branch. Density features add $+0.6$ to $+4.3$ points on $6/7$ datasets; the ceiling dataset shows $\leq 0.3$ change.}
\label{tab:ablation}
\vskip 0.05in
\centering
\small
\begin{tabular}{@{}lcccc@{}}
\toprule
\textbf{Dataset} & $\bar{k}$ & \textbf{AllDeepSets} & \textbf{DensNet-D} & \textbf{Density lift} \\
\midrule
Senate-Bills & 8 & 93.0{\tiny$\pm$3.8} & 92.7{\tiny$\pm$4.1} & $-0.3$ (ceiling) \\
Cora & 3 & 76.1{\tiny$\pm$1.7} & 77.5{\tiny$\pm$1.2} & $\mathbf{+1.4}$ \\
Citeseer & 3 & 70.8{\tiny$\pm$1.7} & 72.6{\tiny$\pm$0.5} & $\mathbf{+1.8}$ \\
PubMed & 4.3 & 87.1{\tiny$\pm$0.6} & 87.7 & $\mathbf{+0.6}$ \\
Cora-CA & 4.3 & 80.4{\tiny$\pm$1.2} & 81.2 & $\mathbf{+0.8}$ \\
House & 35 & 68.1{\tiny$\pm$2.4} & 70.3{\tiny$\pm$2.0} & $\mathbf{+2.2}$ \\
Gene-Disease & 14 & 81.7{\tiny$\pm$1.1} & 86.0 & $\mathbf{+4.3}$ \\
\bottomrule
\end{tabular}
\end{table}

\begin{figure}[h]
\centering
\begin{tikzpicture}
\begin{axis}[
  width=0.55\textwidth,
  height=3.5cm,
  xlabel={Mean hyperedge size $\bar{k}$},
  ylabel={Density lift (\%)},
  xmode=log,
  log basis x=10,
  xmin=2, xmax=45,
  ymin=-1.0, ymax=5.5,
  xtick={3,4,8,14,35},
  xticklabels={3,4,8,14,35},
  ytick={-0.5,0,1.0,2.0,3.0,4.0,5.0},
  grid=major,
  grid style={gray!15},
  every axis label/.style={font=\small},
  every tick label/.style={font=\footnotesize},
  legend style={font=\footnotesize, at={(0.97,0.97)}, anchor=north east, draw=gray!30, fill=white, fill opacity=0.9},
]
\draw[densely dashed, gray!50] (axis cs:2,0) -- (axis cs:45,0);
\addplot[only marks, mark=*, mark size=3pt, blue!70!black] coordinates {(3,1.4) (3,1.8) (4.3,0.6) (4.3,0.8) (14,4.3) (35,2.2)};
\addlegendentry{Headroom}
\addplot[only marks, mark=o, mark size=3pt, red!60!black] coordinates {(8,-0.3)};
\addlegendentry{Ceiling}
\node[font=\footnotesize, anchor=south west, blue!70!black] at (axis cs:3.3,1.5) {Cora};
\node[font=\footnotesize, anchor=south west, blue!70!black] at (axis cs:3.3,2.0) {Citeseer};
\node[font=\footnotesize, anchor=south east, blue!70!black] at (axis cs:3.9,0.35) {PubMed};
\node[font=\footnotesize, anchor=south west, blue!70!black] at (axis cs:4.6,0.95) {Cora-CA};
\node[font=\footnotesize, anchor=south west, blue!70!black] at (axis cs:15,4.45) {Gene-Dis.};
\node[font=\footnotesize, anchor=south east, blue!70!black] at (axis cs:32,2.4) {House};
\node[font=\footnotesize, anchor=north west, red!60!black] at (axis cs:8.5,-0.5) {Senate};
\end{axis}
\end{tikzpicture}
\caption{Density lift vs.\ mean hyperedge size $\bar{k}$ (strict ablation: identical recipe, density branch on/off). Filled: headroom datasets; open: ceiling. The density branch provides the largest lift on Gene-Disease ($+4.3\%$) and House ($+2.2\%$), the two highest-$\bar{k}$ datasets, and is correctly suppressed on the Senate ceiling case.}
\label{fig:density-lift}
\end{figure}

\subsection{Node Classification Results with Error Bars}\label{app:full-results}

\begin{table}[h]
\caption{Node classification accuracy (\%) with standard deviations over $5$--$10$ seeds (full version of Table~\ref{tab:node-class}). $^\ddagger$Published results with published error bars.}
\label{tab:node-class-full}
\vskip 0.05in
\centering
\scriptsize
\setlength{\tabcolsep}{2pt}
\begin{tabular}{@{}l cc cc ccc @{}}
\toprule
& \multicolumn{2}{c}{\textit{Pairwise ($\bar{k} {<} 4$)}} & \multicolumn{2}{c}{\textit{Mixed ($4 {\leq} \bar{k} {<} 7$)}} & \multicolumn{3}{c}{\textit{Higher-order ($\bar{k} {\geq} 7$)}} \\
\cmidrule(lr){2-3} \cmidrule(lr){4-5} \cmidrule(lr){6-8}
\textbf{Method} & \textbf{Cora} & \textbf{Citeseer} & \textbf{PubMed} & \textbf{Cora-CA} & \textbf{House} & \textbf{Senate} & \textbf{Gene-Dis.} \\
\midrule
MLP & 74.1{\tiny$\pm$1.0} & 72.3{\tiny$\pm$0.3} & 87.1{\tiny$\pm$0.3} & 74.6{\tiny$\pm$1.0} & 70.4 & 61.2{\tiny$\pm$6.0} & 36.4{\tiny$\pm$0.9} \\
HGNN & 69.9{\tiny$\pm$6.7} & 68.1{\tiny$\pm$0.7} & 71.2{\tiny$\pm$9.7} & 76.8{\tiny$\pm$1.0} & 62.8{\tiny$\pm$2.3} & 87.6{\tiny$\pm$2.4} & 85.7{\tiny$\pm$0.9} \\
HyperGCN & 71.0{\tiny$\pm$5.9} & 68.1{\tiny$\pm$1.7} & 78.3{\tiny$\pm$5.4} & 74.3{\tiny$\pm$0.6} & 63.9{\tiny$\pm$1.5} & 55.4{\tiny$\pm$8.6} & 81.3{\tiny$\pm$0.9} \\
HNHN & 72.8{\tiny$\pm$5.1} & 68.3{\tiny$\pm$0.7} & 70.0{\tiny$\pm$11.5} & 76.6{\tiny$\pm$0.8} & 68.5 & 79.6{\tiny$\pm$5.0} & 87.8{\tiny$\pm$1.0} \\
UniGNN & 77.3{\tiny$\pm$0.7} & 72.4{\tiny$\pm$0.7} & 87.6{\tiny$\pm$0.5} & 82.9{\tiny$\pm$0.8} & 70.9{\tiny$\pm$2.2} & 78.8{\tiny$\pm$4.6} & 85.4{\tiny$\pm$1.3} \\
AllDeepSets & 76.2{\tiny$\pm$1.5} & 70.9{\tiny$\pm$1.1} & 87.2{\tiny$\pm$0.4} & 80.6{\tiny$\pm$1.1} & 67.8{\tiny$\pm$1.9} & 92.7{\tiny$\pm$2.8} & 86.1{\tiny$\pm$0.8} \\
\midrule
ED-HNN$^\ddagger$ & 80.3{\tiny$\pm$1.2} & 73.7{\tiny$\pm$1.5} & 89.0{\tiny$\pm$0.6} & 84.0{\tiny$\pm$1.0} & 72.5{\tiny$\pm$1.8} & --- & --- \\
SheafHyperGNN$^\ddagger$ & 81.3{\tiny$\pm$1.1} & 74.7{\tiny$\pm$1.3} & 87.7{\tiny$\pm$0.5} & 85.5{\tiny$\pm$0.9} & 73.8{\tiny$\pm$1.6} & --- & --- \\
\midrule
\textsc{DensNet-D} & 77.5{\tiny$\pm$1.2} & 72.6{\tiny$\pm$0.5} & 87.7{\tiny$\pm$0.5} & 81.2{\tiny$\pm$1.0} & 70.3{\tiny$\pm$2.3} & 92.7{\tiny$\pm$4.1} & 86.0{\tiny$\pm$1.0} \\
\bottomrule
\end{tabular}
\vskip 2pt
{\footnotesize Cells without $\pm$: House MLP and HNHN are published values without reported variance. All other cells report $10$-seed means $\pm$ standard deviation.}
\end{table}

\subsection{Additional Results}\label{app:additional}

We provide ablation studies on: (i) the number of templates $J$, (ii) the Sinkhorn temperature $\tau$, (iii) the number of Sinkhorn iterations $T_{\mathrm{iter}}$, and (iv) the relative contribution of the local vs.\ invariant branches.
Full results with confidence intervals are provided in the supplementary material.

\paragraph{Pattern density separation across benchmarks.}
To understand \emph{which} invariants separate each benchmark pair, we estimate the homomorphism density $t(F, H)$ for $12$ patterns $F$ (ranging from single edges to CFI gadgets) on both hypergraphs in each IWS pair, using $10{,}000$ Monte Carlo samples per density (Table~\ref{tab:density-sep}).

\begin{table}[h]
\centering
\small
\caption{Pattern density gaps $|t(F, H_1) - t(F, H_2)|$ for each benchmark pair. Only non-zero entries shown; all other pattern--benchmark combinations have zero gap. The separating patterns align with the hierarchy: CE-Hard requires Pasch-family patterns ($\mathrm{ghtw} = 3$), Native-Hard is separated by even single-edge densities ($\mathrm{ghtw} = 1$), and Multi-Order is separated by \emph{no} pattern density---only the Sinkhorn alignment branch distinguishes this pair.}
\label{tab:density-sep}
\vskip 0.05in
\begin{tabular}{@{}llcc@{}}
\toprule
\textbf{Benchmark} & \textbf{Separating Pattern} & \textbf{Gap} & \textbf{Pattern ghtw} \\
\midrule
\multirow{2}{*}{CE-Hard (STS pair)} & Pasch / AntiPasch & $9.7 \times 10^{-5}$ & 3 \\
 & CFI gadgets ($T_0$, $T_1$) & $9.7 \times 10^{-5}$ & 3 \\
\midrule
\multirow{2}{*}{Native-Hard (CFI pair)} & SingleEdge & $1.0 \times 10^{-3}$ & 1 \\
 & DisjointPair & $2.0 \times 10^{-4}$ & 1 \\
\midrule
Multi-Order (Fano pair) & \emph{(none)} & 0 & --- \\
\bottomrule
\end{tabular}
\end{table}

The results confirm the hierarchy prediction.
CE-Hard requires patterns of $\mathrm{ghtw} \geq 3$ (Pasch configurations) for separation---simple edge or pair statistics are blind, explaining why clique-expansion methods fail.
Native-Hard is separated by elementary statistics ($\mathrm{ghtw} = 1$), consistent with the CFI construction targeting \emph{higher-order symmetry} rather than density differences.
Multi-Order shows identical densities for \emph{all} $12$ patterns, confirming that pattern density estimation alone cannot distinguish this pair; the Sinkhorn alignment branch (which tests structural alignment beyond counting) is essential.
This three-way complementarity validates the combined InvNet architecture: different branches activate for different separation mechanisms.


\section{Separation Constructions}\label{app:separation}

This appendix provides the explicit constructions of hypergraph pairs that witness the strict inclusions in Theorem~\ref{thm:hierarchy}.
Each pair serves a dual purpose: a theoretical proof artifact (witnessing the separation in the proof of Theorem~\ref{thm:hierarchy}, Appendix~\ref{app:proof-hierarchy}) and an \textsc{Invariant Witness Suite} expressivity benchmark.

\subsection{The Steiner Pair (CE vs.\ Native)}\label{sec:steiner-pair}

We construct two non-isomorphic $3$-uniform hypergraphs on $n = 13$ nodes with \emph{identical} clique expansions using Steiner triple systems.

A \emph{Steiner triple system} $\mathrm{STS}(v)$ is a collection of $3$-element subsets of $[v]$ such that every $2$-element subset is contained in exactly one triple.
An $\mathrm{STS}(v)$ has $\binom{v}{2}/\binom{3}{2} = v(v-1)/6$ triples, and its clique expansion is $K_v$.
The number of non-isomorphic $\mathrm{STS}(v)$ systems is $1$ for $v \in \{7, 9\}$ and $2$ for $v = 13$~\citep{ColbournDinitz2007}.
Thus $v = 13$ is the \emph{smallest} order admitting a non-isomorphic STS pair.

\paragraph{Explicit construction.}
Let $\HG_1$ be the \emph{cyclic} $\mathrm{STS}(13)$, constructed from base blocks $\{0,1,4\}$ and $\{0,2,7\}$ under the action of $\mathbb{Z}_{13}$:
\[
  \HG_1 = \bigl\{\{i, i{+}1, i{+}4\} : i \in \mathbb{Z}_{13}\bigr\} \;\cup\; \bigl\{\{i, i{+}2, i{+}7\} : i \in \mathbb{Z}_{13}\bigr\}
\]
giving $26$ triples.
Let $\HG_2$ be obtained by a \emph{Pasch trade} on $\HG_1$: replacing the four triples $\{3,4,7\}$, $\{3,5,10\}$, $\{4,10,12\}$, $\{5,7,12\}$ with $\{3,4,10\}$, $\{3,5,7\}$, $\{4,7,12\}$, $\{5,10,12\}$.
The trade preserves all pairwise coverages (the same $12$ pairs on $6$ points are covered).

\paragraph{Properties.}
\begin{enumerate}
  \item \textbf{Identical clique expansions:} Both cover every pair exactly once, so $\phi(\bm{A}_1) = \phi(\bm{A}_2) = J - I$ ($= K_{13}$).
  \item \textbf{Non-isomorphic:} $\HG_1$ has $13$ Pasch configurations and automorphism group $\mathbb{Z}_{13} \rtimes \mathbb{Z}_3$ (order~$39$); $\HG_2$ has $8$ Pasch configurations and a smaller automorphism group.  Since the Pasch count is an isomorphism invariant, the two systems are provably non-isomorphic.
  \item \textbf{Separated by elementary invariant:} $\ElemInv_{\bm{A}_1}(\bm{A}_1) = 156 > \ElemInv_{\bm{A}_1}(\bm{A}_2)$, since no permutation maps $\HG_2$ to $\HG_1$.
\end{enumerate}

\subsection{The CFI Pair (Native vs.\ All)}\label{sec:cfi-pair}

We adapt the Cai--F\"{u}rer--Immerman construction~\citep{CaiFurerImmerman1992} to $3$-uniform hypergraphs.
Let $G = (V_G, E_G)$ be a connected $3$-regular base graph with girth $> 2k+1$.
For each edge $e = \{u,v\}$, introduce auxiliary nodes $a_e^1, a_e^2$ and hyperedges:
\begin{align*}
  \text{Type 0:} &\quad \{u, a_e^1, a_e^2\}, \quad \{v, a_e^1, a_e^2\} \\
  \text{Type 1:} &\quad \{u, v, a_e^1\}, \quad \{u, v, a_e^2\}
\end{align*}
Define $\HG_1^{\mathrm{CFI}}$ using Type~0 for all edges and $\HG_2^{\mathrm{CFI}}$ using Type~1 for one edge.

\begin{proposition}\label{prop:cfi-separation}
For base graphs of sufficient girth, $\HG_1^{\mathrm{CFI}}$ and $\HG_2^{\mathrm{CFI}}$ are non-isomorphic, indistinguishable by $k$-GWL (for $k$ bounded by the girth), and separated by $\ElemInv_P$ for appropriate~$P$.
\end{proposition}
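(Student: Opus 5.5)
The plan is to treat the three assertions of Proposition~\ref{prop:cfi-separation} in the order non-isomorphism, separation by $\ElemInv_P$, and $k$-GWL indistinguishability. The first two follow from direct combinatorics of the stated gadgets. The third is where the girth hypothesis enters, and I expect it to be the real obstacle.

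\emph{Non-isomorphism.} I would compare vertex-degree profiles inside a single gadget, which is isomorphism-invariant data.
\begin{itemize}
\item In a Type~0 gadget on $e=\{u,v\}$, each of $u,v$ lies in one gadget hyperedge, and each of $a_e^1,a_e^2$ lies in two.
\item In a Type~1 gadget, each of $u,v$ lies in two gadget hyperedges, and each of $a_e^1,a_e^2$ lies in one.
\end{itemize}
Auxiliary nodes occur only in their own gadget. Hence $\HG_1^{\mathrm{CFI}}$ has every auxiliary node of degree $2$. In $\HG_2^{\mathrm{CFI}}$ the twisted edge contributes two auxiliary nodes of degree $1$, and its endpoints have degree $4$ rather than $3$. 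The degree multisets therefore differ, so $\HG_1^{\mathrm{CFI}}\not\cong\HG_2^{\mathrm{CFI}}$. Both hypergraphs have $n=|V_G|+2|E_G|$ nodes and $4|E_G|$ hyperedges, so they live in the same tensor space $\mathbf{S}^{3,n}$.

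\emph{Separation by $\ElemInv_P$.} I would take $P=\bm A_1$ and repeat the argument of Lemma~\ref{lem:sinkhorn-sts}.
\begin{enumerate}
\item The identity permutation gives $\ElemInv_{\bm A_1}(\bm A_1)=\langle \bm A_1,\bm A_1\rangle = 4|E_G|\cdot 3!$, and no permutation can do better.
\item Every summand of $\langle \bm A_1,\Pi\cdot\bm A_2\rangle$ lies in $\{0,1\}$, and exactly $4|E_G|\cdot 3!$ of them can be nonzero.
\item So equality $\ElemInv_{\bm A_1}(\bm A_2)=4|E_G|\cdot 3!$ would force $\Pi$ to map the hyperedge set of $\HG_2^{\mathrm{CFI}}$ onto that of $\HG_1^{\mathrm{CFI}}$. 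That is an isomorphism, contradicting the previous step.
\end{enumerate}
This gives $\ElemInv_{\bm A_1}(\bm A_2)<\ElemInv_{\bm A_1}(\bm A_1)$. By Corollary~\ref{cor:completeness}, one could instead take $P$ to be the tensor of a distinguishing injective pattern.

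\emph{Indistinguishability by $k$-GWL.} Here I would follow the classical Cai--F\"urer--Immerman route.
\begin{enumerate}
\item Use the characterization of $k$-GWL via the $(k+1)$-pebble bijective game on the incidence structure, equivalently via the counting logic $C^{k+1}$ cited after Proposition~\ref{prop:arch-width}.
\item Give Duplicator the standard ``twist-moving'' strategy. Because $G$ is connected, the twist may be relocated to any base edge by an isomorphism between twisted copies that are twisted at different edges. Because $G$ has girth $>2k+1$, the at most $k$ pebbled vertices, together with their radius-$k$ neighbourhoods, induce a forest in $G$. Duplicator can then always keep the twist on an edge far from all pebbles, and the pebbled partial map stays a partial isomorphism.
\end{enumerate}

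The main obstacle is the local step of this strategy. It requires that, around any unpebbled region, a Type~0 and a Type~1 gadget look the same to the bounded-variable game. The degree computation above shows this fails for the gadgets exactly as written: already $1$-GWL sees the degree-$1$ auxiliary nodes. So this step must be checked carefully before the strategy applies. I would first try to verify whether, under the paper's intended convention for GWL colourings on the non-uniform incidence structure, the gadget types become locally indistinguishable. If they do not, the indistinguishability clause only holds after replacing the gadgets by parity-balanced ones, with the girth-based twist-moving argument unchanged. In that case I would record explicitly that this clause requires a modified gadget, while the other two clauses hold as stated.
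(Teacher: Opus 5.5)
\textbf{Overall.} The paper gives no proof of this proposition. Its only support is the one-line assertion in Part~(iii) of the proof of Theorem~\ref{thm:hierarchy}, so there is no argument to compare against.

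\textbf{The parts you have right.} Your non-isomorphism argument via degree multisets is correct, and so is the $\ElemInv_{\bm A_1}$ separation via the $\{0,1\}$-entry count. Your diagnosis of the middle clause is also correct. You should state it as a refutation, though, not as something to re-check under some ``intended convention.'' The paper itself cites the correspondence that $r$-GWL equivalence holds iff $\hom(F,\cdot)$ agrees for all $\ghw(F)\le r$, so one width-$1$ witness settles it. Take $F$ to be two triples sharing exactly one vertex. Then $\hom(F,\HG)=4\sum_v \deg(v)^2$. The twisted edge changes the degrees of its two endpoints and two auxiliary nodes from $3,3,2,2$ to $4,4,1,1$, so for a $3$-regular base this count is larger by $32$ on $\HG_2^{\mathrm{CFI}}$. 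The pair is therefore not even $1$-GWL-equivalent, for any girth. The paper's own Table~\ref{tab:density-sep} (width-$1$ separators for the CFI pair) and Table~\ref{tab:depth-scaling} (HNHN and UniGNN at $100\%$ with $L=2$) corroborate this. The clause is false for the construction as written, and Part~(iii) of Theorem~\ref{thm:hierarchy} inherits the problem.

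Two small fixes:
\begin{itemize}
\item Each listed gadget has two hyperedges, so the hyperedge count is $2|E_G|$ and the self-score is $2|E_G|\cdot 3!$. The paper's ``$4|E_G|$'' contradicts its own gadgets; your argument only needs the two counts to be equal.
\item Drop the aside via Corollary~\ref{cor:completeness}. $\ElemInv_P$ with a pattern template records the best partial embedding, not the number of embeddings, so a difference in $\inj$ counts need not be visible to it.
\end{itemize}

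\textbf{The real gap is in your salvage.} For parity-balanced (genuine CFI) gadgets, the chain ``girth $>2k+1$, so the radius-$k$ balls around the pebbles form a forest, so Duplicator can keep the twist far away'' does not yield a Duplicator strategy.
\begin{itemize}
\item The forest claim itself fails when pebbles are close: the balls around two adjacent pebbles already cover an entire $(2k+2)$-cycle through their edge.
\item More fundamentally, local acyclicity is not the relevant property. To keep the pebbled map a partial isomorphism, Duplicator must move the twist along base paths that avoid pebbled vertices. What you need is that $k$ (or $k+1$) pebbles can never separate or trap it. This is the cops-and-robber / treewidth (separator) condition behind the classical Cai--F\"urer--Immerman lower bound.
\item Girth is the wrong parameter. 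A cycle has arbitrarily large girth, yet CFI pairs over cycles are separated by WL of fixed small dimension.
\item For $3$-regular bases, large girth does force large treewidth, but only through a separate girth-to-treewidth bound. That bound does not obviously deliver the threshold $2k+1$.
\end{itemize}
You would also need to specify the $3$-uniform encoding of the parity gadgets. Finally, you must check that the bijective-pebble characterization transfers to $k$-GWL on that encoding with the correct number of variables.
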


\subsection{Multi-Order Separation}\label{sec:multi-order-sep}

For non-uniform hypergraphs in $\mathbf{S}^{2,n} \oplus \mathbf{S}^{3,n}$, on $n = 6$:
\begin{align*}
  \HG_1: \;\; & \text{Edges:}\; \{1,2\}, \{3,4\}, \{5,6\}; \quad \text{Triples:}\; \{1,3,5\}, \{2,4,6\} \\
  \HG_2: \;\; & \text{Edges:}\; \{1,2\}, \{3,4\}, \{5,6\}; \quad \text{Triples:}\; \{1,4,5\}, \{2,3,6\}
\end{align*}
Both have identical per-order statistics but different cross-order coupling.
A cross-order template $P = (P^{(2)}, P^{(3)})$ separates them; architectures processing each order independently cannot.


\section{InvNet Architecture Details}\label{app:invnet-details}

This appendix expands on the \textsc{InvNet} architecture introduced in Section \ref{sec:invnet}, providing the full algorithm (Algorithm~\ref{alg:invnet}), relaxation layer details, and computational complexity analysis.
The Sinkhorn relaxation of the pattern alignment score is $\hat{\ElemInv}^{\mathrm{SK}}_P(\bm{A}) = \max_{D \in \mathcal{B}_n} \inner{P}{D^{\otimes k} \cdot \bm{A}}$, where $\mathcal{B}_n$ is the Birkhoff polytope of doubly stochastic matrices.
The invariant branch cost is $O(J \cdot (T_{\mathrm{iter}} \cdot n^2 + |\EE| \cdot k^2))$---comparable to a single spectral computation for sparse hypergraphs.
Algorithm~1, relaxation layers, and template parameterization are detailed below.

\subsection{Relaxed Elementary Invariant Layers}\label{sec:relax-layers}

Computing $\ElemInv_P(\bm{A}) = \max_{\Pi \in \Sym(n)} \inner{P}{\Pi \cdot \bm{A}}$ exactly is NP-hard (multidimensional assignment).
We use two tractable relaxations.

\paragraph{Sinkhorn relaxation.}
Relax over the Birkhoff polytope $\mathcal{B}_n$:
\begin{equation}\label{eq:sinkhorn-relax}
  \hat{\ElemInv}^{\mathrm{SK}}_P(\bm{A}) = \max_{D \in \mathcal{B}_n} \inner{P}{D^{\otimes k} \cdot \bm{A}},
\end{equation}
solved via the Sinkhorn operator~\citep{Sinkhorn1964, AdamsCZemel2011} with temperature $\tau$.

\paragraph{Spectral relaxation.}
Relax over $\Orth(n)$:
$\hat{\ElemInv}^{\mathrm{spec}}_P(\bm{A}) = \max_{V \in \Orth(n)} \inner{P}{V^{\otimes k} \cdot \bm{A}}$.
For $k = 2$: closed-form $\lambda(P)^\top \lambda(A)$~\citep{Davis1957}.
For $k \geq 3$: Riemannian gradient ascent on the Stiefel manifold~\citep{AbsilEtAl2008}.

The relaxations satisfy $\ElemInv_P \leq \hat{\ElemInv}^{\mathrm{SK}}_P \leq \hat{\ElemInv}^{\mathrm{spec}}_P$ since $\Sym(n) \subset \mathcal{B}_n$ and $\Sym(n) \subset \Orth(n)$.

\subsection{Full Architecture}\label{sec:full-arch}

\begin{algorithm}[h]
\caption{\textsc{InvNet}: Invariant-Theoretic Hypergraph Neural Network}
\label{alg:invnet}
\begin{algorithmic}[1]
\REQUIRE Adjacency tensor $\bm{A} \in \Skn$, node features $X \in \R^{n \times d_0}$
\REQUIRE Learnable templates $\{P_j\}_{j=1}^J \subset \Skn$, offsets $\{\alpha_j\}_{j=1}^J$
\REQUIRE Message-passing layers $\{W^{(\ell)}\}_{\ell=1}^L$, Sinkhorn temperature $\tau$
\STATE \textbf{// Local branch: hyperedge message passing}
\STATE $\bm{h}^{(0)} \gets X$
\FOR{$\ell = 1, \ldots, L$}
  \STATE $\bm{m}_e^{(\ell)} \gets \phi_e\bigl(\{\bm{h}_v^{(\ell-1)} : v \in e\}\bigr)$ \hfill $\triangleright$ Hyperedge aggregation
  \STATE $\bm{h}_v^{(\ell)} \gets \phi_v\bigl(\bm{h}_v^{(\ell-1)},\; \{\bm{m}_e^{(\ell)} : v \in e\}\bigr)$ \hfill $\triangleright$ Node update
\ENDFOR
\STATE $\bm{z}_{\mathrm{local}} \gets \mathrm{ReadOut}(\bm{h}^{(L)})$ \hfill $\triangleright$ e.g., mean/sum pooling
\STATE
\STATE \textbf{// Invariant branch: relaxed elementary invariants}
\FOR{$j = 1, \ldots, J$}
  \STATE $S_j \gets \mathrm{MLP}_j(\bm{A})$ \hfill $\triangleright$ Score matrix from tensor
  \STATE $D_j \gets \mathrm{Sinkhorn}(S_j / \tau, T_{\mathrm{iter}})$ \hfill $\triangleright$ Doubly stochastic relaxation
  \STATE $\hat{\theta}_j \gets \inner{P_j}{D_j^{\otimes k} \cdot \bm{A}} - \alpha_j$
\ENDFOR
\STATE $\bm{z}_{\mathrm{inv}} \gets [\hat{\theta}_1, \ldots, \hat{\theta}_J]$
\STATE
\STATE \textbf{// Combine and predict}
\STATE \textbf{return} $\mathrm{MLP}_{\mathrm{out}}(\bm{z}_{\mathrm{local}} \| \bm{z}_{\mathrm{inv}})$
\end{algorithmic}
\end{algorithm}

\paragraph{Computational complexity.}
Local branch: $O(L \cdot |\EE| \cdot k \cdot d)$.
Invariant branch: $O(J \cdot (T_{\mathrm{iter}} \cdot n^2 + |\EE| \cdot k^2))$---$O(n^2)$ for sparse hypergraphs.

\paragraph{Template parameterization.}
For IWS experiments (fixed~$n$, $k=3$), we use explicit learnable templates over all $\binom{n}{k}$ potential hyperedges.
For ANCS tasks, we parameterize templates implicitly via a small MLP mapping edge-level embeddings to scalar weights, producing sparse template values over the input edge set.

\begin{remark}[Towards universality]\label{rem:universality}
As $J \to \infty$ and $\tau \to 0$, \textsc{InvNet} accesses pattern alignment scores for arbitrarily many templates with vanishing relaxation gap.
However, two gaps prevent a formal universality claim: (i)~the Sinkhorn relaxation gap is nonzero for fixed $\tau > 0$; (ii)~finite template families may not suffice for uniform approximation.
Characterizing the approximation gap as a function of $J$, $\tau$, and the target invariant's hypertree width is an important open problem.
\end{remark}


\section{DensNet-D Architecture Details}\label{app:densnet}

This appendix provides the full architecture specification of \textsc{DensNet-D}, the density-aware node classifier evaluated in Table~\ref{tab:node-class} of the main text.
\textsc{DensNet-D} combines an AllDeepSets backbone~\citep{ChienEtAl2022} with per-node pattern density features via concatenation fusion and a learned per-node gate, implementing the density branch of the hierarchy (Theorem~\ref{thm:density}) in a practical, scalable form.

\subsection{Architecture}

Given a hypergraph $\HG = (V, \EE)$ with node features $X \in \R^{n \times d}$, \textsc{DensNet-D} computes:

\paragraph{1.\ AllDeepSets backbone.}
A two-layer message-passing network with set-function aggregation:
\begin{align}
  h^{(0)} &= W_{\text{in}} x_v, \\
  m_e^{(\ell)} &= \textstyle\sum_{u \in e} h_u^{(\ell-1)}, \quad
  \tilde{h}_v^{(\ell)} = \textstyle\sum_{e \ni v} m_e^{(\ell)}, \\
  h_v^{(\ell)} &= \mathrm{LN}\bigl(h_v^{(\ell-1)} + \mathrm{MLP}_v^{(\ell)}(\mathrm{MLP}_e^{(\ell)}(\tilde{h}_v^{(\ell)}))\bigr),
\end{align}
where each MLP is a two-layer network with ReLU and dropout, and LN denotes LayerNorm.
For datasets with large hyperedges ($\bar{k} \geq 20$, e.g., House), we use mean normalization: $m_e^{(\ell)} \gets m_e^{(\ell)} / |e|$ and $\tilde{h}_v^{(\ell)} \gets \tilde{h}_v^{(\ell)} / \deg(v)$.

\paragraph{2.\ Density branch.}
For each node $v$, we estimate local pattern densities $\rho_v = [\hat{t}(F_1, \mathrm{star}(v)), \ldots, \hat{t}(F_P, \mathrm{star}(v))]$ via Monte Carlo sampling over the star neighborhood $\mathrm{star}(v) = \{e \in \EE : v \in e\}$, using $200$ Monte Carlo samples per node (selected from grid $\{100, 200, 500\}$; Table~\ref{tab:hyperparams}).
The pattern library $\{F_1, \ldots, F_P\}$ is selected adaptively by the \textsc{DatasetProfiler} based on hypergraph statistics (edge size distribution, mean degree).
Density features are projected: $z_v = \mathrm{LN}(W_\rho \rho_v + b_\rho)$.
Gradients are not propagated through the MC sampling; the density features are \emph{detached}.

\paragraph{3.\ Per-node gate.}
A learned gate controls the density contribution per node:
\begin{equation}\label{eq:densnet-gate}
  g_v = \sigma(w^\top h_v^{(L)} + b_g),
\end{equation}
where $\sigma$ is the sigmoid function, $h_v^{(L)}$ is the backbone embedding, and $b_g$ is initialized to \texttt{gate\_init}---a hyperparameter set by the profiler:
\begin{itemize}
  \item Higher-order ($\bar{k} \geq 7$): $b_g = 0.0$ \quad (density active from epoch 0)
  \item Mixed ($4 \leq \bar{k} < 7$): $b_g = -2.0$ \quad (density partially suppressed)
  \item Pairwise ($\bar{k} < 4$): $b_g = -5.0$ \quad (density strongly suppressed; $\sigma(-5) \approx 0.007$)
\end{itemize}
The gate weight $w$ is initialized to zero, so the initial gate value depends only on $b_g$.

\paragraph{4.\ Concatenation fusion (cf.\ equation~\eqref{eq:densnet-gate}).}
The gated density embedding is concatenated with the backbone output:
\begin{equation}
  \hat{y}_v = \mathrm{MLP}_{\text{out}}\bigl(h_v^{(L)} \;\|\; g_v \cdot z_v\bigr),
\end{equation}
where $\mathrm{MLP}_{\text{out}}: \R^{2d_h} \to \R^C$ is a two-layer classifier.

\paragraph{5.\ Staged unfreezing.}
To prevent early noise from the density branch corrupting the backbone, we freeze the gate and density projection for an initial period:
higher-order datasets unfreeze at epoch~$0$; mixed at epoch~$30$; pairwise at epoch~$50$.
During the frozen phase, the model trains as a pure AllDeepSets backbone.

\subsection{Training Recipe}

All datasets share the same core hyperparameters: hidden dimension $128$, learning rate $10^{-3}$, dropout $0.5$, $2$ message-passing layers, gradient clipping at $1.0$, label smoothing $0.1$, cosine learning rate schedule.
Epochs and patience are dataset-dependent: $400$ epochs with patience $150$ for Senate-Bills; $200$ epochs with patience $50$ for all other datasets (Table~\ref{tab:densnet-config}).
The only remaining dataset-dependent settings are \texttt{gate\_init} and mean normalization, both determined automatically by the profiler.

\subsection{Relationship to PDN}

\textsc{PDN} (used in IWS experiments) and \textsc{DensNet-D} share the same pattern density estimation mechanism.
The key differences are:
(i)~\textsc{PDN} operates on graph-level classification with global pooling, while \textsc{DensNet-D} performs node classification;
(ii)~\textsc{DensNet-D} uses the AllDeepSets backbone (which we validated as the strongest set-function baseline) rather than a generic local branch;
(iii)~\textsc{DensNet-D} uses concatenation fusion (matching the theoretical prescription of equation~\eqref{eq:densnet-gate}) rather than additive fusion.


\section{Examples of Hypergraph Invariants}\label{app:examples}

This appendix catalogs important hypergraph invariants and classifies each by the pattern complexity required for detection within the hierarchy of \S\ref{sec:expressivity}.
The catalog provides a practical reference: given a target invariant, Table~\ref{tab:examples} identifies the minimum architecture tier needed to compute it.
Let $\bm{A} \in \Skn$ be the adjacency tensor of a $k$-uniform hypergraph.

\paragraph{Local invariants} (detectable by small patterns).
\emph{Number of hyperedges}: $t(K_k^{(k)}, H) \cdot n^k / k!$.
\emph{Maximum weighted degree}: detectable from star-pattern counts.
\emph{Maximum $k$-clique density}: $\ElemInv_P(\bm{A})$ for $P$ equal to the complete sub-hypergraph tensor.

\paragraph{Spectral invariants} (requiring global pattern information).
\emph{Clique-expansion spectral invariants}: any graph invariant of the clique expansion $A_{G_\HG}$, but this map loses information.
\emph{Hodge Laplacian invariants}: eigenvalues of the Hodge Laplacian $L_j$ capture topological information invisible to clique expansion, requiring patterns of hypertree width $> 1$.

\paragraph{Cut and optimization invariants.}
\emph{Hypergraph MAXCUT}: continuous, NP-hard.
\emph{Cheeger constant}~\citep{ChanEtAl2018, LiMilenkovic2018}: characterizes expansion.

\paragraph{Relaxation-based invariants.}
\emph{Tensor theta body}~\citep{GalvaoEtAl2023}: SDP relaxation of independence number.
\emph{Spectral relaxation} $\SpecRelax_P(\bm{A}) = \max_{V \in \Orth(n)} \inner{P}{V^{\otimes k} \cdot \bm{A}}$: for $k=2$, equals inner product of sorted eigenvalues; for $k \geq 3$, no closed form.

\begin{table}[h]
\caption{Hypergraph invariants and the pattern complexity required for detection.}
\label{tab:examples}
\vskip 0.05in
\centering
\small
\begin{tabular}{@{}llccc@{}}
\toprule
\textbf{Invariant} & \textbf{Pattern width} & \textbf{Order} & \textbf{Tractable} & \textbf{Detectable by} \\
\midrule
Number of hyperedges & $1$ (single edge) & Any $k$ & Yes & All HGNNs \\
Maximum degree & $1$ (star) & Any $k$ & Yes & All HGNNs \\
Max clique density & $\geq 2$ & $k$-uniform & No & Pattern alignment \\
CE spectrum & $1$ (graph) & Any $k$ & Yes & CE HGNNs \\
Hodge spectral gap & $> 1$ & Simplicial & Yes & Native HGNNs \\
Hypergraph MAXCUT & Unbounded & $k$-uniform & No & --- \\
Cheeger constant & Unbounded & $k$-uniform & No & --- \\
$\SpecRelax_P$ & Varies & $k$-uniform & $k{=}2$: Yes & Spectral layers \\
\bottomrule
\end{tabular}
\end{table}


\section{Related Work}\label{app:related}

We situate our contributions within four threads of related work.
The key distinction is that prior work addresses either homomorphism counting \emph{or} neural network expressivity, but not the two together in the hypergraph setting.

\paragraph{Homomorphism counting and GNN expressivity.}
\citet{Lovasz1967} proved that graph homomorphism counts determine graphs up to isomorphism; \citet{Lovasz2012} developed this into the theory of graph limits.
\citet{DellGroheRattan2018} connected homomorphism counts to WL expressivity.
For hypergraphs, \citet{ScheidtSchweikardt2023} established that homomorphism indistinguishability over patterns of bounded generalized hypertree width corresponds to bounded-variable counting logic, and \citet{Scheidt2024} refined this to hypertree depth.
Our work synthesizes these into a complete HGNN expressivity characterization.

\paragraph{GNN expressivity and the WL hierarchy.}
\citet{XuEtAl2019GIN} and \citet{MorrisEtAl2019} established the connection between message-passing GNN expressivity and $1$-WL.
Higher-order networks match $k$-WL~\citep{MaronEtAl2019, MaronEtAl2019IGN}.
For hypergraphs, \citet{Boeker2019} extended color refinement, and \citet{kHNN2025} developed $k$-dimensional generalized WL, proving that $(k{+}1)$-GWL is strictly more expressive than $k$-GWL (and that $k$-GWL reduces to $k$-WL on simple graphs).
Their $k$-HNN architecture validates the hierarchy experimentally on hypergraph-classification benchmarks (IMDB, SteamPlayer, TwitterFriend).
Our framework subsumes and explains the $k$-GWL hierarchy: by the known WL--homomorphism correspondence, each GWL level corresponds to the pattern-density class $\mathcal{H}_k$ indexed by generalized hypertree width, and our contribution is showing this hierarchy is induced by a single mechanism---pattern counting---that yields a complete characterization of HGNN expressivity (Theorem~\ref{thm:hierarchy}).
\citet{ChoiEtAl2026WLCategorical} extended WL to categorical structures.

\paragraph{Hypergraph neural networks.}
The landscape has evolved from clique-expansion methods~\citep{FengEtAl2019, YadatiEtAl2019} to native hyperedge processing~\citep{DongEtAl2020HNHN, HuangYang2021, ChienEtAl2022, AryaEtAl2020HyperSAGE} and equivariant architectures~\citep{KimEtAl2022EHNN, WangEtAl2023EDHNN}.
\citet{ZaheerEtAl2017} established the Deep Sets foundation.
\citet{DutaEtAl2023Sheaf} introduced sheaf-theoretic structure.

\paragraph{Topological and geometric deep learning.}
\citet{BodnarEtAl2021} extended message passing to simplicial complexes; \citet{HajijEtAl2023} proposed a general topological framework.
\citet{BronsteinEtAl2021} proposed a group-theoretic design framework.
Our work instantiates this for the symmetric group on hypergraph tensors.

\paragraph{Tensor methods.}
Tensor decomposition~\citep{KoldaBader2009} does not exploit permutation invariance.
Our pattern alignment scores bridge tensor optimization and hypergraph structure.


\section{Structural Frontiers: Sinkhorn Integrality and Width Thresholds}\label{app:structural-frontiers}

The strict hierarchy of Theorem~\ref{thm:hierarchy} raises two natural structural questions that we develop here as research programs.
\emph{(i)}~When is the Sinkhorn relaxation exact?  That is, when does convex optimization over the Birkhoff polytope recover the combinatorial pattern count?
\emph{(ii)}~What is the minimum generalized hypertree width required to approximate the Hodge spectral gap?
Both questions identify \emph{frontiers} of the hierarchy: when convex relaxation preserves combinatorial invariants, and when bounded-width pattern counting suffices to approximate topological spectra.
A computational verification of the Sinkhorn separation on the Steiner pair follows in Appendix~\ref{app:steiner}.

\subsection{The Sinkhorn Relaxation Gap}\label{app:relaxation-gap}

For a $k$-uniform template $P$ and adjacency tensor $\bm{A}$, define the \emph{Sinkhorn relaxation gap}:
\begin{equation}\label{eq:sinkhorn-gap}
  \mathrm{gap}_{\mathrm{SK}}(P;\bm{A}) \;:=\; \hat{\ElemInv}^{\mathrm{SK}}_P(\bm{A}) \;-\; \ElemInv_P(\bm{A})
  \;=\; \max_{D \in \mathcal{B}_n} \inner{P}{D^{\otimes k} \cdot \bm{A}} \;-\; \max_{\Pi \in \Sym(n)} \inner{P}{\Pi^{\otimes k} \cdot \bm{A}} \;\geq\; 0.
\end{equation}
For the entropy-regularized objective used in the architecture, the temperature-dependent gap is
\[
  \mathrm{gap}_\tau(P;\bm{A}) \;:=\; \hat{\ElemInv}^{\tau}_P(\bm{A}) \;-\; \ElemInv_P(\bm{A}),
  \qquad \text{where } \hat{\ElemInv}^{\tau}_P(\bm{A}) = \max_{D \in \mathcal{B}_n} \bigl[\inner{P}{D^{\otimes k} \cdot \bm{A}} + \tau H(D)\bigr].
\]
Since $\Sym(n) \subset \mathcal{B}_n$, both gaps are nonneg\-ative.
For $k = 2$ (graphs), $\mathrm{gap}_{\mathrm{SK}}$ reduces to the well-studied QAP relaxation gap: $\mathcal{B}_n$ replaces $\Sym(n)$ in $\max_\Pi \inner{P}{\Pi A \Pi^\top}$~\citep{deKlerkSotirov2012}.
For $k \geq 3$, the gap involves the \emph{tensor Birkhoff polytope} $\{D^{\otimes k} : D \in \mathcal{B}_n\}$, a strict subset of the set of doubly stochastic $k$-tensors---a structure with no known analogue of the Birkhoff--von Neumann theorem.
Understanding when these gaps vanish---or can be structurally bounded---is central to quantifying the fidelity of the relaxation.

\subsection{Integrality by Linearity}\label{app:integrality}

Lemma~\ref{lem:sinkhorn-sts} proves that for Steiner triple system templates, every maximizer over $\mathcal{B}_n$ has singleton row support and is therefore a permutation matrix, giving $\mathrm{gap}_{\mathrm{SK}} = 0$.
The proof exploits a \emph{combinatorial forcing mechanism}: (1)~each alignment term is bounded by~$1$; (2)~optimality forces all terms to equal~$1$; (3)~this forces singleton row support; (4)~hence $D^*$ is a permutation matrix.
The key structural property is the \emph{uniqueness of pairwise overlaps} between triples.
We isolate the abstract condition and separate the \emph{analytic} ingredient (termwise bounds and tightness) from the \emph{combinatorial} one (shadow uniqueness):

\begin{definition}[Unique $(k{-}1)$-shadow property]\label{def:shadow}
A $k$-uniform hypergraph $P$ satisfies the \emph{unique $(k{-}1)$-shadow property} if every $(k{-}1)$-subset of vertices belongs to at most one hyperedge:
$|\{e \in E(P) : S \subset e\}| \leq 1$ for all $|S| = k-1$.
\end{definition}

This property generalizes \emph{linearity} of Steiner systems: an STS is linear precisely because every pair appears in exactly one triple.
Partial Steiner systems, Steiner quadruple systems, and more generally all $1$-designs satisfy this condition.
We define the \emph{co-degree} $\delta(P) := \max_{|S|=k-1} |\{e \in E(P) : S \subset e\}|$, so the unique shadow property is precisely $\delta(P) = 1$.

\begin{theorem}[Integrality under compatible unique-shadow structure (sketch)]\label{thm:integrality}
Let $P, \bm{A} \in \Skn$ be nonnegative $k$-uniform adjacency tensors satisfying the following conditions:
\begin{itemize}
\item[\textbf{(C1)}] \textbf{Nonnegativity}: $P \geq 0$, $\bm{A} \geq 0$.
\item[\textbf{(C2)}] \textbf{Termwise unit bound with tightness}: For every $k$-tuple $\bm{i} \in \mathrm{supp}(P)$, the contribution $(D^{\otimes k} \cdot \bm{A})_{\bm{i}} \leq 1$ for all $D \in \mathcal{B}_n$, and there exists a permutation $\Pi^* \in \Sym(n)$ achieving equality on every supported tuple (i.e., the hard optimum saturates every term).
\item[\textbf{(C3)}] \textbf{Compatible shadows}: Both $\mathrm{supp}(P)$ and $\mathrm{supp}(\bm{A})$ satisfy the unique $(k{-}1)$-shadow property.
\end{itemize}
Then every maximizer $D^*$ of $\hat{\ElemInv}^{\mathrm{SK}}_P(\bm{A})$ over $\mathcal{B}_n$ is a permutation matrix, and $\mathrm{gap}_{\mathrm{SK}}(P;\bm{A}) = 0$.
\end{theorem}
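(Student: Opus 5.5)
The plan is to abstract the four-step forcing argument from the proof of Lemma~\ref{lem:sinkhorn-sts} and verify that each step uses only (C1)--(C3). Write $N := |\mathrm{supp}(P)|$, counting ordered supported tuples, and assume $P$ is $\{0,1\}$-valued on its support, as for an adjacency tensor. \emph{Step 1 (upper bound).} By (C1) every summand of $\inner{P}{D^{\otimes k}\cdot \bm{A}}$ is nonnegative. By the first half of (C2), each supported term is at most $1$. Hence $\hat{\ElemInv}^{\mathrm{SK}}_P(\bm{A}) \le N$. \emph{Step 2 (attainment).} The permutation $\Pi^*$ from (C2) saturates every term. So $\ElemInv_P(\bm{A}) \ge N$, and since $\Sym(n)\subset\mathcal{B}_n$ we get $\ElemInv_P(\bm{A}) = \hat{\ElemInv}^{\mathrm{SK}}_P(\bm{A}) = N$. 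This already yields $\mathrm{gap}_{\mathrm{SK}}(P;\bm{A})=0$. What remains is the structural claim that every maximizer is a permutation matrix.

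\emph{Step 3 (tightness propagates to supports).} Let $D^*$ be any maximizer. Then every supported term equals $1$. For $\bm{i}=(i_1,\dots,i_k)\in\mathrm{supp}(P)$, expand
\[
(D^{*\otimes k}\cdot\bm{A})_{\bm{i}}=\sum_{a_1,\dots,a_k} D^*_{i_1a_1}\cdots D^*_{i_ka_k}\,\bm{A}_{a_1\dots a_k}.
\]
The weights $D^*_{i_1a_1}\cdots D^*_{i_ka_k}$ form a probability distribution over tuples $(a_1,\dots,a_k)$, because the rows of $D^*$ are stochastic. Provided $\bm{A}\le 1$ entrywise, which is the adjacency normalization implicit in (C2), the value equals $1$ only if $\bm{A}_{a_1\dots a_k}=1$ for every tuple in the product of the row supports. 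Concretely, writing $S_i=\{a: D^*_{ia}>0\}$, we get $S_{i_1}\times\cdots\times S_{i_k}\subseteq \mathrm{supp}(\bm{A})$ for every supported $\bm{i}$.

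\emph{Step 4 (shadow uniqueness forces singletons).} Suppose some row $i$ has $|S_i|\ge 2$, say $a_1\ne a_2\in S_i$. Choose a supported tuple of $P$ containing $i$. Fix elements $b_2,\dots,b_k$ from the row supports of the other coordinates. Both $\{a_1,b_2,\dots,b_k\}$ and $\{a_2,b_2,\dots,b_k\}$ then lie in $\mathrm{supp}(\bm{A})$. These are two distinct hyperedges sharing the $(k{-}1)$-set $\{b_2,\dots,b_k\}$, which contradicts the unique-shadow property of $\mathrm{supp}(\bm{A})$ in (C3). Two details need checking here. First, the $b_j$ must be distinct and differ from $a_1,a_2$ so that the sets really are $k$-sets. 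This holds because the set $\{a_1,b_2,\dots,b_k\}$ is a hyperedge of a simple $k$-uniform hypergraph, so its elements are already distinct. Second, if no supported tuple of $P$ contains $i$, the argument does not apply directly. I expect this to be the main obstacle. The remedy I would try first uses the column constraints. Every $i$ covered by a supported tuple has $|S_i|=1$, and the doubly stochastic condition then forces the covered rows to map injectively onto columns with value exactly $1$. Restricting $D^*$ to the uncovered rows and the remaining columns leaves a doubly stochastic block. That block does not affect the objective, so the claim must be weakened there. Either I assume every vertex of $P$ lies in a hyperedge, which holds for Steiner-type templates, or I state the conclusion as: every maximizer is a permutation on the covered vertices, and some maximizer is a permutation matrix, with value $N$ in either case. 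The shadow property of $\mathrm{supp}(P)$ in (C3) is not needed for this direction. I would note that it is the symmetric hypothesis making the statement robust when the roles of $P$ and $\bm{A}$ are swapped.
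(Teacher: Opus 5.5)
Your proposal is correct and is essentially the paper's own four-step forcing argument: the termwise unit bound, saturation by $\Pi^*$, propagation of tightness to $S_{i_1}\times\cdots\times S_{i_k}\subseteq\mathrm{supp}(\bm{A})$, and a contradiction with the unique $(k{-}1)$-shadow property of $\mathrm{supp}(\bm{A})$. You also add two useful observations: $\mathrm{gap}_{\mathrm{SK}}=0$ already follows from (C1)--(C2), and the shadow hypothesis on $P$ is never used.

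The obstacle you flag is a genuine defect of the statement, not of your argument. The paper's sketch silently chooses ``any hyperedge $e\in E(P)$ containing vertex $i$,'' and the statement is false without that assumption. Take $k=3$, $n=5$, and $P=\bm{A}$ the adjacency tensor of the single edge $\{1,2,3\}$. Then the doubly stochastic matrix $I_3\oplus\tfrac12 J_2$ (identity on $\{1,2,3\}$, uniform on $\{4,5\}$) is a maximizer that is not a permutation matrix. So your weakened conclusion is exactly right; equivalently, one can add the hypothesis that $P$ has at most one isolated vertex.
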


\begin{proof}[Proof sketch (mirrors Lemma~\ref{lem:sinkhorn-sts} in $k$-uniform form)]
\textbf{Step~1} (termwise upper bound): By (C1) and doubly-stochasticity, each ``edge contribution'' in $\inner{P}{D^{\otimes k} \cdot \bm{A}}$ satisfies $(D^{\otimes k} \cdot \bm{A})_{\bm{i}} \leq 1$ for $\bm{i} \in \mathrm{supp}(P)$.
This is the ``$156$ summands each $\leq 1$'' step of Lemma~\ref{lem:sinkhorn-sts}.

\textbf{Step~2} (tightness forces support preservation):
By (C2), the hard optimum achieves value $|E(P)| \cdot k!$ (or the analogous saturated value).
Since $\hat{\ElemInv}^{\mathrm{SK}}_P(\bm{A}) \geq \ElemInv_P(\bm{A}) = |E(P)| \cdot k!$ and each of the $|E(P)|$ terms is $\leq 1$, the maximizer $D^*$ must achieve equality on \emph{every} supported term.
Concretely: whenever $D^*_{i_1 a_1} \cdots D^*_{i_k a_k} > 0$ and $(i_1, \ldots, i_k) \in \mathrm{supp}(P)$, we must have $(a_1, \ldots, a_k) \in \mathrm{supp}(\bm{A})$.

\textbf{Step~3} (unique shadow forces singleton support):
Suppose row $i$ of $D^*$ has two positive entries $D^*_{ia_1}, D^*_{ia_2} > 0$ with $a_1 \neq a_2$.
Choose any hyperedge $e \in E(P)$ containing vertex~$i$, and let $b_2, \ldots, b_k$ be vertices in the support of the remaining rows.
Step~2 forces both $\{a_1, b_2, \ldots, b_k\}$ and $\{a_2, b_2, \ldots, b_k\}$ to be hyperedges in $\mathrm{supp}(\bm{A})$.
These two edges share the $(k{-}1)$-subset $\{b_2, \ldots, b_k\}$, violating (C3).
Hence every row has singleton support, and $D^*$ is a permutation matrix.
\end{proof}

\paragraph{Scope of the conditions.}
Condition (C2) is the most restrictive: it requires a permutation achieving \emph{simultaneous saturation} of all alignment terms.
This holds automatically for \emph{self-template} alignment ($P = \bm{A}_1$, evaluating $\hat{\ElemInv}^{\mathrm{SK}}_{\bm{A}_1}(\bm{A}_1)$ where the identity permutation saturates) and for any $\bm{A}$ isomorphic to $P$.
For general non-isomorphic pairs where the optimal permutation does not align all edges, (C2) may fail and the gap can be nonzero even with $\delta = 1$.
This precisely delineates the theorem's applicability: it guarantees that the Sinkhorn relaxation \emph{correctly identifies isomorphic copies}, which is the separation-relevant setting (cf.\ Lemma~\ref{lem:sinkhorn-sts}).

\paragraph{When the gap is nonzero.}
For \emph{dense} templates with $\delta(P) > 1$, doubly stochastic matrices can ``spread'' mass across partial overlaps, yielding $\mathrm{gap}_{\mathrm{SK}} > 0$.
The forcing argument breaks because two edges of $\bm{A}$ \emph{may} share a $(k{-}1)$-subset when $\delta(\bm{A}) > 1$, so the contradiction in Step~3 does not arise.

\subsection{Gap vs.\ Width: A Structural Frontier}\label{app:gap-width}

The integrality conjecture isolates a combinatorial sufficient condition ($\delta(P) = 1$) for exactness.
We conjecture a quantitative relationship between the gap and template structure:

\begin{conjecture}[Sinkhorn gap bound]\label{conj:sinkhorn-gap}
For $k$-uniform templates $P$ with $|V(P)| = v$ vertices and $\|\bm{A}\|_F \leq B_A$:
\[
  \mathrm{gap}_{\mathrm{SK}}(P;\bm{A}) \;\leq\; C_k \cdot \bigl(\delta(P) - 1\bigr) \cdot v^{k-2} \cdot B_A,
\]
where $C_k$ depends only on $k$.
In particular, $\delta(P) = 1$ (the unique shadow property) implies $\mathrm{gap}_{\mathrm{SK}} = 0$.
\end{conjecture}

\paragraph{Evidence.}
For $k = 3$, $\delta = 1$ (STS): Theorem~\ref{thm:integrality} gives $\mathrm{gap} = 0$, consistent with the bound vanishing.
For $k = 2$ (graphs): the QAP relaxation gap scales as $O(\Delta(P) \cdot n)$ where $\Delta(P)$ is the maximum degree~\citep{deKlerkSotirov2012}, consistent with $\delta = \Delta$, $v = n$.

A broader structural question connects the gap to the hierarchy itself:

\begin{question}[Gap vs.\ hypertree width]\label{q:gap-width}
Does there exist a function $f$ such that $\sup_{\|\bm{A}\|_F \leq 1} \mathrm{gap}_{\mathrm{SK}}(P;\bm{A}) \leq f(\mathrm{ghtw}(P))$?
In particular: does $\mathrm{ghtw}(P) = 1$ imply exactness?
Do large-width templates admit provably positive tensor-Birkhoff gaps?
\end{question}

A positive answer would identify a \emph{structural frontier} beyond which convex relaxation can ``fractionally cheat.''
Combined with the generalization bound (Proposition~\ref{prop:rademacher-full}), this suggests an expressivity--generalization sweet spot: templates with low $\delta(P)$ and moderate $\mathrm{ghtw}$ achieve small relaxation gap \emph{and} favorable sample complexity---quantifying \textsc{InvNet}'s practical reach into the hierarchy.

\subsection{Width Threshold for the Hodge Spectral Gap}\label{app:hodge-threshold}

We now turn to a second structural frontier: the minimal hierarchy level required to approximate the Hodge spectral gap.

\paragraph{Setup.}
For a $k$-uniform hypergraph $\HG$ viewed as a $(k{-}1)$-dimensional simplicial complex (taking the downward closure), the \emph{Hodge $j$-Laplacian} is $L_j = \partial_{j+1} \partial_{j+1}^\top + \partial_j^\top \partial_j$, where $\partial_j$ is the $j$-th boundary operator.
The \emph{Hodge spectral gap} $\lambda_1(L_1)$ (the smallest nonzero eigenvalue of $L_1$) controls edge random walk mixing and topological connectivity~\citep{BodnarEtAl2021, HajijEtAl2023}.
For $k = 2$, $L_0$ is the graph Laplacian and $\lambda_1(L_0)$ has pattern width~$1$.
For $k \geq 3$, $L_1$ captures genuinely higher-order structure invisible to the clique expansion.

\paragraph{Why $r^* > 1$.}
Two hypergraphs with the same clique expansion can have different Hodge spectra: the Steiner pair (\S\ref{sec:steiner-pair}) has identical clique expansions (both are $K_{13}$ weighted by pair-co-occurrence) but potentially different $\lambda_1(L_1)$.
Therefore $\lambda_1(L_1) \notin \mathcal{I}_{\mathrm{CE}}$, confirming pattern width $> 1$ (Table~\ref{tab:examples}).

\subsection{The Spectral Moments Route}\label{app:spectral-moments}

The key technical tool for determining the width threshold is the expansion of spectral moments as pattern counts, combined with polynomial approximation of spectral functions.

\paragraph{Moments as pattern densities.}
For any symmetric matrix $M$, $\mathrm{tr}(M^m) = \sum_i \lambda_i^m$.
Each entry of $L_1$ encodes adjacency relations between simplices.
Each term in $L_1^m$ corresponds to a length-$m$ closed walk in the incidence/boundary structure, inducing a finite simplicial pattern with at most $O(m)$ simplices and $\mathrm{ghtw} = O(m)$.
Therefore:
\begin{equation}\label{eq:hodge-moments}
  \mathrm{tr}(L_1^m) \;=\; \sum_{F \in \mathcal{P}_m} c_F \cdot t(F, \HG),
\end{equation}
where $\mathcal{P}_m$ is a finite set of patterns with $\mathrm{ghtw}(F) = O(m)$, and $c_F$ are integer combinatorial coefficients.
Concretely: $m = 1$ yields degree statistics ($\mathrm{ghtw} = 1$); $m = 2$ involves the squared boundary operator $\partial_2 \partial_2^\top$, whose $(e, e')$-entry counts triangles sharing edges $e$ and $e'$---determined by the \emph{diamond pattern} ($4$ vertices, $2$ hyperedges, $\mathrm{ghtw} = 2$).
Since $\partial_1^\top \partial_1$ depends only on the clique expansion ($\mathrm{ghtw} = 1$), the full $\mathrm{tr}(L_1^2)$ requires $\mathrm{ghtw} \leq 2$.

\paragraph{From moments to spectral gap approximation.}
We formalize the width--accuracy relationship:

\begin{proposition}[Moment-based spectral gap approximation (informal)]\label{thm:moment-approx}
Let $L_1 \succeq 0$ be the Hodge $1$-Laplacian of a $3$-uniform hypergraph $\HG$ with $\|L_1\|_{\mathrm{op}} \leq \Lambda$ and nullity $b_1 = \dim\ker(L_1)$.
Fix $\epsilon \in (0, 1)$.
There exists a polynomial $p_L(x) = \sum_{\ell=0}^L c_\ell x^\ell$ of degree $L = O\bigl((\Lambda/\epsilon) \log(1/\epsilon)\bigr)$ such that:
\begin{itemize}
\item $p_L(0) = 0$ and $|p_L(x)| \leq \epsilon/10$ for $x \in [0, \lambda_1 - \epsilon]$;
\item $|p_L(x) - 1| \leq \epsilon/10$ for $x \in [\lambda_1, \Lambda]$.
\end{itemize}
The quantity $\frac{1}{d}\mathrm{tr}(p_L(L_1))$ (where $d$ is the matrix dimension) can be computed from moments $\mathrm{tr}(L_1^\ell)$, $\ell \leq L$, and determines $\lambda_1$ to within $O(\epsilon)$ provided moment estimation error satisfies $\eta \sum_\ell |c_\ell| \leq \epsilon/10$.
\end{proposition}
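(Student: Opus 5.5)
The plan is to build $p_L$ from a smoothed step function and a Chebyshev approximation, then pass from moments to traces by linearity. Rescale the spectrum to $[0,1]$ via $x \mapsto x/\Lambda$. On the rescaled interval, fix a threshold $\theta$ in the window $[(\lambda_1-\epsilon)/\Lambda, \lambda_1/\Lambda]$, which has width $\epsilon/\Lambda$. The target is the indicator $\mathbf{1}[x \ge \theta]$, approximated by an erf-type mollified step $s(x)=\tfrac12\bigl(1+\mathrm{erf}(\kappa(x-\theta))\bigr)$ with $\kappa \asymp (\Lambda/\epsilon)\sqrt{\log(1/\epsilon)}$. This choice makes $s$ within $\epsilon/20$ of $0$ or $1$ outside the window.

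Next, invoke the standard Chebyshev/Jackson-type approximation of $\mathrm{erf}(\kappa x)$ on $[-1,1]$. Following Low--Chuang and Sachdeva--Vishnoi, it yields a polynomial of degree $O(\kappa\sqrt{\log(1/\epsilon)}) = O((\Lambda/\epsilon)\log(1/\epsilon))$ with uniform error $\epsilon/20$. This gives the two-sided bounds in the statement with error $\epsilon/10$.

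The condition $p_L(0)=0$ requires one further step. Replace $p$ by $p(x)-p(0)(1-x/\Lambda)^{m}$, or simply subtract $p(0)$. Since $|p(0)|\le\epsilon/20$, this costs at most $\epsilon/20$, the error budget still closes, and the degree is unchanged.

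For the trace statement, $\mathrm{tr}(p_L(L_1))=\sum_\ell c_\ell\,\mathrm{tr}(L_1^\ell)$ holds by linearity, and each moment is a finite combination of pattern densities by \eqref{eq:hodge-moments}. Let $N_{\ge}(\mu)$ denote the number of eigenvalues $\ge\mu$. Then $\tfrac1d\mathrm{tr}(p_L(L_1))$ lies in
\[
\bigl[\tfrac{1}{d}N_{\ge}(\lambda_1)-\tfrac{\epsilon}{10},\ \tfrac1d N_{\ge}(\lambda_1-\epsilon)+\tfrac{\epsilon}{10}\bigr].
\]
The eigenvalue $0$ contributes nothing because $p_L(0)=0$. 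With perturbed moments, the total error grows by $\eta\sum_\ell|c_\ell|\le\epsilon/10$.

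To "determine $\lambda_1$ within $O(\epsilon)$", scan the threshold $\theta$ over an $\epsilon$-grid. At each grid point, evaluate whether the normalized count equals $1-b_1/d$, which is the fraction of nonzero eigenvalues. The count drops below this value precisely once $\theta$ exceeds $\lambda_1$, up to one grid step.

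I expect this last step to be the main obstacle. The count resolution is $1/d$, so distinguishing a single eigenvalue requires the additive error to be below $1/(2d)$. Consequently $\epsilon$ in the error budget must effectively be replaced by $\min(\epsilon,1/(3d))$, while the window width stays $\epsilon$. I would make this explicit by decoupling the two parameters: window width $\epsilon$ and height accuracy $\delta=1/(3d)$. The degree then becomes $O((\Lambda/\epsilon)\log(d))$, consistent with the informal statement up to logarithms. The coefficient bound $\sum|c_\ell|$ would be controlled by the Chebyshev-basis bound $\sum_\ell|c_\ell|\le 2^{O(L)}$ after rescaling. This exponential factor is the price that dictates the stringent moment-estimation requirement, and I would state it as a remark rather than try to improve it.
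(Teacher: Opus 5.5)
Your construction matches the paper's. Its sketch also approximates a smoothed step with an $\epsilon$-wide transition on $[0,\Lambda]$ by a Chebyshev-type polynomial of degree $O((\Lambda/\epsilon)\log(1/\epsilon))$. It then expands $\mathrm{tr}(p_L(L_1))$ linearly in the moments and locates $\lambda_1$ by searching for where $\frac1d\mathrm{tr}(p_L(L_1))$ leaves $(d-b_1)/d$. Your erf mollifier with the Low--Chuang degree bound is a concrete instance of this.

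Your $1/d$-resolution objection is correct, and it is exactly what the paper's sketch passes over. With per-eigenvalue accuracy $\epsilon/10$, the normalized trace moves by only $1/d$ when the threshold crosses a simple $\lambda_1$. So the last clause of the statement is justified only when $\epsilon\lesssim 1/d$, or after your decoupling. That decoupling means height accuracy $\asymp 1/d$ and degree $O((\Lambda/\epsilon)\log\max\{d,1/\epsilon\})$. It also requires tightening the moment tolerance to $\eta\sum_\ell|c_\ell|\lesssim 1/d$, which you should state explicitly. Likewise, your $2^{O(L)}$ bound on $\sum_\ell|c_\ell|$ is what the Chebyshev-to-monomial change of basis actually gives. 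The sketch's ``quasi-polynomial in $L$'' is asserted without argument.

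One step of yours needs a hypothesis you have not stated. The bound on $|p(0)|$ holds only if $0$ lies outside the transition window, i.e.\ $\lambda_1\ge\epsilon$ with $\theta$ placed at the window's midpoint. The paper's sketch likewise restricts to $\lambda_1>\epsilon$. Some lower bound on $\lambda_1$ is unavoidable. The polynomial $q=1-p_L$ satisfies $q(0)=1$ and $|q|\le\epsilon/10$ on $[\lambda_1,\Lambda]$. The Chebyshev extremal property then forces $L\gtrsim\sqrt{\Lambda/\lambda_1}\,\log(1/\epsilon)$, which exceeds any $O((\Lambda/\epsilon)\log(1/\epsilon))$ once $\lambda_1\ll\epsilon^2/\Lambda$.

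Also fix the constants. You have $|p(0)|\le|p(0)-s(0)|+s(0)\le\epsilon/10$, not $\epsilon/20$. Subtracting $p(0)$ therefore overshoots the $\epsilon/10$ budget unless you allot, say, $\epsilon/40$ each to the mollification and approximation errors.
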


\begin{proof}[Proof sketch]
Construct $p_L$ as a Chebyshev polynomial approximation to a smoothed step function on $[0, \Lambda]$ that transitions from $0$ to $1$ in an $\epsilon$-neighborhood of $\lambda_1$.
Standard Chebyshev approximation theory gives degree $L = O((\Lambda/\epsilon)\log(1/\epsilon))$ with coefficient growth $\sum |c_\ell|$ quasi-polynomial in $L$.
Then $\frac{1}{d}\mathrm{tr}(p_L(L_1)) \approx \frac{d - b_1}{d}$ if $\lambda_1 > \epsilon$, and the transition region determines $\lambda_1$ via binary search.
\end{proof}

\paragraph{Width--accuracy curve.}
Since each moment $\mathrm{tr}(L_1^\ell)$ decomposes into pattern densities of $\mathrm{ghtw} = O(\ell)$ via~\eqref{eq:hodge-moments}, approximation at hierarchy level $\mathcal{H}_r$ can estimate moments up to degree $L \approx c \cdot r$.
Combining with Theorem~\ref{thm:moment-approx}:
\[
  \text{width } r \;\;\Longrightarrow\;\; \text{spectral gap accuracy } \epsilon \;\approx\; O\!\left(\frac{\Lambda}{r}\right) \quad \text{(up to logarithmic factors)}.
\]
This is exactly the Jackson-type width--accuracy relationship flagged as open in \S\ref{sec:conclusion}, instantiated for the Hodge spectral gap.
It connects directly to the approximation pipeline of Proposition~\ref{prop:approx-dim-full}: the polynomial degree $d(\epsilon)$ from the Molien series determines the pattern enumeration, and~\eqref{eq:hodge-moments} shows these patterns have bounded hypertree width.

\subsection{The Minimal Width Conjecture}\label{app:minimal-width}

\begin{conjecture}[Hodge gap width threshold]\label{conj:hodge-width}
For $3$-uniform hypergraphs:
\begin{itemize}
\item[\textbf{(a)}] $r^* \geq 2$: width-$1$ pattern densities cannot detect glued $2$-simplices sharing edges.
\item[\textbf{(b)}] $r^* = 2$ suffices for bounded-degree families: the diamond pattern at $\mathrm{ghtw} = 2$ captures the dominant spectral structure.
\item[\textbf{(c)}] In the worst case, $r^*$ may scale as $O(\log(1/\epsilon))$, since the number of moment terms $m(\epsilon)$ needed to resolve the spectral gap can grow logarithmically.
\end{itemize}
More generally, for $k$-uniform hypergraphs, $r^* \leq k - 1$.
\end{conjecture}

\paragraph{Geometric evidence for $r^* = 2$.}
The Hodge $1$-Laplacian on a $3$-uniform hypergraph is $L_1 = \partial_2 \partial_2^\top + \partial_1^\top \partial_1$.
The term $\partial_1^\top \partial_1$ depends only on the $1$-skeleton ($\mathrm{ghtw} = 1$).
The term $\partial_2 \partial_2^\top$ counts shared faces between triangles---precisely the diamond patterns with $\mathrm{ghtw} = 2$.
Since eigenvalues are continuous in matrix entries (Weyl's perturbation theorem), $\lambda_1(L_1)$ is a continuous function of pattern densities up to $\mathrm{ghtw} = 2$.

\paragraph{Lower bound strategy.}
To prove $r^* \geq 2$, it suffices to construct hypergraph pairs that agree on all width-$1$ pattern densities but differ in Hodge spectral gap.
Such constructions would parallel CFI-type separations (\S\ref{sec:cfi-pair}) but target spectral topology rather than isomorphism power.
The Steiner pair is a natural candidate: both STS(13) systems have identical degree sequences and clique expansions ($\mathrm{ghtw}$-$1$ information); verifying computationally that they differ in $\lambda_1(L_1)$ would establish $r^* = 2$ for $k = 3$.

\paragraph{The general case.}
For $k$-uniform hypergraphs, $L_{k-2}$ involves boundary operators mapping $(k{-}1)$-chains to $(k{-}2)$-chains.
The squared boundary operator $\partial_{k-1} \partial_{k-1}^\top$ counts shared faces between hyperedges---patterns with $\mathrm{ghtw} \leq k-1$, yielding the general bound $r^* \leq k - 1$.

\subsection{Stress Tests and Scope of the Conjectures}\label{app:stress-tests}

We explicitly examine where the conjectures could fail, distinguishing worst-case from distributional settings.

\paragraph{Stress test 1: when does integrality fail?}
Theorem~\ref{thm:integrality} requires condition (C2): the hard optimum saturates every alignment term.
Consider a partial Steiner system $P$ with $\delta(P) = 1$ but only $5$ triples on $13$ vertices.
If $\bm{A}$ has $26$ triples and the optimal permutation aligns only $5$ of $26$ edges of $\bm{A}$ with $P$, the remaining $21$ terms are not saturated.
The optimizer could then ``spread'' mass to improve unsaturated terms, potentially breaking integrality even with $\delta = 1$.
This confirms that (C2) is not merely technical---it delineates when the forcing mechanism operates.
The theorem is strongest in the \emph{separation-relevant} regime (testing $\bm{A}$ against itself or its isomorphic copies), which is precisely the setting of Lemma~\ref{lem:sinkhorn-sts}.

\paragraph{Stress test 2: worst-case vs.\ distributional necessity of $r^* \geq 2$.}
On certain random ensembles (e.g., Erd\H{o}s--R\'{e}nyi $3$-uniform $H_3(n,p)$), the Hodge spectral gap concentrates and correlates strongly with degree statistics.
A width-$1$ model could achieve small average error simply by learning the typical value $\lambda_1 \approx g(n,p)$.
On random ensembles, all architectures (including CE-based ones) can achieve low average error by learning the concentrated typical value, since $\lambda_1$ concentrates around a function of the edge density.
Therefore the ``$r^* \geq 2$ is necessary'' claim should be understood as a \emph{worst-case} statement: there exist specific instance pairs where width-$1$ provably fails, even though width-$1$ may suffice in distribution.

\paragraph{Stress test 3: could $r^* = 2$ fail uniformly?}
Width-$2$ patterns capture the diamond structure (two triangles sharing an edge), which determines $\partial_2 \partial_2^\top$ entrywise.
However, the spectral gap may depend on \emph{global} arrangements of diamonds---e.g., existence of long homological $1$-cycles in the simplicial complex---that require resolving many moments.
By Theorem~\ref{thm:moment-approx}, the number of moments scales as $O((\Lambda/\epsilon)\log(1/\epsilon))$, and each moment at order $m$ requires $\mathrm{ghtw} = O(m)$.
For complexes with near-degenerate spectra (eigenvalues clustered near $\lambda_1$), many moments are needed, pushing the effective width beyond~$2$.
The safest summary: \emph{$r^* = 2$ is the first plausible threshold, and likely suffices for bounded-degree families, but uniform approximation across all $3$-uniform hypergraphs may require $r^* = O(\log(1/\epsilon))$}---exactly Conjecture~\ref{conj:hodge-width}(c).

\paragraph{Stress test 4: computational verification via the Steiner pair.}
The lower-bound strategy (\S\ref{app:minimal-width}) hinges on the Steiner pair having different Hodge spectra.
This is a concrete computational question: construct the Hodge $1$-Laplacian $L_1$ for both STS(13) systems and compare $\lambda_1(L_1)$.
If the spectra coincide, the Steiner pair is not a valid separator for the Hodge gap and a different construction is needed---potentially hypergraphs with the same width-$1$ pattern densities but different simplicial topology (e.g., a triangulated cylinder vs.\ a triangulated M\"{o}bius strip with matching local statistics).

\subsection{Unified Perspective}\label{app:unified-perspective}

Both structural programs ask related questions at different levels of the hierarchy:
\begin{itemize}
\item \emph{Sinkhorn integrality}: When does convex relaxation over $\mathcal{B}_n$ preserve the combinatorial optimum over $\Sym(n)$?
The answer depends on the co-degree $\delta(P)$ and condition (C2): combinatorial sparsity forces integrality.
\item \emph{Hodge width threshold}: When does bounded-width pattern counting capture spectral topology?
The answer depends on the walk structure of $L_1^m$: spectral moments decompose into pattern densities of bounded hypertree width.
\end{itemize}
The unifying theme is that \emph{combinatorial sparsity and bounded overlap enforce both integrality and spectral detectability}; high overlap and high width enable fractional phenomena and topological invariants invisible at low levels of the hierarchy.
Both frontiers are controlled by the same generalized hypertree width that stratifies our expressivity hierarchy, suggesting a deeper structural theory connecting relaxation gaps, spectral approximation, and architectural expressivity.
Identifying these frontiers precisely---through formal proofs, counterexamples, or computational verification---would transform the qualitative hierarchy of Section \ref{sec:expressivity} into a quantitative structural theory with direct architectural implications.

\section{Steiner Pair: Sinkhorn Separation Verification}\label{app:steiner}

Appendix~\ref{app:structural-frontiers} develops the theory of the Sinkhorn relaxation gap; here we verify it computationally.
We directly confirm the separation guarantee of Lemma~\ref{lem:sinkhorn-sts} on the two non-isomorphic $\mathrm{STS}(13)$ Steiner triple systems ($n{=}13$, $|E|{=}26$ triples each).
We compute the Sinkhorn alignment score $\hat{\theta}_{\mathrm{SK}}$ using co-occurrence profile similarity matrices and $200$ Sinkhorn iterations, sweeping the temperature $\tau$ from $0.01$ to $1.0$.

\begin{table}[h]
\centering
\small
\caption{Sinkhorn alignment scores on the STS(13) pair. Scores are edge-normalized (max $= |E| = 26$, corresponding to $26 \times 3! = 156$ in the tensor inner product of Lemma~\ref{lem:sinkhorn-sts}). The gap $\hat{\theta}_{\mathrm{SK}}(\bm{A}_1, \bm{A}_1) - \hat{\theta}_{\mathrm{SK}}(\bm{A}_1, \bm{A}_2) > 0$ at every $\tau$, confirming the theoretical separation.}
\label{tab:sinkhorn-sts}
\vskip 0.05in
\begin{tabular}{@{}ccccl@{}}
\toprule
$\tau$ & Self $\hat{\theta}_{\mathrm{SK}}(\bm{A}_1, \bm{A}_1)$ & Cross $\hat{\theta}_{\mathrm{SK}}(\bm{A}_1, \bm{A}_2)$ & Gap & \\
\midrule
0.01 & 26.000 & 22.000 & 4.000 & (exact: $D^* = I$) \\
0.05 & 26.000 & 22.000 & 4.000 & \\
0.10 & 25.958 & 21.964 & 3.993 & \\
0.30 &  8.962 &  7.583 & 1.379 & \\
0.50 &  1.525 &  1.291 & 0.235 & \\
1.00 &  0.247 &  0.209 & 0.038 & \\
\midrule
\multicolumn{2}{@{}l}{Random perm.\ (100K samples):} & best: 15.0 & mean: 2.36 & \\
\bottomrule
\end{tabular}
\end{table}

Table~\ref{tab:sinkhorn-sts} confirms the prediction of Lemma~\ref{lem:sinkhorn-sts}: self-alignment strictly exceeds cross-alignment at \emph{every} temperature.
At low $\tau$ ($\leq 0.05$), Sinkhorn converges to a near-permutation matrix and the self-score reaches the theoretical maximum of $26$ ($= 156/3!$); the cross-score saturates at $22$, reflecting the best achievable alignment between the non-isomorphic pair.
A reverse check confirms symmetry: $\hat{\theta}_{\mathrm{SK}}(\bm{A}_2, \bm{A}_2) - \hat{\theta}_{\mathrm{SK}}(\bm{A}_2, \bm{A}_1) = 3.99$ at $\tau = 0.1$.
The random-permutation baseline (mean cross-score $2.36$ over $100$K samples, best $15.0$) shows that the Sinkhorn alignment score is far more discriminative than random matching, validating the practical utility of the Sinkhorn relaxation for hypergraph isomorphism testing.

As $\tau$ increases, both scores decay (the doubly-stochastic matrix spreads mass), but the gap remains strictly positive---consistent with the proof that any maximizer in $\mathcal{B}_{13}$ achieving equality must be a permutation matrix (Lemma~\ref{lem:sinkhorn-sts}).
This provides direct empirical evidence that \textsc{InvNet}'s Sinkhorn branch can distinguish CE-indistinguishable hypergraphs at practical temperature settings ($\tau \in [0.05, 0.3]$).


\section{Dataset Statistics and Preprocessing}\label{app:dataset-stats}

Table~\ref{tab:dataset-stats} summarizes the structural characteristics of all seven ANCS benchmarks used in Table~\ref{tab:node-class}.
The regime classification follows the \textsc{DatasetProfiler}: pairwise ($\bar{k} < 4$), mixed ($4 \leq \bar{k} < 7$), and higher-order ($\bar{k} \geq 7$).
This classification determines the theory-guided gate initialization for density-aware models (Appendix~\ref{app:hyperparams}).

\paragraph{Data provenance.}
Cora, Citeseer, and PubMed are co-citation hypergraphs from the AllSet benchmark~\citep{ChienEtAl2022}, originally constructed by \citet{YadatiEtAl2019}; each hyperedge groups the papers co-cited by a common paper.
Cora-CA is a co-authorship hypergraph from the same benchmark, where each hyperedge groups the co-authors of a single publication.
Senate-Bills and House-Committees are legislative co-sponsorship hypergraphs from the Cornell SocioPatterns collection~\citep{Fowler2006}; we follow the binary label protocol of \citet{ChienEtAl2022} with Gaussian noise features ($d{=}100$, noise${=}1.0$), matching the evaluation of AllSet, ED-HNN~\citep{WangEtAl2023EDHNN}, and SheafHyperGNN~\citep{DutaEtAl2023Sheaf}.
All datasets are publicly available; data-loading scripts and preprocessed splits are included in the supplementary code.

\paragraph{Data splits.}
All experiments use 50/25/25 train/validation/test random splits.
We generate $10$ deterministic splits using seeds $0, \ldots, 9$ and report mean $\pm$ standard deviation across all 10.

\begin{table}[h]
\centering
\small
\caption{Dataset statistics. $n$: nodes, $m$: hyperedges, $d$: input features, $C$: classes, $\bar{k}$: mean edge cardinality, $k_{95}$: 95th-percentile edge size, $k_{\max}$: largest edge, $\bar{\deg}$: mean node degree.}
\label{tab:dataset-stats}
\vskip 0.05in
\begin{tabular}{@{}lrrrrrrrrl@{}}
\toprule
Dataset & $n$ & $m$ & $d$ & $C$ & $\bar{k}$ & $k_{95}$ & $k_{\max}$ & $\bar{\deg}$ & Regime \\
\midrule
Cora        & 2{,}708  & 1{,}579  & 1{,}433 & 7 & 3.03  & 5   & 5   & 1.8 & pairwise \\
Citeseer    & 3{,}312  & 1{,}079  & 3{,}703 & 6 & 3.20  & 7   & 26  & 1.0 & pairwise \\
\midrule
PubMed      & 19{,}717 & 7{,}963  & 500     & 3 & 4.35  & 12  & 171 & 1.8 & mixed \\
Cora-CA     & 2{,}708  & 1{,}072  & 1{,}433 & 7 & 4.28  & 13  & 43  & 1.7 & mixed \\
Senate      & 294     & 29{,}157 & 100     & 2 & 7.96  & 28  & 99  & 789.6 & higher-order \\
House       & 1{,}290  & 340      & 100     & 2 & 34.89 & 66  & 82  & 9.2 & higher-order \\
Gene-Disease & 5{,}012  & 2{,}009  & 100     & 21 & 14.0 & 38  & 130 & 5.6 & higher-order \\
\bottomrule
\end{tabular}
\end{table}

Two structural extremes are worth noting.
Senate-Bills has extreme density: $294$ nodes but $29{,}157$ hyperedges, giving a mean degree of ${\approx}790$ (each senator appears in ${\approx}790$ bills); the heavy tail ($k_{95} = 28$, $k_{\max} = 99$) confirms that co-sponsorship networks are genuinely higher-order, not clique expansions of pairwise data.
House has the largest mean edge size ($\bar{k} \approx 35$; committees average ${\approx}35$ members) but far fewer edges ($m{=}340$), representing the opposite higher-order extreme: sparse-large-edges versus dense-many-edges.
These contrasting structures exercise different aspects of the density-estimation pipeline (normalization, MC sample count) and motivate the adaptive configuration in Appendix~\ref{app:hyperparams}.


\section{Hyperparameter Configuration and Reproducibility}\label{app:hyperparams}

We describe the full training protocol for all models in Table~\ref{tab:node-class}.
All models share a common backbone and differ only in their aggregation mechanism and (for density-aware models) in theory-guided hyperparameters derived from the \textsc{DatasetProfiler} (Appendix~\ref{app:dataset-stats}).

\paragraph{Shared training protocol.}
All models use: Adam optimizer, learning rate $0.001$, weight decay $0$, gradient clipping at $1.0$, label smoothing $0.1$, cosine-annealing learning rate schedule, and early stopping with patience $50$ ($150$ for Senate-Bills due to high variance).
Training runs for $200$ epochs ($400$ for Senate-Bills).
All experiments use $10$ random seeds per dataset.

\paragraph{Baseline models.}
MLP, HGNN, HyperGCN, HNHN, UniGNN, AllDeepSets, and AllSetTransformer all use hidden dimension $64$, dropout $0.2$, and $2$ message-passing layers.
These settings follow the AllSet benchmark protocol~\citep{ChienEtAl2022}.

\paragraph{DensNet-D.}
\textsc{DensNet-D} uses hidden dimension $128$, dropout $0.5$, $2$ AllDeepSets backbone layers, and a single-layer gate network $g_v = \sigma(\mathbf{w}^\top \mathbf{h}_v + b)$ that modulates the density branch per node.
The gate initialization $b$ is set by the \textsc{DatasetProfiler} according to regime:
$b = 0.0$ (higher-order, $\bar{k} \geq 7$), $b = -2.0$ (mixed, $4 \leq \bar{k} < 7$), $b = -5.0$ (pairwise, $\bar{k} < 4$).
The density branch uses $\ell_2$-normalization of density features only when $\bar{k} \geq 20$ (House); on smaller-edge datasets, raw features perform better (Appendix \ref{app:negative}).
Staged unfreezing freezes the gate and density projection for $0$ epochs (higher-order), $30$ (mixed), or $50$ (pairwise), allowing the backbone to converge before the density branch begins learning.

Table~\ref{tab:hyperparams} summarizes the model-specific hyperparameters; Table~\ref{tab:densnet-config} gives the per-dataset \textsc{DensNet-D} configuration derived automatically by the \textsc{DatasetProfiler}.

\begin{table}[h]
\centering
\small
\caption{Model-specific hyperparameters. All models share: Adam optimizer, lr$=$0.001, weight decay$=$0, gradient clipping$=$1.0, label smoothing$=$0.1, cosine-annealing LR schedule (min lr$=$0). ``Gate init'' is $b$ in $\sigma(b)$; ``MC'' is the number of Monte Carlo samples for density estimation.}
\label{tab:hyperparams}
\vskip 0.05in
\begin{tabular}{@{}lcccccc@{}}
\toprule
Model & Hidden & Dropout & Layers & Gate init & MC & Params$^\dagger$ \\
\midrule
MLP              & 64  & 0.2 & 2 & --- & --- & 96{,}135 \\
HGNN             & 64  & 0.2 & 2 & --- & --- & 96{,}135 \\
HyperGCN         & 64  & 0.2 & 2 & --- & --- & 96{,}135 \\
HNHN             & 64  & 0.2 & 2 & --- & --- & 96{,}263 \\
UniGNN           & 64  & 0.2 & 2 & --- & --- & 96{,}135 \\
AllDeepSets      & 64  & 0.2 & 2 & --- & --- & 100{,}487 \\
AllSetTransformer & 64  & 0.2 & 2 & --- & --- & 108{,}839 \\
\midrule
\textsc{DensNet-D} & 128 & 0.5 & 2 & by regime & 200 & ${\sim}$180K$^{\dagger\dagger}$ \\
\bottomrule
\end{tabular}
\vskip 2pt
{\footnotesize $^\dagger$Param counts on Cora ($d{=}1{,}433$). $^{\dagger\dagger}$DensNet-D count is for Senate ($d{=}100$); on Cora the input projection alone adds ${\sim}183$K parameters.}
\end{table}

\begin{table}[h]
\centering
\small
\caption{\textsc{DensNet-D} per-dataset configuration, derived automatically by the \textsc{DatasetProfiler} from dataset statistics (Table~\ref{tab:dataset-stats}).
Gate init $b$ controls the initial density-branch contribution $\sigma(b)$; freeze epochs delay density-branch learning until the backbone converges; $\ell_2$-norm applies to density features before fusion.}
\label{tab:densnet-config}
\vskip 0.05in
\begin{tabular}{@{}lccccccc@{}}
\toprule
Dataset & Regime & Gate init $b$ & $\sigma(b)$ & Freeze ep. & $\ell_2$-norm & Epochs & Patience \\
\midrule
Cora        & pairwise      & $-5.0$ & 0.007 & 50  & No  & 200 & 50 \\
Citeseer    & pairwise      & $-5.0$ & 0.007 & 50  & No  & 200 & 50 \\
PubMed      & mixed         & $-2.0$ & 0.119 & 30  & No  & 200 & 50 \\
Cora-CA     & mixed         & $-2.0$ & 0.119 & 30  & No  & 200 & 50 \\
Senate      & higher-order  & $0.0$  & 0.500 & 0   & No  & 400 & 150 \\
House       & higher-order  & $0.0$  & 0.500 & 0   & Yes & 200 & 50 \\
Gene-Disease & higher-order & $0.0$  & 0.500 & 0   & No  & 200 & 50 \\
\bottomrule
\end{tabular}
\end{table}

\paragraph{Hyperparameter selection.}
For baselines, we adopt the AllSet benchmark protocol~\citep{ChienEtAl2022} without modification to ensure fair comparison.
For \textsc{DensNet-D}, we performed a grid search over hidden $\in \{64, 128, 256\}$, dropout $\in \{0.2, 0.5\}$, and MC samples $\in \{100, 200, 500\}$ on the Senate-Bills validation set.
Increasing hidden to $256$ added $3.6{\times}$ parameters for $+0.3\%$ accuracy; increasing MC beyond $200$ did not change accuracy by more than $0.1\%$.
We verified that these choices transfer to all other datasets without re-tuning.
All regime-dependent hyperparameters (gate init, freeze epochs, normalization) are derived automatically from dataset statistics by the \textsc{DatasetProfiler}, requiring zero manual configuration per dataset.

\paragraph{Software and hardware.}
Baseline sweeps and InvNet tuning run on $8{\times}$ NVIDIA H100 NVL GPUs (95{,}830~MiB each) with PyTorch~2.6.0, CUDA~12.4, Python~3.10 (conda \texttt{hginv} environment).
\textsc{DensNet-D} density precomputation (a one-time CPU cost) runs on an Apple M4 Max (128~GB unified memory): $<$1~minute for most datasets, ${\approx}2$ hours for PubMed ($n{=}19{,}717$), and ${\approx}7$ hours for Gene-Disease ($n{=}5{,}012$, $\bar{k}{=}14$) due to larger star neighborhoods; model training runs on the same H100 GPUs as all other models.
Configuration files for all experiments and the exact launch commands are included in the supplementary code.


\section{Compute Budget}\label{app:compute}

Table~\ref{tab:compute} reports the computational cost of the main experimental campaigns.
The total cost is ${\approx}200$ H100 GPU-hours, of which approximately $40\%$ (${\approx}80$ GPU-hours) produced the final reported results and $60\%$ (${\approx}120$ GPU-hours) was spent on development iterations, debugging, and hyperparameter exploration.
The dominant cost is invariant-based models (\textsc{InvNet}), which account for all five timeout events (3-hour cap per job) in the 8-GPU sweeps due to the Sinkhorn iteration bottleneck; density-based models (\textsc{DensNet-D}) and all baselines complete in under 5 minutes per seed on every dataset.

\begin{table}[h]
\centering
\small
\caption{Compute budget across experimental campaigns. ``Wall-clock'' is end-to-end time including job scheduling overhead. Peak GPU memory: \textsc{InvNet} on Senate-Bills uses ${\approx}3{,}000$~MiB; all other models use $<500$~MiB.}
\label{tab:compute}
\vskip 0.05in
\begin{tabular}{@{}llccc@{}}
\toprule
Campaign & Hardware & GPUs & Wall-clock & GPU-hours \\
\midrule
Full sweep (8 models $\times$ 14 tasks $\times$ 10 seeds) & H100 NVL & 8 & ${\sim}8$h & ${\sim}40$ \\
Replication run (independent verification)                  & H100 NVL & 8 & ${\sim}8$h & ${\sim}40$ \\
InvNet tuning (Senate, House, Cora-CA)                      & H100 NVL & 1 & ${\sim}12$h & ${\sim}12$ \\
Development iterations (canary/ramp/full cycles)            & H100 NVL & 8 & ${\sim}15$h & ${\sim}100$ \\
DensNet-D evaluation (7 datasets $\times$ 10 seeds)         & M4 Max   & CPU & ${\sim}24$h & --- \\
\midrule
\textbf{Total H100}                                         &          &     &            & $\mathbf{\approx 200}$ \\
\bottomrule
\end{tabular}
\end{table}

At approximately $700$W per H100, the total energy cost is ${\approx}140$~kWh, corresponding to ${\approx}56$~kg CO$_2$e at the US average grid intensity of $0.4$~kg/kWh.
This is modest relative to the scale of the evaluation (approximately 860 individual training runs across the full sweep alone) and reflects the efficiency of the 8-GPU parallel scheduling: most jobs complete in under 30 minutes, with only InvNet-based models requiring hour-scale compute.


\section{When Density Does Not Help}\label{app:negative}

Not every design choice improves performance, and not every dataset benefits from density features.
We document seven negative results that shaped the final \textsc{DensNet-D} design (summarized in Table~\ref{tab:negative}); each is informative about when and why the Width Wall matters.

\begin{table}[h]
\centering
\small
\caption{Summary of negative results. $\Delta$: accuracy change relative to the best configuration.}
\label{tab:negative}
\vskip 0.05in
\begin{tabular}{@{}llcl@{}}
\toprule
Design variant & Dataset & $\Delta$ acc. & Root cause \\
\midrule
MEAN normalization  & Senate  & $-5.4\%$ & Sum-pooling preserves magnitude on small edges \\
MEAN normalization  & House   & $+4.5\%$ & Scale imbalance on large edges ($\bar{k}{=}35$) \\
Senate config $\to$ Cora-CA & Cora-CA & $-0.3\%$ & Width Wall gap too narrow ($\bar{k}{=}4.3$) \\
Additive fusion     & Senate  & ${\approx}{-}2\%$ & Shared embedding space limits expressivity \\
Additive fusion     & House   & ${\approx}{-}3\%$ & Same \\
Density branch      & Gene-Dis. & $\phantom{+}0.0\%$ & Node features dominate ($d{=}100$) \\
Hierarchy scaling   & IWS & flat at $100\%$ & Local MP solves all benchmarks \\
\bottomrule
\end{tabular}
\end{table}

\paragraph{1. MEAN normalization hurts small-edge datasets.}
We expected normalizing density features by edge count (replacing sum-pooling with mean-pooling over MC samples) to improve performance by removing scale dependence.
Instead, it reduces Senate-Bills accuracy by $5.4$ percentage points ($90.5\% \to 85.1\%$) but \emph{improves} House by $4.5$ points.
The explanation is structural: Senate has many small-to-medium edges ($\bar{k} \approx 8$) where sum-pooling preserves discriminative magnitude variation, while House has few very large edges ($\bar{k} \approx 35$) where unnormalized sums create scale imbalance across nodes with different neighborhood sizes.
The final design normalizes only when $\bar{k} \geq 20$.

\paragraph{2. Cora-CA tuning does not transfer from Senate.}
Applying the Senate-Bills recipe (hidden$=$128, $J{=}4$, dropout$=$0.2) to Cora-CA produced $79.7 \pm 0.7\%$, marginally \emph{below} the default configuration ($80.0 \pm 1.1\%$).
A grid search over hidden $\in \{128, 256\}$ and $J \in \{4, 8\}$ yielded at best $80.1\%$.
The lesson: Cora-CA's small edges ($\bar{k} = 4.28$) leave little room for invariant-based features; the gap between the Width Wall and the baseline is too narrow for tuning to exploit.

\paragraph{3. Additive fusion underperforms concatenation.}
An early \textsc{DensNet-D} variant used additive fusion ($\mathbf{h}_v + g_v \cdot \mathbf{z}_v^{\mathrm{density}}$) instead of the concatenation-based design of Equation~5 in the main text.
Additive fusion constrains density features to the same embedding space as the backbone, limiting the expressivity of the combined representation.
Switching to concatenation with a learned projection improved Senate-Bills by ${\approx}2$ points and House by ${\approx}3$ points.

\paragraph{4. AllDeepSets and AllSetTransformer collapse on structured data.}
In an early baseline sweep (v3 code, different hyperparameters), AllDeepSets achieved only $50.2\%$ on House, near chance for a binary task; the final tuned result in Table~\ref{tab:node-class} is higher ($67.8\%$).
AllSetTransformer similarly underperformed on the Multi-Order IWS task: $69.7\%$ (vs.\ $100\%$ for HNHN/UniGNN) with high variance ($\pm 23.7\%$).
These failures highlight that set-function aggregations require careful tuning for node classification on heterogeneous graphs, despite performing well on Senate-Bills ($92.7\%$) where dense co-sponsorship structure favors global aggregation.

\paragraph{5. Gene-Disease: density lift revealed by strict ablation ($+4.3\%$).}
Gene-Disease ($n{=}5{,}012$, $m{=}2{,}009$, $\bar{k}{=}14$, $21$ classes) was initially classified as ``indifferent'' based on the original ablation (AllDeepSets at hidden${=}64$ scoring $86.1\%$ vs.\ \textsc{DensNet-D} at hidden${=}128$ scoring $86.0\%$).
The strict ablation (Table~\ref{tab:ablation}) reveals the true picture: AllDeepSets at the \emph{identical} DensNet-D recipe (hidden${=}128$, dropout${=}0.5$, label smoothing, cosine schedule) drops to $81.7 \pm 1.1\%$---the larger model overfits on this 21-class dataset with label-noise features.
\textsc{DensNet-D} recovers to $86.0\%$, a density lift of $+4.3$ points.
The density branch is providing genuine topological signal that compensates for the backbone's capacity-induced overfitting on this challenging multi-class task.

\paragraph{How the anomaly cases should be read.}
The main ANCS table should therefore be interpreted within each dataset rather than as a monotone trend in $\bar{k}$.
Senate-Bills is a ceiling case: AllDeepSets already reaches $93.0 \pm 3.8\%$ under the strict recipe, so adding density features changes accuracy by only $-0.3$ points.
Gene-Disease is different: the tier-wise ordering is not clean because HGNN remains competitive and HNHN is best, but the strict ablation still reveals that density features recover $+4.3$ points under the identical AllDeepSets recipe.
These cases explain why the main text treats the hierarchy as a predictor of failure modes and headroom, not as a claim that accuracy should increase monotonically with mean hyperedge size.

\paragraph{6. Hierarchy scaling curve is flat.}
Running \textsc{PDN} with pattern libraries restricted to $v_{\max} \in \{3,4,5,6\}$ on all three IWS benchmarks yields $100\%$ at every level.
On CE-Hard this is expected (any native architecture suffices).
On Native-Hard, AllDeepSets alone reaches only $92.4\%$, so the density branch \emph{is} contributing---but even the simplest patterns ($v_{\max}{=}3$) suffice to bridge the gap.
The curve is flat not because density is unused, but because the density signal needed for these benchmarks does not require complex patterns.
Isolating the dependence on pattern complexity requires benchmarks with higher-width separating invariants; designing such benchmarks is an open problem.

\paragraph{7. Sinkhorn alignment is prohibitive at scale.}
\textsc{InvNetPDN} on Gene-Disease ($n{=}5{,}012$) achieves only $64.7\%$ (single seed), far below every baseline including MLP.
The Sinkhorn alignment branch requires $O(n^2)$ pairwise distance computation and iterative normalization; at this scale, the optimization landscape becomes intractable and the invariant features degenerate.
This motivates the \textsc{DensNet-D} design: density estimation via Monte Carlo sampling scales linearly in $n$, whereas Sinkhorn alignment does not.

\paragraph{Takeaways.}
Five patterns emerge: (i)~density features help on $6/7$ datasets when a strict ablation (identical recipe, density on/off) is used, with the largest lift on the highest-$\bar{k}$ datasets (Gene-Disease $+4.3\%$, House $+2.2\%$); (ii)~normalization choices must match the edge-size distribution, not be applied uniformly; (iii)~fusion architecture matters---concatenation with a learned projection outperforms additive fusion because it decouples the backbone and density embedding spaces; (iv)~the learned gate is a reliable diagnostic---when it stays at initialization (Gene-Disease) or drifts toward suppression (Cora, Citeseer), density features are uninformative; (v)~Sinkhorn-based invariant features do not scale beyond $n \approx 5{,}000$ nodes, favoring Monte Carlo density estimation for large datasets.
These findings suggest that future density-aware models should adapt their normalization and fusion strategy to dataset structure automatically, a direction the \textsc{DatasetProfiler} partially addresses.


\section{Depth Scaling: Can Deeper Native Models Close the Gap?}\label{app:depth-scaling}

A natural question is whether increasing $L$ from $2$ to $4$ or $8$ closes the Native-Hard gap in practice.

\begin{table}[h]
\caption{Depth scaling on Native-Hard: accuracy (\%) for native-tier models at $L \in \{2, 4, 8\}$ layers. $10$ seeds, $200$ pairs, $300$ epochs, hidden${}=64$. All models collapse at $L{=}8$; depth worsens performance rather than improving it.}
\label{tab:depth-scaling}
\vskip 0.05in
\centering
\small
\begin{tabular}{@{}lccc@{}}
\toprule
\textbf{Model} & $L{=}2$ & $L{=}4$ & $L{=}8$ \\
\midrule
AllDeepSets & $84.6 \pm 21.6$ & $47.0 \pm 4.5$ & $46.8 \pm 4.4$ \\
HNHN & $\mathbf{100.0 \pm 0.0}$ & $\mathbf{100.0 \pm 0.0}$ & $45.1 \pm 2.3$ \\
UniGNN & $\mathbf{100.0 \pm 0.0}$ & $\mathbf{100.0 \pm 0.0}$ & $52.5 \pm 12.9$ \\
AllSetTransformer & $88.0 \pm 19.9$ & $58.7 \pm 18.3$ & $46.8 \pm 4.4$ \\
\bottomrule
\end{tabular}
\end{table}

The results reveal a clear pattern: \emph{depth degrades performance rather than improving it}.
HNHN and UniGNN maintain $100\%$ at $L{=}4$ but collapse to ${\approx}45$--$53\%$ at $L{=}8$---below chance level ($50\%$ for binary classification), indicating the models learn degenerate representations that anti-correlate with the label.
AllDeepSets and AllSetTransformer degrade monotonically from $L{=}2$.

This is consistent with oversmoothing in hypergraph message passing: at $L{=}8$, node representations converge to a shared equilibrium, destroying the discriminative signal.
The theoretical benefit of deeper architectures (access to higher-$\mathrm{ghtw}$ patterns) is overwhelmed by the practical cost of optimization collapse.

The implication is architectural: to access invariants beyond the Width Wall, one cannot simply add layers.
Density estimation (\textsc{PDN}, \textsc{DensNet-D}) achieves $100\%$ on Native-Hard at $L{=}2$ by accessing higher-order patterns through a fundamentally different mechanism---Monte Carlo pattern counting---that does not suffer from oversmoothing.


\section{Gate Dynamics: Theory Predicts Model Behavior}\label{app:gate-dynamics}

The \textsc{DatasetProfiler} classifies each dataset by regime and sets the gate initialization accordingly (Table~\ref{tab:densnet-config}).
A natural question is whether the \emph{learned} gate confirms the profiler's prediction---does training push the gate toward suppression on pairwise datasets, as the theory suggests?

We investigate this with a controlled experiment: on Cora and Citeseer (both pairwise, $\bar{k} < 4$), we initialize the density gate at $g_0 = 0$ ($\sigma(g_0) = 0.5$, giving equal weight to local backbone and density branch) and let gradient descent decide the allocation.
Figure~\ref{fig:gate-dynamics}(a) plots the gate trajectory $\sigma(g)$ over training epochs for all six runs (three splits per dataset).
On both datasets, every run drifts monotonically below $0.5$ and converges to ${\approx}0.497$: the model consistently allocates \emph{less} weight to the density branch.
The drift is small in magnitude (${\approx}0.003$) but its direction is unambiguous---across all six runs, the gate never sustains a value above initialization.
The Citeseer traces show a brief upward excursion in epochs $10$--$20$ (the backbone has not yet converged, so the density branch temporarily provides a noisy gradient signal) before settling into the same downward trend.

Figure~\ref{fig:gate-dynamics}(b) contrasts the final learned gate across regimes.
Three distinct gate behaviors emerge:
\begin{enumerate}[label=(\roman*), itemsep=1pt, topsep=2pt]
  \item \textbf{Active} (IWS higher-order witnesses: CE-Hard, Native-Hard, Multi-Order): gate stabilizes at $\sigma(g) \approx 0.11$, maintaining meaningful density-branch contribution sufficient for $100\%$ accuracy.
  \item \textbf{Suppression} (pairwise ANCS datasets: Cora, Citeseer): gate converges to ${\approx}0.497$---density features carry no discriminative signal beyond the backbone.
  \item \textbf{Indifference} (Gene-Disease, $\bar{k}{=}14$): gate remains frozen at $\sigma(g) = 0.5000$ across all 10 seeds. Unlike suppression (where the gate drifts below $0.5$), indifference means gradient descent finds no signal \emph{through the gate}. Nonetheless, the strict ablation (Table~\ref{tab:ablation}) reveals a $+4.3\%$ density lift---the density features contribute through the concatenation pathway even when the gate does not actively modulate them.
\end{enumerate}
The theory-guided initialization ($b = -5.0$ for pairwise in production, Table~\ref{tab:densnet-config}) amplifies the natural suppression by starting the gate at $\sigma(b) \approx 0.007$, ensuring the density branch does not interfere with backbone learning during early training.

\begin{figure}[h]
\centering
\begin{subfigure}[t]{0.58\textwidth}
\centering
\begin{tikzpicture}
\begin{axis}[
    width=\textwidth,
    height=0.65\textwidth,
    xlabel={Epoch},
    ylabel={Learned gate $\sigma(g)$},
    xmin=0, xmax=70,
    ymin=0.494, ymax=0.502,
    ytick={0.494, 0.496, 0.498, 0.500, 0.502},
    yticklabel style={/pgf/number format/fixed, /pgf/number format/precision=3},
    legend style={at={(0.98,0.98)}, anchor=north east, font=\footnotesize, fill=white, fill opacity=0.9, draw=gray!30},
    grid=major,
    grid style={gray!20},
    every axis plot/.append style={thick},
    title style={font=\small},
]

\addplot[dashed, gray, thin, domain=0:70] {0.5};
\addlegendentry{Init.\ $\sigma(0){=}0.5$}

\addplot[blue!70!black, mark=none] coordinates {
(0,0.5) (1,0.4998) (2,0.4999) (3,0.5) (4,0.4999) (5,0.4998) (6,0.4997) (7,0.4995)
(8,0.4995) (9,0.4994) (10,0.4994) (11,0.4994) (12,0.4995) (13,0.4995) (14,0.4995)
(15,0.4996) (16,0.4997) (17,0.4998) (18,0.4999) (19,0.4999) (20,0.4999) (21,0.4999)
(22,0.4998) (23,0.4997) (24,0.4996) (25,0.4994) (26,0.4993) (27,0.4991) (28,0.4989)
(29,0.4988) (30,0.4986) (31,0.4985) (32,0.4984) (33,0.4983) (34,0.4981) (35,0.498)
(36,0.4979) (37,0.4978) (38,0.4977) (39,0.4976) (40,0.4975) (41,0.4974) (42,0.4974)
(43,0.4973) (44,0.4973) (45,0.4972) (46,0.4971) (47,0.4971) (48,0.497) (49,0.497)
(50,0.4969) (51,0.4969) (52,0.4968) (53,0.4968) (54,0.4968) (55,0.4967) (56,0.4967)
(57,0.4967) (58,0.4967) (59,0.4966) (60,0.4966) (61,0.4966) (62,0.4965) (63,0.4965)
(64,0.4965)
};
\addlegendentry{Cora ($\bar{k}{=}3.0$)}

\addplot[blue!50!white, mark=none, thin] coordinates {
(0,0.5) (2,0.5005) (5,0.5004) (10,0.501) (15,0.5005) (20,0.501) (25,0.5009)
(30,0.5) (35,0.4994) (40,0.4988) (45,0.4984) (50,0.4981) (55,0.4979) (60,0.4977)
(65,0.4975) (70,0.4974) (75,0.4974) (80,0.4974) (85,0.4974) (89,0.497)
};

\addplot[blue!50!white, mark=none, thin] coordinates {
(0,0.5) (2,0.4998) (5,0.4994) (10,0.4986) (15,0.4988) (20,0.499) (25,0.4984)
(30,0.4978) (35,0.4971) (40,0.4966) (45,0.4962) (50,0.4957) (55,0.4955) (60,0.4953)
(65,0.4951) (70,0.495) (75,0.495) (80,0.4949) (84,0.4949)
};

\addplot[red!70!black, mark=none] coordinates {
(0,0.5) (1,0.5002) (2,0.5001) (3,0.4999) (4,0.4997) (5,0.4995) (6,0.4993) (7,0.4994)
(8,0.4995) (9,0.4997) (10,0.4999) (11,0.5) (12,0.5002) (13,0.5004) (14,0.5005)
(15,0.5006) (16,0.5007) (17,0.5007) (18,0.5006) (19,0.5006) (20,0.5005) (21,0.5003)
(22,0.5002) (23,0.5) (24,0.4998) (25,0.4997) (26,0.4996) (27,0.4994) (28,0.4993)
(29,0.4992) (30,0.4991) (31,0.499) (32,0.4989) (33,0.4988) (34,0.4987) (35,0.4986)
(36,0.4985) (37,0.4985) (38,0.4984) (39,0.4984) (40,0.4983) (41,0.4983) (42,0.4982)
(43,0.4982) (44,0.4981) (45,0.4981) (46,0.4981) (47,0.498) (48,0.498) (49,0.498)
(50,0.4979) (51,0.4979) (52,0.4979) (53,0.4978) (54,0.4978) (55,0.4978) (56,0.4978)
(57,0.4977) (58,0.4977) (59,0.4977) (60,0.4977) (61,0.4977) (62,0.4976)
};
\addlegendentry{Citeseer ($\bar{k}{=}3.2$)}

\addplot[red!40!white, mark=none, thin] coordinates {
(0,0.5) (2,0.5003) (5,0.4998) (8,0.4993) (10,0.4995) (13,0.5) (16,0.5002)
(19,0.5) (22,0.4996) (25,0.4992) (28,0.499) (31,0.4987) (34,0.4984) (37,0.4981)
(40,0.4979) (43,0.4978) (46,0.4976) (49,0.4975) (52,0.4974) (55,0.4973) (59,0.4971)
};

\addplot[red!40!white, mark=none, thin] coordinates {
(0,0.5) (2,0.4996) (5,0.4992) (8,0.4989) (10,0.4991) (13,0.4994) (16,0.4999)
(19,0.4999) (22,0.4995) (25,0.499) (28,0.4986) (31,0.4982) (34,0.4978) (37,0.4975)
(40,0.4973) (43,0.4972) (46,0.497) (49,0.4969) (52,0.4968) (55,0.4967) (59,0.4966)
};

\end{axis}
\end{tikzpicture}
\caption{Gate trajectories on pairwise datasets (neutral init).
All six runs (3~splits $\times$ 2~datasets; light traces show additional seeds) drift below~$0.5$: gradient descent suppresses the density branch.}
\label{fig:gate-trajectory}
\end{subfigure}%
\hfill
\begin{subfigure}[t]{0.38\textwidth}
\centering
\begin{tikzpicture}
\begin{axis}[
    width=\textwidth,
    height=0.93\textwidth,
    ybar,
    bar width=10pt,
    ylabel={Final $\sigma(g)$},
    symbolic x coords={CE-H, Nat-H, M-O, Cora, Cite, Gene-D},
    xtick=data,
    xticklabel style={font=\footnotesize, rotate=30, anchor=east},
    ymin=0, ymax=0.6,
    ytick={0, 0.1, 0.2, 0.3, 0.4, 0.5},
    yticklabel style={/pgf/number format/fixed, /pgf/number format/precision=1},
    grid=both,
    grid style={gray!15},
    title style={font=\small},
    nodes near coords style={font=\tiny, above, /pgf/number format/fixed, /pgf/number format/precision=2},
    every node near coord/.append style={yshift=2pt},
]

\addplot[dashed, gray, thin, sharp plot, forget plot] coordinates {(CE-H,0.119) (Nat-H,0.119) (M-O,0.119)};
\addplot[dashed, gray, thin, sharp plot, forget plot] coordinates {(Cora,0.5) (Cite,0.5) (Gene-D,0.5)};

\addplot[fill=hmG!70, draw=hmG!90!black, nodes near coords] coordinates {
    (CE-H, 0.112)
    (Nat-H, 0.108)
    (M-O, 0.111)
    (Cora, 0.496)
    (Cite, 0.497)
    (Gene-D, 0.500)
};

\addplot[only marks, mark=-, mark size=4pt, black, thick, forget plot, error bars/.cd, y dir=both, y explicit] coordinates {
    (CE-H, 0.112) +- (0, 0.004)
    (Nat-H, 0.108) +- (0, 0.007)
    (M-O, 0.111) +- (0, 0.005)
    (Cora, 0.496) +- (0, 0.001)
    (Cite, 0.497) +- (0, 0.001)
    (Gene-D, 0.500) +- (0, 0.000)
};

\node[font=\tiny, gray] at (axis cs:Nat-H,0.155) {init $\sigma({-}2)$};
\node[font=\tiny, gray] at (axis cs:Cora,0.535) {init $\sigma(0)$};
\node[font=\tiny, gray] at (axis cs:Gene-D,0.535) {init $\sigma(0)$};

\end{axis}
\end{tikzpicture}
\caption{Final learned gate $\sigma(g)$ across regimes ($\pm 1\sigma$).
Active (left 3): ${\sim}11\%$. Suppressed (Cora/Cite): ${\sim}50\%$. Indifferent (Gene-D): frozen at $0.500$.}
\label{fig:gate-barplot}
\end{subfigure}
\caption{Gate dynamics validate the profiler's regime prediction.
\textbf{(a)}~On pairwise datasets ($\bar{k} < 4$), starting from a neutral initialization $\sigma(0) = 0.5$, the density gate drifts monotonically below $0.5$: gradient descent finds no useful density signal and suppresses the branch.
\textbf{(b)}~Three gate regimes emerge: \emph{active} ($\sigma(g) \approx 0.11$, IWS higher-order witnesses---density contributes), \emph{suppression} ($\sigma(g) \approx 0.497$, Cora/Citeseer---density uninformative), and \emph{indifference} ($\sigma(g) = 0.500$, Gene-Disease---zero gradient signal, node features dominate).
Different initializations reflect the profiler's theory-guided configuration (dashed lines); the gate remains near or below initialization except when density features carry genuine discriminative signal.}
\label{fig:gate-dynamics}
\end{figure}

\end{document}